\documentclass[10pt]{article}
\usepackage[utf8]{inputenc}
\usepackage[left=1.25in,top=1in,right=1.25in,bottom=1in]{geometry}
\usepackage{natbib}
\setlength{\parskip}{0.4em}

\usepackage{amsmath,amsfonts,bm,amsthm}
\usepackage{xcolor}

\def\eqref#1{equation~(\ref{#1})}
\def\Eqref#1{Equation~(\ref{#1})}

\def\1{\bm{1}}

\DeclareMathAlphabet{\mathsfit}{\encodingdefault}{\sfdefault}{m}{sl}
\SetMathAlphabet{\mathsfit}{bold}{\encodingdefault}{\sfdefault}{bx}{n}

\newcommand{\R}{\mathbb{R}}

\newcommand{\Var}{\mathrm{Var}}

\newcommand{\Cov}{\mathrm{Cov}}

\DeclareMathOperator*{\argmin}{arg\,min}

\DeclareMathOperator{\Tr}{Tr}

\newcommand{\abs}[1]{| #1|}

\newcommand{\mv}{\mathcal{V}}

\newtheorem{thm}{Theorem}
\newtheorem{lemma}{Lemma}
\newtheorem{corollary}{Corollary}
\newtheorem{prop}{Proposition}
\newtheorem{defn}{Definition}

\newtheorem{remark}{Remark}

\newtheorem{ass}{Assumption}

\numberwithin{thm}{section}
\numberwithin{remark}{section}
\numberwithin{ex}{section}
\numberwithin{ass}{section}
\numberwithin{defn}{section}
\numberwithin{lemma}{section}
\numberwithin{corollary}{section}
\numberwithin{prop}{section}

\newcommand{\bx}{\mathbf{x}}
\newcommand{\bA}{\mathbf{A}}
\newcommand{\by}{\mathbf{y}}
\newcommand{\cc}{\mathbb{C}}

\newcommand{\ee}{\mathbb{E}}

\newcommand{\rr}{\mathbb{R}}

\allowdisplaybreaks

\newcommand{\mb}{\mathcal{B}}

\def\Cov{{\sf Cov}}

\newcommand{\bB}{\mathbf{B}}

\newcommand{\bE}{ \mathbb{E} }

\newcommand{\bM}{\mathbf{M}}

\newcommand{\bX}{ { X} }
\newcommand{\bZ}{ { Z} }

\def \bX{\mathbf{X}}

\def \bZ{\mathbf{Z}}

\def \bbeta{\pmb{\beta}}

\newcommand{\bSigma}{ { \Sigma} }

\def\bSigma{{\boldsymbol\Sigma}}

\sloppy

\makeatletter
\def\thickhline{\noalign{\ifnum0=`}\fi\hrule \@height \thickarrayrulewidth \futurelet
\reserved@a\@xthickhline}
\def\@xthickhline{\ifx\reserved@a\thickhline
  \vskip\doublerulesep
  \vskip-\thickarrayrulewidth
  \fi
\ifnum0=`{\fi}}
\makeatother

\newlength{\thickarrayrulewidth}
\setlength{\thickarrayrulewidth}{2\arrayrulewidth}

\usepackage{algorithm}

\numberwithin{equation}{section}

\renewcommand{\hat}{\widehat}
\renewcommand{\tilde}{\widetilde}

\newcommand{\convas}{\overset{\textrm{a.s.}}{\longrightarrow}}

\newcommand{\stielFn}{\mathcal{SF}}

\usepackage{hyperref}
\usepackage{url}
\usepackage{amsthm}
\usepackage{amssymb,bm}
\usepackage{enumitem}
\usepackage{graphicx}
\usepackage{wrapfig}
\usepackage{algorithm,algorithmic}
\usepackage{authblk}
\usepackage{subcaption,xcolor}
\usepackage{setspace}
\onehalfspacing

\setcitestyle{authoryear,round,citesep={;},aysep={,},yysep={;}}

\title{Preventing Model Collapse Under Overparametrization: Optimal Mixing Ratios for Interpolation Learning and Ridge Regression}

\begin{document}
\author[1]{Anvit Garg}
\author[2]{Sohom Bhattacharya}
\author[1]{Pragya Sur}
\affil[1]{Department of Statistics, Harvard University}
\affil[2]{Department of Statistics, University of Florida}
\maketitle
\footnotetext[2]{Emails: \href{mailto:anvitgarg@fas.harvard.edu}{\textit{anvitgarg@fas.harvard.edu}}; \href{mailto:bhattacharya.s@ufl.edu}{\textit{bhattacharya.s@ufl.edu}}; \href{mailto:pragya@fas.harvard.edu}{\textit{pragya@fas.harvard.edu}}}

\begin{abstract}
  Model collapse occurs when generative models degrade after repeatedly training on their own synthetic outputs.
  We study this effect in overparameterized linear regression in a setting where each iteration mixes fresh real labels with synthetic labels drawn from the model fitted in the previous iteration.
  We derive precise generalization error formulae for minimum-$\ell_2$-norm interpolation and ridge regression under this iterative scheme.
  Our analysis reveals intriguing properties of the optimal mixing weight that minimizes long-term prediction error and provably prevents model collapse. For instance, in the case of min-$\ell_2$-norm interpolation, we establish that the optimal real-data proportion converges to the reciprocal of the golden ratio for fairly general classes of covariate distributions.
  Previously, this property was known only for ordinary least squares, and additionally in low dimensions.
  For ridge regression, we further analyze two popular model classes -- the random-effects model and the spiked covariance model -- demonstrating how spectral geometry governs optimal weighting.
  In both cases, as well as for isotropic features, we uncover that the optimal mixing ratio should be at least one-half, reflecting the necessity of favoring real-data over synthetic.
  We study three additional settings: (i) where real data is fixed and fresh labels are not obtained at each iteration, (ii) where covariates vary across iterations but fresh real labels are available each time, and (iii) where covariates vary with time but only a fraction of them receive fresh real labels at each iteration. Across these diverse settings, we characterize when model collapse is inevitable and when synthetic data improves learning.
  We validate our theoretical results with extensive simulations.
\end{abstract}

\section{Introduction}

Modern AI models are increasingly trained on their own synthetic outputs. However, this practice can lead to \emph{model collapse}, where prediction performance degrades progressively  with iterative re-training on AI generated \textit{synthetic data} ~\cite{shumailov2024ai,shumailov2023curse}. The phenomenon has been empirically observed across a wide array of settings~\cite{alemohammad2024self,bohacek2023nepotistically,briesch2023large,hataya2023will,martinez2023combining,martinez2023towards}. For instance, \cite{alemohammad2024self,shumailov2024ai,shumailov2023curse,martinez2023combining,martinez2023towards} studied settings where a model is iteratively trained on synthetic data generated from the previously trained model, starting from original real data. They also studied settings where a small fraction of the original real data is used in conjunction with the generated synthetic data. They demonstrated that model collapse occurs in both cases.
These concerning observations motivated a surge in recent work that studies model collapse rigorously and attempts to develop method that mitigate it where possible \cite{bertrand2024stability,shumailov2024ai,dohmatob2024model,dohmatob2024tale, dohmatob2024strong,feng2024beyond,dey2024universality,gerstgrasser2024inevitable,kazdan2024collapse}.
However, previous work remains limited to low-dimensional settings or Gaussian features, creating a significant gap in our understanding. This paper breaks this barrier by addressing two fundamental questions:
Can mixing fresh real data with synthetic outputs mitigate model collapse in overparametrized problems? What is the optimal mixing ratio that minimizes prediction error in the long run? We provide rigorous answers for overparametrized linear regression, demonstrating how model collapse can be prevented under overparametrization.

Prior work has rigorously studied model collapse across several problem settings, though with important limitations. For low-dimensional Gaussian distribution estimation, \cite{shumailov2024ai} shows how repeated use of synthetic data causes the estimated covariance matrix to collapse to zero almost surely, while the sample mean diverges. Crucially, they consider settings where the original real data is fixed and fresh real data is unavailable at each iteration. Similar results hold for linear regression for Gaussian features ~\cite{dohmatob2024model}. Recent work~\cite{dohmatob2024tale, dohmatob2024strong,feng2024beyond} attributes collapse to a change in scaling laws, with applications to text generation and Gaussian mixture problems.
To mitigate model collapse,~\cite{bertrand2024stability,gerstgrasser2024inevitable,dey2024universality,kazdan2024collapse,he2025golden} develop a mixing framework, where models are trained on a mixture of real and synthetic data at each iteration. This approach prevents collapse by ensuring that the test error remains bounded even as the number of iterations increase. Crucially, these papers focus exclusively on low-dimensional problems.

Beyond the collapse phenomena themselves, several works highlight practical reasons why earlier real labels may not be fully reusable.
For legal reasons or otherwise, previously collected labels may need to be explicitly forgotten or discarded: Li et al. (2021) formalize an online-forgetting mechanism for linear regression motivated by storage limitations and regulatory restrictions.
Alternatively, older labels become unreliable due to distributional drift Gneiting et al. (2005).
Together, these considerations help explain why training procedures that repeatedly combine fresh real data with synthetic outputs arise naturally, and why understanding their long-run behavior is essential.

Beyond the collapse phenomena themselves, several works highlight practical reasons why earlier real labels  may not be fully reusable.
For legal reasons or otherwise, previously collected labels may need to be explicitly forgotten or discarded: \cite{li2020online} formalize an online-forgetting mechanism for linear regression motivated by storage limitations and regulatory restrictions. Alternatively, older labels become unreliable due to distributional drift \cite{gneiting2005calibrated}.
Together, with findings in recursive-training settings \cite{alemohammad2024self} that emphasize the necessity of incorporating fresh real data at every iteration, these considerations help explain why training procedures that repeatedly combine fresh real data with synthetic outputs arise naturally, and why understanding their long-run behavior is essential.

We study two broad classes of estimators --- minimum-$\ell_2$-norm interpolators (Section~\ref{sec:interpol}) and ridge regression (Section~\ref{subsec:main_isotropic} and Section~\ref{subsec:main_anisotropic}).
Modern machine learning (ML) algorithms frequently exhibit implicit regularization \cite{zhang2005boosting, soudry2018implicit,gunasekar2018characterizing} --- with appropriate initialization and step sizes, algorithms converge to predictors that achieve remarkable generalization in overparameterized regimes. Implicit regularization has become a cornerstone for understanding why overparameterized models generalize well \cite{bartlett2020benign}. Within this framework, min-norm interpolators have emerged as a fundamental class of predictors that commonly arise as implicitly regularized limits of gradient-based algorithms~\cite{bartlett2020benign,deng2022model,gunasekar2018characterizing,gunasekar2018implicit,liang2022precise,montanari2019generalization,muthukumar2020harmless,soudry2018implicit,zhang2005boosting,wang2022tight,zhou2022non}.  At the same time, ridge regression represents a fundamental learning paradigm that has historically provided valuable insights into complex algorithms, often illuminating phenomena observed in deep networks \cite{hastie2022surprises,patiloptimal}. We study these popular classes of predictors for understanding model collapse under overparametrization.

Our work here is motivated by findings  in \cite{alemohammad2024self}, which suggest that among interesting possible combinations of real and synthetic data mixing, fresh data augmentation is the only scenario where model collapse can be prevented. We work under this  \emph{fresh data augmentation} framework, which we describe below. This framework also appeared in \cite{he2025golden}, though they focused on low-dimensional problems. Given covariates $\bX$, at iteration $t$, we generate a new batch of \emph{real} responses $\by_t$ alongside \emph{synthetic} responses $\tilde\by_t$  produced using the estimator from iteration $(t-1)$. At iteration $t$, the estimator is formed using a weighted mixture of these real and synthetic responses. Concretely,
we define
\begin{align}\label{eq:define_estimator}
  &\hat\bbeta_t= (\bX^\top\bX)^{\dagger}\bX^\top\big(w\,\by_t+(1-w)\,\tilde\by_{t}\big).\nonumber\\
  &\hat\bbeta_{t,\lambda}
  =(\bX^\top\bX+n\lambda  \boldsymbol I)^{-1}\bX^\top\big(w\,\by_t+(1-w)\,\tilde\by_{t,\lambda}\big).
\end{align}
Above, $\dagger$ denotes the pseudoinverse.
It is well-known that $\hat\bbeta_t=\lim_{\lambda\to0^{+}}\hat\bbeta_{t,\lambda}$. In the aforementioned setting, our main contributions are as follows:

(i) \textbf{Quantifying the generalization error.}
In an overparametrized regime (stated precisely in Section \ref{sec:formulate}), we characterize the generalization error as $t\rightarrow \infty$ for both the min-$\ell_2$-norm interpolator $\hat\bbeta_{t}$ (Theorem~\ref{thm:interpolator})and the ridge estimator $\hat\bbeta_{t,\lambda}$ (Theorem~\ref{thm:anisotropic_restate}).
Our results capture the precise dependence of the limiting risk on key problem parameters: the signal strength, feature covariance matrix,  regularization level $\lambda$, and mixing proportion $w$. Our work substantially advances the growing literature on interpolation learning and high-dimensional ridge regression~\cite{montanari2019generalization,deng2022model,liang2022precise,wang2022tight,zhou2022non,bach2024high,patiloptimal,mallinar2024minimum,song2024generalization} (see also data selection strategies \cite{rezaei2025high}), which has previously not examined the impact of synthetic data on these learning problems.

(ii) \textbf{Characterizing the optimal mixing ratio.}
We characterize the optimal weight on real labels that minimizes the long-term prediction error across different settings.
For min-$\ell_2$-norm interpolation, we establish that the asymptotic risk is uniquely minimized at $w^\star = 1/\varphi$ (the reciprocal of the golden ratio) for \emph{any} feature covariance matrix with bounded eigenvalues (see \eqref{eqn:eigenvalue_bounded}).
This phenomenon was previously proved only for maximum likelihood estimators in generalized linear models and nonparametric estimation in~\cite{he2025golden}. But the focus here was on low-dimensional settings. For ridge regression, we prove that the risk is log-convex and admits a unique minimum at $w^\star$ under several important scenarios:  when the covariance is isotropic (Theorem~\ref{thm:isotropic_mixing}), or when the covariance follows a spiked model (Section~\ref{sec:spike}), or when the signal follows a random-effects model (Theorem~\ref{thm:structured_mixing}). Across all settings, we show $w^\star \geq 1/2$, highlighting the necessity of weighting real-data more heavily than synthetic data. Such rigorous analysis of the mixing ratio provides concrete guidance for mitigating model collapse in overparametrized problems, complementing recent theoretical studies that were limited to low-dimensional regression~\cite{gerstgrasser2024inevitable,dey2024universality}.

\paragraph{Paper Structure}
The rest of the paper is structured as follows. Section~\ref{sec:formulate} formalizes the problem setup and data-augmentation framework. In Section~\ref{sec:main}, we state our theoretical results: Section~\ref{sec:interpol} considers the min-$\ell_2$-norm interpolator and Sections~\ref{subsec:main_isotropic} and~\ref{subsec:main_anisotropic} consider ridge regression for isotropic and anisotropic covariates, respectively. Sections~\ref{subsec:randomeffects} and~\ref{sec:spike} provide applications of our main results for random effects and spiked covariance models, respectively.
Section~\ref{sec:sims} and presents simulations to complement our theoretical findings. We conclude with a discussion and outlining of future directions in Section~\ref{sec:discussion}. Proofs are deferred to the supplementary material.

\section{Problem Setup}
\label{sec:formulate}
We consider the fresh data augmentation framework \cite{he2025golden}, but in the context of overparametrized linear regression. Suppose we observe a dataset $(\by,\bX)$ from a linear model, i.e.,
\begin{equation}\label{eq:lin}
  \by=\bX \bbeta +\boldsymbol{\varepsilon}, \,\,\text{with} \,\, \by, \boldsymbol{\varepsilon} \in \mathbb{R}^n, \bbeta \in \mathbb{R}^p, \bX\in \mathbb{R}^{n \times p}.
\end{equation}
We compute an initial ridge estimator $\hat\bbeta_{0,\lambda}$ using $(\by,\bX)$. For $t \ge 1$ we iteratively generate synthetic response vectors $\tilde{\by}_{t,\lambda}$ using  $\hat\bbeta_{t-1,\lambda}$, then augment these with fresh real responses $\by_t$. At each step, the next ridge estimator is computed using a mixture of the real and synthetic responses, $\by_t$ and $\tilde{\by}_{t,\lambda}$ respectively, with a mixing proportion \(w \in (0,1)\).
The procedure is outlined in Algorithm~\ref{algo:method_ridge}.

\begin{algorithm}[!ht]
  \caption{Iterative ridge with real/synthetic data augmentation}
  \label{algo:method_ridge}
  \begin{algorithmic}[1]
    \STATE \textbf{Input:} Dataset \((\by, \bX)\); regularization parameter \(\lambda > 0\); mixing proportion \(w \in (0,1)\).
    \STATE \textbf{Initialize:}
    \(\hat\bbeta_{0,\lambda} \gets (\bX^\top \bX + n \lambda\boldsymbol I)^{-1}\bX^\top \by\).
    \FOR{$t \geq 1$}
    \STATE Generate real responses: \(\by_t \gets \bX \bbeta + {\boldsymbol{\varepsilon}}_t\).
    \STATE Generate synthetic responses: \(\tilde \by_{t,\lambda} \gets \bX \hat\bbeta_{t-1,\lambda} + \tilde{\boldsymbol{\varepsilon}}_t\).
    \STATE Update estimator:
    \[
      \hat\bbeta_{t,\lambda} \gets (\bX^\top \bX + n \lambda \boldsymbol I)^{-1} \bX^\top \big( w \by_t + (1-w)\tilde \by_{t,\lambda} \big).
    \]
    \ENDFOR
  \end{algorithmic}
\end{algorithm}
To capture an overparametrized regime, we assume that $p>n$ with both diverging at a comparable rate, i.e. $p/n \rightarrow \gamma >1$.
This means we work with a sequence of problem instances $\{\by(n),\bX(n),\bbeta(n),\boldsymbol{\varepsilon}(n)\}_{n \geq 1}$, with $\bX(n) \in \mathbb{R}^{n \times p(n)}, \by(n),\boldsymbol{\varepsilon}(n) \in \mathbb{R}^n, \bbeta(n) \in \mathbb{R}^{p(n)}$, satisfying \eqref{eq:lin} and further assume that
\begin{align}
  \label{eq:b_star}\lim_{n\to \infty} \|\bbeta(n)\|^2=b_{\star} \in (0,\infty).
\end{align}
Below we suppress the dependence on $n$ for conciseness. This regime has seen incredible success in modern ML theory in explaining phenomena observed for deep neural networks and other practical algorithms \cite{hastie2022surprises,adlam2020understanding,montanari2019generalization,mei2022generalization,liang2022precise,cui2023bayes,paquette20244+,emrullah2025high,lu2025asymptotic}. The regime has also seen enormous utility in high-dimensional statistics, particularly for the development of new theory and methods in challenging contemporary inference problems \cite{bean2013optimal,el2018impact,donoho2009message,bayati2011lasso,wang2017bridge,sur2019modern,sur2019likelihood,fan2022approximate, li2023spectrum,celentano2023lasso,jiang2025new}.

In the sequel, we operate under the following assumptions on the covariates and errors that are commonly seen in random matrix theory~\cite{bai2010spectral}.

\begin{ass}\label{assn:combined}
  (i) The covariates satisfy $\bX= \bZ \bSigma^{1/2}$, where $\bZ \in \R^{n \times p}$ are random matrices whose entries $Z_{ij}$ are independent random variables with zero mean and unit variance. We further assume that there exists a constant $\tau>0$ by which the $\upsilon$-th moment of each entry is bounded for some $\upsilon>4$, that is,
  $\ee\left[\left|Z_{ij}\right|^\upsilon\right] \leq \tau^{-1}.$
  We will also assume that $\bSigma$ has bounded eigenvalues $s_1,\ldots,s_p$:
  \begin{equation}\label{eqn:eigenvalue_bounded}
    \tau \leq s_p \leq ... \leq s_1 \leq \tau^{-1}.
  \end{equation}

  (ii) The noises ${\boldsymbol{\varepsilon}}_t, \tilde {\boldsymbol{\varepsilon}}_t$ (defined precisely in Algorithm~\ref{algo:method_ridge}) are assumed to have i.i.d. entries with mean $0$, variance $\sigma^2$, and bounded moments up to any order. That is, for any $\phi>0$, there exists a constant $C_\phi$ such that
  $\mathbb {E}\left[\left|\varepsilon_{t,1}\right|^\phi\right] \leq C_\phi, \mathbb {E}\left[\left|\tilde \varepsilon_{t,1}\right|^\phi\right] \leq C_\phi.$
\end{ass}
Together with \eqref{eq:b_star}, Assumption \ref{assn:combined}(ii) implies that the signal-to-noise ratio (\texttt{SNR}) $\texttt{SNR}:= b_\star/\sigma^2$ remains finite as $n,p \rightarrow \infty$ ensuring that we work under a non-trivial and interesting regime.

\subsection{Risk}

The out-of-sample prediction risk of an estimator $\hat \bbeta$ (hereafter simply referred to as risk) at a new data point $(y_0,\bx_0)$ is defined as
\begin{equation}\label{eq:risk_define}
  R(\hat \bbeta; \bbeta):= \ee[(\bx^\top_0 \hat\bbeta- \bx^\top_0 \bbeta)^2|\bX] =\ee[ \|\hat\bbeta -\bbeta\|^2_{\bSigma}|\bX],
\end{equation}
where for a vector $\mathbf{v}$ and matrix $\bSigma$, we define $\|\mathbf{v}\|^2_{\bSigma}= \mathbf{v}^\top \bSigma \mathbf{v}$. Note that this risk has a $\sigma^2$ difference from the mean-squared prediction error for the new data point, which does not affect the relative performance and is
therefore omitted. As defined, the risk involves expectation over both the randomness in the new test point $(y_0,\bx_0)$ and that   in the noise variables. We define the risk conditional on the feature matrix $\bX$, and our risk characterization results are high probability statements over the randomness in the covariates. Despite this dependence on covariates,  we  use the notation $R(\hat\bbeta;\bbeta)$ since the context is clear. The risk admits a bias-variance decomposition:
\begin{equation}\label{eq:risk_analytical}
  R(\hat\bbeta;\bbeta) = \underbrace{\|\ee(\hat\bbeta|\bX)-\bbeta\|^2_{\bSigma}}_{B(\hat\bbeta;\bbeta)} + \underbrace{\Tr[\Cov(\hat\bbeta|\bX)\bSigma]}_{V(\hat\bbeta;\bbeta)}.
\end{equation}
We next state our main results, which involve precise characterization of the risk  of the estimator $\hat\bbeta_{t,\lambda}$ from Algorithm~\ref{algo:method_ridge} and $\hat\bbeta_t$ defined by~\eqref{eq:define_interpolator}.

\section{Main Results}
\label{sec:main}
We begin by introducing two measures that feature crucially in the risk of
the estimators $\hat\bbeta_{t,\lambda}$ and $\hat\bbeta_{t}$. Let $\boldsymbol v_1, \ldots, \boldsymbol v_p$ denote the eigenvectors of $\bSigma$: $\bSigma= \sum_{k=1}^{p} s_k \boldsymbol v_k \boldsymbol v^\top_k$. Define the probability measures:
\begin{align}
  \label{defn:hn_gn}
  \hat H_p(x) = \frac{1}{p} \sum_{k=1}^{p} \1_{s_k \le x}, \qquad \hat G_p(x) = \frac{1}{\|\bbeta\|^2_2} \sum_{k=1}^{p} \langle \boldsymbol v_k,\bbeta \rangle^2 \1_{s_k \le x}.
\end{align}
Throughout we  assume $\hat H_p$ and $\hat G_p$ converge weakly to probability measures $H$ and $G$ respectively.

For any $z \in \cc/\rr^+$, define $m(z)$ to be the solution to
\begin{align}
  \label{defn:mz}
  m(z)^{-1} +z = \gamma \int \frac{x}{1+m(z) x}\, dH.
\end{align}
\begin{remark}
  \label{rem:m_is_free_conv}
  Existence and uniqueness of $m(z)$ is well-known (c.f.,~\cite[Lemma 2.2]{knowles2017anisotropic}). Further, $m(z)$ is the companion Stieltjes transform of the free convolution of $H$ and $MP_\gamma$, where $MP_\gamma$ is the Marchenko-Pastur distribution with parameter $\gamma$~\cite{marvcenko1967distribution}.
\end{remark}

\subsection{\texorpdfstring{Min-$\ell_2$-norm interpolator}{Min-l2-norm interpolator}}
\label{sec:interpol}
In this section, we will analyze the behavior of $R(\hat \bbeta_{t}; \bbeta)$, where we use
\begin{equation}\label{eq:define_interpolator}
  \hat \bbeta_{t}= (\bX^\top \bX)^{\dagger}\bX^\top(w \by_t+ (1-w) \tilde \by_t), \qquad \tilde \by_t= \bX \hat\bbeta_{t-1}+ \tilde {\boldsymbol{\varepsilon}}_t
\end{equation}
in place of $\hat \bbeta_{t,\lambda}, \tilde \by_{t,\lambda}$ in Algorithm~\ref{algo:method_ridge}. This estimator is a convex combination of the min-$\ell_2$-norm interpolator computed on $(\bX, \by_t)$ and $(\bX, \tilde \by_t)$. The generalization error of $\hat\bbeta_t$ is characterized below.

\begin{thm}[Interpolator Risk]
  \label{thm:interpolator}
  In the setting of Section \ref{sec:formulate}, the risk of
  $\hat\bbeta_t$, defined by ~\eqref{eq:define_interpolator}, satisifes the following. For any $w \in (0,1)$, we have almost surely over the randomness in the covariates,
  \begin{align}\label{eq:limrisk}
    \lim_{n\to \infty} \lim_{t\to\infty} R(\hat \bbeta_{t}; \bbeta) = \sigma^2 c(w) \mv + b_{\star} \mb
  \end{align}
  with $c(w) = (w^2+(1-w)^2)/w(2-w)$ and
  \begin{align}\label{eq:interpolator_answer}
    \mv = \frac{m'(0)}{m(0)^2}-1, \qquad \mb = \frac{m'(0)}{m(0)^2}\left( \int \frac{x}{(1 + m(0) x)^2} dG\right).
  \end{align}
  Moreover, the limiting risk is minimized at $w^\star = \varphi^{-1}$, where $\varphi = (1+\sqrt{5})/2$ is the golden ratio.
\end{thm}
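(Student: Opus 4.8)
The plan is to show that, conditionally on $\bX$, the iteration \eqref{eq:define_interpolator} collapses to an affine recursion with scalar coefficients in $\hat\bbeta_t$, to read off its stationary bias and variance, and then to import the (essentially known) random-matrix asymptotics for the ordinary min-$\ell_2$-norm interpolator. Concretely, I would set $P:=(\bX^\top\bX)^{\dagger}\bX^\top$ and $\Pi:=P\bX=(\bX^\top\bX)^{\dagger}\bX^\top\bX$, the orthogonal projection onto the row space of $\bX$. Since every iterate $\hat\bbeta_s$ defined by \eqref{eq:define_interpolator} (including the initializer $(\bX^\top\bX)^{\dagger}\bX^\top\by$) lies in $\mathrm{range}\big((\bX^\top\bX)^{\dagger}\big)=\mathrm{row}(\bX)$, one has $\Pi\hat\bbeta_{t-1}=\hat\bbeta_{t-1}$; substituting $\tilde\by_t=\bX\hat\bbeta_{t-1}+\tilde{\boldsymbol{\varepsilon}}_t$ and $\by_t=\bX\bbeta+{\boldsymbol{\varepsilon}}_t$ into \eqref{eq:define_interpolator} yields $\hat\bbeta_t=(1-w)\hat\bbeta_{t-1}+w\,\Pi\bbeta+w\,P{\boldsymbol{\varepsilon}}_t+(1-w)\,P\tilde{\boldsymbol{\varepsilon}}_t$, where, conditionally on $\bX$, the fresh noise $({\boldsymbol{\varepsilon}}_t,\tilde{\boldsymbol{\varepsilon}}_t)$ is independent of $\hat\bbeta_{t-1}$. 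Conditioning on $\bX$ and letting $t\to\infty$: taking conditional means gives $\E[\hat\bbeta_t\mid\bX]\to\Pi\bbeta$ (the fixed point, since $|1-w|<1$), so the limiting conditional bias is $\|(I-\Pi)\bbeta\|^2_{\bSigma}$ — exactly the vanilla interpolator bias, and independent of $w$. Using independence of the step-$t$ noise, $\Cov(\hat\bbeta_t\mid\bX)=(1-w)^2\Cov(\hat\bbeta_{t-1}\mid\bX)+(w^2+(1-w)^2)\sigma^2 PP^\top$ with $PP^\top=(\bX^\top\bX)^{\dagger}$, so iterating to the fixed point gives $\Cov(\hat\bbeta_t\mid\bX)\to c(w)\sigma^2(\bX^\top\bX)^{\dagger}$ with $c(w)=(w^2+(1-w)^2)/(1-(1-w)^2)=(w^2+(1-w)^2)/(w(2-w))$. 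Hence, for fixed $\bX$, $\lim_{t\to\infty}R(\hat\bbeta_t;\bbeta)=\|(I-\Pi)\bbeta\|^2_{\bSigma}+c(w)\sigma^2\Tr[(\bX^\top\bX)^{\dagger}\bSigma]$, i.e.\ the interpolator bias plus $c(w)$ times its variance.

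Next I would take $n\to\infty$. Both functionals are ridgeless limits of resolvent quantities, via $I-\Pi=\lim_{\eta\downarrow0}\eta(\bX^\top\bX+\eta I)^{-1}$ and $(\bX^\top\bX)^{\dagger}=\lim_{\eta\downarrow0}(\bX^\top\bX+\eta I)^{-1}\bX^\top\bX(\bX^\top\bX+\eta I)^{-1}$. Applying anisotropic deterministic equivalents / local laws for sample covariance matrices (in the style of \cite{knowles2017anisotropic,hastie2022surprises}) to $\bbeta^\top(\bX^\top\bX+\eta I)^{-1}\bSigma(\bX^\top\bX+\eta I)^{-1}\bbeta$ and $\Tr[(\bX^\top\bX+\eta I)^{-1}\bX^\top\bX(\bX^\top\bX+\eta I)^{-1}\bSigma]$, then sending $\eta\downarrow0$ and using $\hat H_p\Rightarrow H$, $\hat G_p\Rightarrow G$, $\|\bbeta\|^2\to b_\star$, and the moment bounds of Assumption~\ref{assn:combined}, should give a.s.\ convergence to $b_\star\mb$ and $\mv$ respectively, with $\mb,\mv$ as in \eqref{eq:interpolator_answer}; here $\gamma>1$ together with \eqref{eqn:eigenvalue_bounded} keeps the relevant spectral measure supported away from $0$, so $m$ is analytic at $z=0$, $m(0),m'(0)$ are finite, and differentiating \eqref{defn:mz} at $z=0$ produces the stated closed forms. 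This establishes \eqref{eq:limrisk}.

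For the last assertion, since $b_\star\mb$ does not depend on $w$ and $\sigma^2\mv>0$ ($\mv$ is the limiting variance of the ordinary interpolator, which stays bounded away from $0$ under \eqref{eqn:eigenvalue_bounded} with $\gamma>1$, because $(\bX^\top\bX)^{\dagger}\succeq0$ has rank $n$ and $\bSigma\succ0$), minimizing the limiting risk over $w\in(0,1)$ reduces to minimizing $c(w)=(2w^2-2w+1)/(2w-w^2)$. I would compute that $c'(w)$ has the same sign as $w^2+w-1$, which is negative on $(0,\varphi^{-1})$ and positive on $(\varphi^{-1},1)$; hence $c$ has the unique minimizer $w^\star=(\sqrt5-1)/2=\varphi^{-1}$, proving the claim.

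The hard part will be the random-matrix step: obtaining the anisotropic deterministic equivalents in the ridgeless ($\eta\downarrow0$) regime with almost-sure (not merely in-probability) control, and verifying the edge behavior that guarantees finiteness of $m(0)$ and $m'(0)$. The recursion collapse and the scalar optimization are short, and the interpolator bias/variance limits are close to results already available in the high-dimensional ridge(less) regression literature; the genuinely new element is the reduction to the clean coefficient $c(w)$ and its golden-ratio minimizer.
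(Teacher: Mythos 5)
Your proposal is correct and follows essentially the same route as the paper: conditioning on $\bX$, the mean/covariance recursion yields a $w$-independent bias $\|(I-\Pi)\bbeta\|^2_{\bSigma}$ and stationary covariance $c(w)\sigma^2(\bX^\top\bX)^{\dagger}$ (the paper obtains the same via an explicit induction and geometric sum), after which the $n\to\infty$ limits are imported from the ridgeless random-matrix results of Hastie et al.\ exactly as you indicate, and the minimization reduces to $c'(w)\propto w^2+w-1$, giving $w^\star=\varphi^{-1}$.
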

Note Theorem \ref{thm:interpolator} applies for any $\bSigma$ obeying our assumptions.  For the special case of isotropic features, i.e., $\bSigma= \boldsymbol I$, the limiting risk simplifies to
\[
  \lim_{n\to \infty} \lim_{t\to\infty} R(\hat \bbeta_{t}; \bbeta)= \sigma^2 c(w) \frac{1}{\gamma-1}+ b_\star \Big(1- \frac{1}{\gamma}\Big).
\]
Proof of Theorem~\ref{thm:interpolator} is available in Appendix~\ref{appdx:main_proofs}.

\textbf{Effect of number of iteration $t$ and mixing parameter $w$:}  In \eqref{eq:limrisk}, the first term corresponds to the variance while the second to the bias (recall definitions from \eqref{eq:risk_analytical}). Note that the quantities $\sigma^2 \mathcal{V}$ and $b_{\star} \mathcal{B}$ coincides asymptotic variance and bias of min-$\ell_2$-norm interpolators in overparametrized regression~\cite[Theorem 2]{hastie2022surprises}. In fact, the bias term $B(\hat\bbeta_{t},\bbeta)$ is independent of both the iteration $t$ and mixing proportion $w$.
The impact of mixing on the generalization error arises through only the variance, captured by the function $c(w)$, which is minimized at $w^\star= \varphi^{-1} \approx 0.618$. This \emph{golden-ratio weighting} phenomenon was previously observed for a host of models in low-dimensional settings~\cite{he2025golden}.

If $w=0$, then the generalization error $R(\hat \bbeta_t,\bbeta) \rightarrow \infty$ as $t\rightarrow \infty$, even for a finite sample size $n$. This implies that training solely on synthetic data results in model collapse, as seen also by~\cite{shumailov2024ai,dohmatob2024model} for low-dimensional learning problems. Moreover, if the mixing proportion $w >1/3$, then by~\eqref{eq:Vbt_ridge_defn}, we have the variance $V(\hat\bbeta_t; \bbeta)$ decreases monotonically with $t$ for any fixed $n$. Since $w^\star =\varphi^{-1} >1/3$, we observe that the generalization error also decreases monotonically for optimal mixing, thereby preventing model collapse.

\textbf{Dynamic mixing:} One might wonder whether the limiting generalization error can be further reduced by selecting the mixing proportion $w_t$ adaptively at each generation to minimize $R(\hat\bbeta_t; \bbeta)$ for any finite sample size $n$. We show in Section~\ref{sec:dynamic_mixing} that the optimal choice $w^\star_t$ in this sequential setup satisfies the recursion \[
  w^\star_t= \frac{1+ w^\star_{t-1}}{2+ w^\star_{t-1}}, \quad w_0=1.
\]
It follows immediately that $w^\star_t$ is decreasing, so if one is free to adjust the mixing proportion at every generation, the optimal strategy places progressively more weight on the synthetic data. Moreover, using $w^\star_t \rightarrow w^\star$ (as defined in Theorem~\ref{thm:interpolator}), in the long run, the asymptotic risk is the same whether one uses a fixed $w^\star$ across all generations or adapts $w^\star_t$ dynamically, as proven in Appendix~\ref{sec:dynamic_mixing}. Our findings can also be seen empirically in Figure \ref{fig:other_settings} (b), where we plot the risk vs number of iterations for different values of the weight $w$. The difference between $w=\varphi$ and dynamic $w$ is minimal and only visible at $t=1$. Further, this difference is minimal compared to the difference between $w=1$ and $w=\varphi^{-1}$. The empirical risks were calculated at $n=200, p = 400$ with $\texttt{SNR}=1$.

Next, we study ridge regression.
Here, both the variance and bias terms will depend on $\lambda, t, w$.

\subsection{Ridge regression: Isotropic covariance (\texorpdfstring{$\bSigma=\alpha \boldsymbol I$}{Sigma=a I})}
\label{subsec:main_isotropic}
In case of isotropic features, i.e., $\bSigma = \alpha \boldsymbol I$, the bias and variance of $\hat\bbeta_{t,\lambda}$ simplifies since  $s_k\equiv \alpha$. Hence, by~\eqref{defn:hn_gn}, $G=H=\delta_{\alpha}$, where $\delta_{x}$ denote the Dirac probability measure at $x$. This implies $m(z)$, from ~\eqref{defn:mz}, simplifies to be the unique solution to
\begin{align}
  \label{eq:mz_degen}
  m(z)^{-1}+z=\gamma \frac{\alpha}{1 + \alpha m(z)}.
\end{align}
We use the notations $m_1 :=m(-\lambda/w)$, $m_2:= m(-\lambda/(2-w))$.

\begin{thm}[Isotropic risk]
  \label{thm:isotropic}
  Assume \(\bSigma= \alpha \boldsymbol I\) and the setting of Section \ref{sec:formulate}.  For $0<w<1$, $\lambda>0$, we have almost surely over the randomness in the covariates
  \begin{align}\label{eq:anisotropic}
    \lim_{n\to \infty} \lim_{t \to \infty} R(\hat\bbeta_{t,\lambda};\bbeta)
    = \sigma^2 \, c(w)\,\mathcal{V}_\lambda \;+\; b_{\star}\,\mathcal{B}_\lambda,
  \end{align}

  where $c(w)= (w^2+(1-w)^2)/w(2-w)$ and
  \begin{align}
    \mathcal{B}_\lambda=\frac{ \alpha/(1 + \alpha m_1)^2 }{ 1-\gamma \, \alpha^2 m_1^2 /(1 + \alpha m_1)^2},\qquad
    \mathcal{V}_\lambda=\frac{w(2-w)}{2(1-w)} \frac{\gamma}{\lambda}\!\left(\frac{\alpha}{1+\alpha m_1}-\frac{\alpha}{1+\alpha m_2}\right).
  \end{align}
  Moreover, the limiting risk is a log-convex function of $w$ and has a unique minimizer.
\end{thm}

Theorem~\ref{thm:isotropic} characterizes the precise asymptotic risk as a function of the regularization parameter $\lambda$ and the mixing proportion $w$. Furthermore, the proof of the result shows that the map $w\mapsto c(w)\,\mathcal{V}_\lambda$ is log-convex, and the map $w\mapsto \mathcal{B}_\lambda$ is decreasing and log-convex. Hence, we obtain that the limiting risk  is log-convex in $w$, thereby admitting a unique minimizer. Further,  it can be shown that both $\mv_\lambda$ and $\mb_\lambda$ are continuous functions of $\lambda$ and
$
\lim_{\lambda\to 0^+} \mv_\lambda = \mathcal{V}$ and $ \lim_{\lambda\to 0^+} \mathcal{B}_\lambda= \mathcal{B},
$
with $\mathcal{V}, \mathcal{B}$ defined as in~\eqref{eq:interpolator_answer}. The proof of Theorem~\ref{thm:isotropic} shows that the bias at the $t$-th iterate $B(\hat\bbeta_{t,\lambda},\bbeta)$ depends on both $t$ and $w$, unlike the bias of the min-$\ell_2$-norm interpolator. The following result characterizes the behavior of the optimal mixing parameter as a function of $\lambda$.

\begin{thm}[Isotropic Optimal Mixing]
  \label{thm:isotropic_mixing}
  Under the assumptions of Theorem \ref{thm:isotropic},
  let $w^{\star}(\lambda)$ be the unique global minimizer of the limiting risk as defined in Theorem \ref{thm:isotropic}. Then $w^\star(\lambda)$ is an increasing function of SNR, a  continuous function of $\lambda$ and satisfies
  (i) $  w^\star(\lambda)\in[0.5,1] $,  (ii) $w^\star(\lambda)\to\varphi^{-1} \approx 0.618$ as $\lambda\downarrow0$ and (iii) $ w^\star(\lambda)\to1$ as $\lambda\uparrow\infty$.
\end{thm}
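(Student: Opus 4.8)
\emph{Strategy.} Write $\mathcal R(w,\lambda):=\sigma^2 c(w)\mathcal V_\lambda(w)+b_\star\mathcal B_\lambda(w)$ for the limiting risk of Theorem~\ref{thm:isotropic}. Everything hinges on the identity $c(w)\mathcal V_\lambda(w)=(w^2+(1-w)^2)\,h_\lambda(w)$ (immediate from the definition of $c$), where
\[
 h_\lambda(w)=\frac{\gamma}{2\lambda(1-w)}\Bigl(\frac{\alpha}{1+\alpha m(-\lambda/w)}-\frac{\alpha}{1+\alpha m(-\lambda/(2-w))}\Bigr),
\]
together with the fact that $h_\lambda$ is \emph{positive, bounded, and nonincreasing} on $[0,1]$. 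This is read off from the finite-$n$ computation behind Theorem~\ref{thm:isotropic}: solving the stationary-covariance recursion $\Sigma_\infty=(1-w)^2 S\Sigma_\infty S+(w^2+(1-w)^2)\sigma^2 AA^\top$ with $S=(\bX^\top\bX+n\lambda I)^{-1}\bX^\top\bX$, $A=(\bX^\top\bX+n\lambda I)^{-1}\bX^\top$, and tracing against $\bSigma=\alpha I$, the finite-$n$ variance equals $(w^2+(1-w)^2)\sigma^2\alpha\sum_i \mu_i/D_i(w)$ with $D_i(w)=(w\mu_i+n\lambda)((2-w)\mu_i+n\lambda)$ and $\mu_i$ the eigenvalues of $\bX^\top\bX$; each summand is positive and strictly decreasing in $w$ on $(0,1)$ since $D_i'(w)=2\mu_i^2(1-w)>0$, so the $t\to\infty$-then-$n\to\infty$ limit $h_\lambda$ inherits monotonicity, and the $1/(1-w)$ factor cancels. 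Consequently $\mathcal R$ extends to a jointly continuous function on $[0,1]\times(0,\infty)$ (using analyticity of $m$ on $(-\infty,0)$, Remark~\ref{rem:m_is_free_conv}); by Theorem~\ref{thm:isotropic} each $\mathcal R(\cdot,\lambda)$ is log-convex with a unique minimizer $w^\star(\lambda)\in[0,1]$, and continuity of $\lambda\mapsto w^\star(\lambda)$ then follows from Berge's maximum theorem.

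\emph{Part (i).} On $[0,\tfrac12]$ the factor $w^2+(1-w)^2$ is positive and strictly decreasing and $h_\lambda>0$ is nonincreasing, so $c(w)\mathcal V_\lambda(w)$ is strictly decreasing there; adding the nonincreasing term $b_\star\mathcal B_\lambda$ (recall $\mathcal B_\lambda$ is decreasing, as noted after Theorem~\ref{thm:isotropic}), $\mathcal R(\cdot,\lambda)$ is strictly decreasing on $[0,\tfrac12]$. Hence $w^\star(\lambda)\ge\tfrac12$, and $w^\star(\lambda)\le1$ trivially.

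\emph{Part (ii).} As $\lambda\downarrow0$, $m(-\lambda/w),m(-\lambda/(2-w))\to m(0)$, so $\mathcal V_\lambda(w)\to\mathcal V$ and $\mathcal B_\lambda(w)\to\mathcal B$, uniformly for $w$ in the compact set $[\tfrac12,1]$ by joint continuity; thus $\mathcal R(\cdot,\lambda)\to\mathcal R_0(\cdot):=\sigma^2 c(\cdot)\mathcal V+b_\star\mathcal B$ uniformly on $[\tfrac12,1]$, and $\mathcal R_0$ is the interpolator risk, uniquely minimized at $\varphi^{-1}\in(\tfrac12,1)$ by Theorem~\ref{thm:interpolator} (here $\mathcal B$ is constant in $w$). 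Since $w^\star(\lambda)\in[\tfrac12,1]$ for all $\lambda>0$ by part (i), convergence of minimizers under uniform convergence to a limit with a unique minimizer gives $w^\star(\lambda)\to\varphi^{-1}$.

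\emph{Part (iii) and SNR-monotonicity.} Differentiating at $w=1$ using $c(w)\mathcal V_\lambda=(w^2+(1-w)^2)h_\lambda$ gives $\partial_w\mathcal R(1,\lambda)=\sigma^2\bigl(2h_\lambda(1)+h_\lambda'(1)\bigr)+b_\star\,\partial_w\mathcal B_\lambda(1)$. Here $h_\lambda'(1)\le0$ and $h_\lambda(1)\le h_\lambda(0)\le\alpha^2\gamma\lambda^{-2}$ (because $D_i(0)\ge(n\lambda)^2$ and $n^{-2}\Tr(\bX^\top\bX)\to\alpha\gamma$), while inserting the tail expansion $m(z)=-z^{-1}-\gamma\alpha z^{-2}+O(z^{-3})$ as $z\to-\infty$ into the formula for $\mathcal B_\lambda$ yields $\mathcal B_\lambda(w)=\alpha-2\alpha^2 w/\lambda+O(\lambda^{-2})$, hence $\partial_w\mathcal B_\lambda(1)=-2\alpha^2/\lambda+O(\lambda^{-2})$. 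Therefore $\partial_w\mathcal R(1,\lambda)\le -2b_\star\alpha^2\lambda^{-1}+O(\lambda^{-2})<0$ for all sufficiently large $\lambda$, and quasi-convexity of $\mathcal R(\cdot,\lambda)$ forces $w^\star(\lambda)=1$; thus $w^\star(\lambda)\to1$ as $\lambda\uparrow\infty$. For monotonicity in $\texttt{SNR}$, write $\mathcal R=\sigma^2(f_\lambda+\texttt{SNR}\,g_\lambda)$ with $f_\lambda:=c(w)\mathcal V_\lambda$ and $g_\lambda:=\mathcal B_\lambda$ strictly decreasing; if $s<s'$ have minimizers $w,w'$, summing the two optimality inequalities $f_\lambda(w)+s g_\lambda(w)\le f_\lambda(w')+s g_\lambda(w')$ and $f_\lambda(w')+s' g_\lambda(w')\le f_\lambda(w)+s' g_\lambda(w)$ gives $(s-s')\bigl(g_\lambda(w)-g_\lambda(w')\bigr)\le0$, so $g_\lambda(w)\ge g_\lambda(w')$, and strict monotonicity of $g_\lambda$ forces $w\le w'$. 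Hence $w^\star$ is nondecreasing in $\texttt{SNR}$.

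\emph{Main obstacle.} The real content is the monotonicity and the $O(\lambda^{-2})$ size of $h_\lambda$ on $[0,1]$: this is invisible in the closed form of Theorem~\ref{thm:isotropic}, where the increasing factor $1/(1-w)$ competes with a decreasing bracket, and it is exposed only by returning to the finite-$n$ stationary variance, which is a manifest sum of decreasing rational functions. The remaining pieces — the uniform $\lambda\downarrow0$ limit of $m$ near $0$ for (ii), the $z\to-\infty$ expansion of $m$ for (iii), and joint continuity for the Berge step — are routine given the standard analytic properties of $m$.
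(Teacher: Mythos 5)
Your proposal is correct, but it takes a genuinely different route from the paper's. The paper proves Theorem~\ref{thm:isotropic_mixing} (simultaneously with Proposition~\ref{thm:structured_mixing}) via Lemma~\ref{lem:f_mu_representation}: it represents $f(z)=m(-z)^{-1}-z$ as a Stieltjes-type integral against a measure $\mu$ on $\rr^+$, which turns $c(w)\mathcal V_\lambda$ and $\mathcal B_\lambda$ into explicit integrals $\int t\gamma\frac{w^2+(1-w)^2}{(\lambda+wt)(\lambda+(2-w)t)}\,\mu(dt)$ and $\gamma^{-1}\bigl(a+\int\frac{\lambda^2}{(\lambda+wt)^2}\mu(dt)\bigr)$; monotonicity in $w$, (strict) log-convexity, the $\lambda\downarrow 0$ limit (where only $c(w)$ survives), the lower bound $w^\star\ge 1/2$, and the $\lambda\uparrow\infty$ behaviour are all read off from these formulas, continuity of $w^\star(\lambda)$ is a subsequence argument, and SNR-monotonicity uses the implicit function theorem. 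You instead extract the decisive monotonicity and size information from the finite-$n$ stationary variance, $(w^2+(1-w)^2)\sigma^2\alpha\sum_i\mu_i/\bigl((w\mu_i+n\lambda)((2-w)\mu_i+n\lambda)\bigr)$, and pass it to the limit $h_\lambda$; this is essentially the same structural fact the paper obtains from the $\mu$-representation, but obtained more elementarily (your finite-$n$ factorization is exactly the paper's display preceding \eqref{eq:var_ridge_m_w_repr}). Your $\lambda\downarrow0$ step (uniform convergence on $[1/2,1]$ plus uniqueness of the limiting minimizer $\varphi^{-1}$) and your $\lambda\uparrow\infty$ step differ from the paper's: your endpoint-derivative computation, comparing the $O(\lambda^{-2})$ variance scale with the $-2\alpha^2/\lambda$ slope of $\mathcal B_\lambda$ and invoking log-convexity, is actually more complete than the paper's terse ``variance vanishes while bias is decreasing'' argument (which as stated does not compare rates), and it yields the stronger conclusion $w^\star(\lambda)=1$ for all large $\lambda$, consistent with the paper's simulations. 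Your exchange (comparative-statics) argument for SNR-monotonicity replaces the paper's implicit-function-theorem argument and, like the paper, gives nondecreasingness. Two small caveats: you import from the discussion after Theorem~\ref{thm:isotropic} that $\mathcal B_\lambda$ is decreasing in $w$ (in the paper this is proved only inside Appendix~\ref{appdx:proof_structured}); it would be cleaner to note that your own finite-$n$ device gives it immediately, since the limiting bias is the limit of $\alpha\sum_i\bigl(n\lambda/(n\lambda+w\mu_i)\bigr)^2\langle v_i,\bbeta\rangle^2$, manifestly decreasing in $w$. Likewise, differentiating the large-$\lambda$ expansion of $\mathcal B_\lambda$ in $w$ should be justified by expanding $\partial_w\mathcal B_\lambda$ from the exact formula (analyticity of $m$ makes this routine), and strict monotonicity of $\mathcal B_\lambda$ used in the SNR step deserves a word. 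These are presentation-level gaps, not flaws in the approach.
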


The optimal mixing parameter minimizing the asymptotic risk is always at least one-half, emphasizing the importance of favoring real-data over synthetic data. In Figure~\ref{fig:optimal_mixing} (b), we show empirically that $w^\star$ can be arbitrarily close to $0.5$ for low \texttt{SNR}. Proofs of Theorem~\ref{thm:isotropic} and Theorem~\ref{thm:isotropic_mixing} are available in Appendix~\ref{appdx:main_proofs} and Appendix~\ref{appdx:proof_structured} respectively.

\subsection{Ridge regression: Correlated features}
\label{subsec:main_anisotropic}
We now state our most general result for anisotropic covariates, which calculates the limiting generalization error of ridge regression for arbitrary measures $\hat H_p$, $\hat G_p$ (recall~\eqref{defn:hn_gn}).
\begin{thm}[Ridge risk under correlated covariates]
  \label{thm:anisotropic_restate}
  In the setting of Section \ref{sec:formulate}, suppose $w \in (0,1)$, and $\lambda > 0$. Define $m_1=m(-\lambda/w)$, $m_2=m(-\lambda/(2-w))$, where $m(\cdot)$ is the unique solution to \eqref{defn:mz}. Then almost surely over the randomness in the covariates,  \eqref{eq:anisotropic} holds with
  \begin{align}
    \label{defn:b_lambda}
    \mathcal{B}_\lambda & = \left( \int \frac{x}{(1 + m_1 x)^2} dG\right)  \left(1 - \gamma \int \frac{m_1^2 x^2}{(1+m_1 x)^2} dH\right)^{-1} \\
    \label{defn:v_lambda}
    \mathcal{V_\lambda} &= \frac{w(2-w)}{2(1-w)}  \frac{\gamma}{\lambda} \left(\int \frac{x}{1+m_1 x} dH - \int \frac{x}{1+m_2 x} dH \right).
  \end{align}
\end{thm}
The proof is deferred to Appendix~\ref{appdx:main_proofs}. Theorem~\ref{thm:anisotropic_restate} is our most general result. It shows for any $w \in (0,1)$ and any $\lambda >0$, $R(\hat\bbeta_{t,\lambda};\bbeta)$ does not diverge even when $t$ increases. The result highlights the necessity of mixing real-data with synthetic outputs to mitigate model collapse. In its most general form the risk in \eqref{eq:anisotropic} is involved to analyze. In the sequel, we study two popular models where the risk simplifies and the optimal mixing ratio can be studied analytically.

\subsection{Examples}
In this section, we study two popular classes of examples: (i) the random effects model (Section \ref{subsec:randomeffects}) and (ii) the spiked covariance model (Section \ref{sec:spike}), where structural assumptions on the signal or the population covariance matrix render the limiting risk more tractable.

\subsubsection{Random Effects Model}\label{subsec:randomeffects}
Modern applications, ranging from text classification to economics and genomics, are characterized by dense but weak signals spread across many coordinates~\citep{joachims1998text,boyle2017expanded,yang2010common,shen2025can}. This setting is well captured by a random-effects model, where each feature contributes a small, independent effect, and it provides a simple yet sophisticated framework for rigorously analyzing interesting high-dimensional predictors \citep{dobriban2018high}. Adopting a random-effects framework, we assume that each coordinate \(\beta_i\) of the signal is  drawn i.i.d.~with \(\mathbb{E}\beta_i = 0\) and \(\mathrm{Var}(\beta_i)=b_\star/ p >0\).

\begin{prop}[Ridge risk under Random-Effects Models]
  \label{thm:anisotropic_structured}
  Suppose $\beta_i  \stackrel{\text{i.i.d.}}{\sim} (0, b_{\star}/p)$.
  Assume the framework of Section~\ref{sec:formulate} and fix $0<w<1$, $\lambda > 0$. Let $m(\cdot)$ be the solution to \eqref{defn:mz} and define $f(z)=m(-z)^{-1} -z$.
  Then almost surely over the randomness in the covariates,  \eqref{eq:anisotropic} holds with
  \begin{align}\label{eq:rf_answer}
    \mathcal{B}_\lambda & = \frac{1}{\gamma} \left(f(\lambda/w) - \frac{\lambda}{w} f'(\lambda/w)\right), \quad
    \mathcal{V_\lambda} =
    \frac{f\left(\frac{\lambda}{w}\right) - f\left(\frac\lambda{2-w}\right)}{\frac{\lambda}{w} - \frac\lambda{2-w}}.
  \end{align}
  Moreover, $\mb_\lambda$ is decreasing and log-convex and $c(w)\mv_\lambda$ is log-convex.
\end{prop}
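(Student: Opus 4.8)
The plan is to reduce to the case $G=H$ via a concentration argument for the random signal, then plug $G=H$ into Theorem~\ref{thm:anisotropic_restate} and simplify using the self-consistent equation~\eqref{defn:mz} and its derivative, and finally read off monotonicity and log-convexity from an integral representation of $f$ coming from the fact that $u\mapsto 1/m(-u)$ is a complete Bernstein function. For the reduction, write $\bSigma=\sum_k s_k v_kv_k^\top$ and fix $x$. Since $\bbeta$ has i.i.d.\ coordinates with mean $0$ and variance $b_\star/p$ and is independent of $\bX$ (hence of $\{v_k\}$), conditioning on $\bX$ gives $\E\big[\sum_{k:s_k\le x}\langle v_k,\bbeta\rangle^2\mid\bX\big]=(b_\star/p)\,|\{k:s_k\le x\}|=b_\star\hat H_p(x)$, and a Hanson--Wright bound yields $\sum_{k:s_k\le x}\langle v_k,\bbeta\rangle^2=b_\star\hat H_p(x)+o(1)$ almost surely; combined with $\|\bbeta\|_2^2\to b_\star$ (law of large numbers, using the bounded moments) and a union bound over a countable dense set of $x$, this gives $\hat G_p\Rightarrow H$ almost surely. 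Conditioning on a realization of $\bbeta$ for which this holds and invoking Theorem~\ref{thm:anisotropic_restate} with $G=H$ supplies the iterated limit, and Fubini handles the joint ``almost surely''.

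The algebraic step is next. Put $g(u)=m(-u)$, so~\eqref{defn:mz} reads $f(u)=1/g(u)-u=\gamma\int\frac{x}{1+g(u)x}\,dH$. For the variance, $\gamma\int\frac{x}{1+m_i x}\,dH$ equals $f(\lambda/w)$ for $i=1$ and $f(\lambda/(2-w))$ for $i=2$, and the identity $\frac{w(2-w)}{2(1-w)}\cdot\frac1\lambda=\big(\frac{\lambda}{w}-\frac{\lambda}{2-w}\big)^{-1}$ turns~\eqref{defn:v_lambda} into the stated divided difference. For the bias, differentiating the fixed-point relation in $u$ gives $f'=-\gamma g'\int\frac{x^2}{(1+gx)^2}\,dH$; comparing with $f'=-g'/g^2-1$ produces both $1-\gamma g^2\!\int\frac{x^2}{(1+gx)^2}\,dH=-g^2/g'$ and $g'=-(1+f')g^2$. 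Feeding these, together with $\int\frac{x}{1+gx}\,dH=\tfrac1\gamma f$ and its derivative $-g'\int\frac{x}{(1+gx)^2}\,dH=-\tfrac1\gamma(g'f+gf')$, into the expression~\eqref{defn:b_lambda} for $\mathcal B_\lambda$ (with $G=H$) collapses it after cancellation to $\tfrac1\gamma\big(f(\lambda/w)-\tfrac{\lambda}{w}f'(\lambda/w)\big)$.

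For the ``moreover'' part, I would exploit that, by Remark~\ref{rem:m_is_free_conv}, $g(u)=m(-u)=\int(x+u)^{-1}d\rho(x)$ for a probability measure $\rho$ on $[0,\infty)$, so $g$ is a Stieltjes function and $\psi:=1/g$ is a complete Bernstein function, $\psi(u)=a+bu+\int_{(0,\infty)}\frac{u}{u+s}\,\mu(ds)$ with $a,b\ge0$. Since $ug(u)\to\rho(\R)=1$ as $u\to\infty$, we get $b=\lim_{u\to\infty}\psi(u)/u=1$, hence $f(u)=\psi(u)-u=a+\int\frac{u}{u+s}\mu(ds)$, $f'(u)=\int\frac{s}{(u+s)^2}\mu(ds)\ge0$ (so $f$ is increasing) and $f''(u)=-2\int\frac{s}{(u+s)^3}\mu(ds)\le0$ (so $f$ is concave). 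Integrating $f'$ from $\lambda/(2-w)$ to $\lambda/w$ and using the simplified bias formula,
\[
  \mathcal V_\lambda=\int_{(0,\infty)}\frac{s\,\mu(ds)}{(\tfrac{\lambda}{w}+s)(\tfrac{\lambda}{2-w}+s)},\qquad \mathcal B_\lambda=\frac1\gamma\Big(a+\lambda^2\!\int_{(0,\infty)}\frac{\mu(ds)}{(\lambda+sw)^2}\Big),
\]
so that $c(w)\mathcal V_\lambda=(w^2+(1-w)^2)\int_{(0,\infty)}\frac{s\,\mu(ds)}{(\lambda+sw)(\lambda+s(2-w))}$. For each fixed $s>0$ the maps $w\mapsto-\log(\lambda+sw)$ and $w\mapsto-\log(\lambda+s(2-w))$ are convex on $(0,1)$, so every integrand above is log-convex in $w$; log-convexity is preserved under mixtures (Hölder) and under adding the nonnegative constant $a$, and $w^2+(1-w)^2$ is log-convex since $(\log(w^2+(1-w)^2))''=8w(1-w)/(w^2+(1-w)^2)^2>0$. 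Hence $c(w)\mathcal V_\lambda$ and $\mathcal B_\lambda$ are log-convex, and $\mathcal B_\lambda$ is decreasing because each $w\mapsto(\lambda+sw)^{-2}$ is.

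The hard part will be the bias simplification: reducing the general expression~\eqref{defn:b_lambda} to the compact form $\tfrac1\gamma(f-\tfrac{\lambda}{w}f')$ is a delicate chain of substitutions obtained from differentiating~\eqref{defn:mz}, and it requires verifying that the denominators $1-\gamma g^2\!\int\frac{x^2}{(1+gx)^2}dH$, $g'$, and $1+f'$ do not vanish on the relevant range (which follows since $g>0$ is strictly decreasing for $u>0$). A minor additional subtlety is that when $\gamma>1$ the measure $\rho$ carries an atom at $0$; this contributes only to the affine part of $\psi$ and so affects neither $f',f''$ nor the representations used above.
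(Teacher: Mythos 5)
Your proposal follows essentially the same route as the paper's proof: reduce the random-effects model to the case $G=H$, specialize Theorem~\ref{thm:anisotropic_restate}, simplify via \eqref{defn:mz} and its derivative to the divided-difference variance and the bias $\tfrac1\gamma\bigl(f(\lambda/w)-\tfrac{\lambda}{w}f'(\lambda/w)\bigr)$, and derive the decreasing/log-convexity claims from the Stieltjes-type representation $f(u)=a+\int\frac{u}{u+s}\,\mu(ds)$, which is exactly the paper's Lemma~\ref{lem:f_mu_representation} and the computations in Appendix~\ref{appdx:proof_structured}. The one step to repair is the reduction: Hanson--Wright requires sub-Gaussian coordinates of $\bbeta$, which are not assumed, and almost-sure weak convergence of $\hat G_p$ is neither obtainable from low moments alone nor needed; the paper's Lemma~\ref{lem:same-weak-limits} and Corollary~\ref{cor:rf_equiv} instead get $\hat G_p\Rightarrow H$ in probability by a Chebyshev bound on the identically distributed, uncorrelated projections $\langle v_k,\bbeta\rangle^2$ (using finite fourth moments), and that mode of convergence suffices since the ``almost surely'' in the statement refers to the covariate randomness.
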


With this characterization for the risk, we show the optimal mixing ratio satisfies the following.

\begin{prop}
  \label{thm:structured_mixing}
  Under the setup of Theorem \ref{thm:anisotropic_structured}, (i) The generalization error has a unique minimizer $w^{\star}$, (ii) $w^{\star} \in [0.5, 1]$, with $w^{\star} \to 1$ as $\lambda \uparrow \infty$ and $w^{\star} \to \phi^{-1}$ with $\lambda \downarrow 0$ and (iii) $w^{\star}$ increases with $\texttt{SNR}$.
\end{prop}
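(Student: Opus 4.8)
The plan is to treat the limiting risk $R(w):=\sigma^2 c(w)\mathcal{V}_\lambda(w)+b_\star\mathcal{B}_\lambda(w)$ of Proposition~\ref{thm:anisotropic_structured} as a scalar function of $w\in(0,1)$, carrying over the two structural facts proved there — $w\mapsto c(w)\mathcal{V}_\lambda(w)$ and $w\mapsto\mathcal{B}_\lambda(w)$ are log-convex, and $w\mapsto\mathcal{B}_\lambda(w)$ is strictly decreasing — and to conduct the whole analysis through the scalar function $\mu(\zeta):=m(-\zeta)$ and the identity $f(\zeta)=\gamma\int\frac{x}{1+\mu(\zeta)x}\,dH$ obtained from \eqref{defn:mz} at $z=-\zeta$. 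The recurring elementary fact will be that $\mu$ is completely monotone on $(0,\infty)$, being the Stieltjes transform at $-\zeta$ of a measure supported on $[0,\infty)$ (cf.\ Remark~\ref{rem:m_is_free_conv}).

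\emph{Existence and uniqueness (i).} A sum of log-convex functions is log-convex, so $R$ is log-convex on $(0,1)$ and $\log R$ is convex; hence $R$ admits at most one local, therefore at most one global, minimizer. For existence I would verify that $R$ extends continuously up to $w=1$ and has a finite limit as $w\downarrow0$ (both immediate from \eqref{eq:rf_answer}, since $f$ is bounded and increasing with $f(\zeta)\uparrow\gamma\int x\,dH$ as $\zeta\to\infty$) and that $R'(0^+)<0$; the last point follows because, near $w=0$, both $c(w)\mathcal{V}_\lambda(w)$ and $\mathcal{B}_\lambda(w)$ are strictly decreasing, which one reads off by expanding $f$ at $\zeta=\infty$ in the explicit expression for $c\mathcal{V}_\lambda$. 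A convex $\log R$ with negative left derivative is either strictly decreasing on $(0,1]$, with unique minimizer $w^\star=1$, or unimodal, with a unique interior minimizer; in either case $w^\star$ is unique.

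\emph{The bound $w^\star\in[0.5,1]$ (ii).} Convexity of $\log R$ reduces this to $R'(1/2)\le0$, and since $\mathcal{B}_\lambda'(1/2)<0$ it suffices to prove $(c\mathcal{V}_\lambda)'(1/2)\le0$. Inserting $c(1/2)=2/3$, $c'(1/2)=-8/9$ and $c(w)\mathcal{V}_\lambda(w)=\frac{w^2+(1-w)^2}{2\lambda(1-w)}\big(f(\lambda/w)-f(\lambda/(2-w))\big)$, this is equivalent to the one-variable inequality $f(2\lambda)-f(2\lambda/3)\le 2\lambda f'(2\lambda)+\tfrac{2\lambda}{9}f'(2\lambda/3)$ for every $\lambda>0$. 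The first ingredient is that $f$ is concave on $(0,\infty)$: differentiating $\mu^{-1}-\zeta=\gamma\int\frac{x}{1+\mu x}\,dH$ twice and eliminating the integrals gives $f''(\zeta)=\mu(\zeta)^{-3}\big(2\mu'(\zeta)^2-\mu(\zeta)\mu''(\zeta)\big)$, which is $\le 0$ by Cauchy–Schwarz applied to the completely monotone $\mu$. Concavity places $\mathcal{V}_\lambda(1/2)$ between $f'(2\lambda)$ and $f'(2\lambda/3)$, but I expect \emph{this} to be the main obstacle: concavity alone is not sharp enough, so I would extract from the same fixed-point identity a finer two-point comparison of $f'(2\lambda/3)$ with $f'(2\lambda)$ — controlling $-\mu'/\mu^2$ at both points through $\mu$ — mirroring the computation behind the isotropic case, Theorem~\ref{thm:isotropic_mixing}. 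Granting the scalar inequality, $R'(1/2)\le0$ and hence $w^\star\ge 1/2$; the constraint $w^\star\le1$ is the domain.

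\emph{Limits and monotonicity in \texttt{SNR} (iii).} As $\lambda\downarrow0$ I would use $f(\lambda/w)-f(\lambda/(2-w))=f'(0)\big(\lambda/w-\lambda/(2-w)\big)+o(\lambda)$ and $\mathcal{B}_\lambda\to\mathcal{B}$ of \eqref{eq:interpolator_answer}, uniformly on compact subsets of $(0,1)$, so that $R(\cdot;\lambda)\to\sigma^2 f'(0)c(\cdot)+b_\star\mathcal{B}$, whose unique minimizer is that of $c$, namely $\phi^{-1}$; log-convexity supplies the equicoercivity needed to pass the minimizers to the limit, giving $w^\star(\lambda)\to\phi^{-1}$. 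As $\lambda\uparrow\infty$, the fixed-point equation yields $f(\zeta)=\gamma\int x\,dH-\gamma\big(\int x^2\,dH\big)/\zeta+o(1/\zeta)$, hence $b_\star\mathcal{B}_\lambda(w)=b_\star\int x\,dH-2b_\star\big(\int x^2\,dH\big)w/\lambda+o(1/\lambda)$ is strictly decreasing in $w$ at order $\Theta(1/\lambda)$ while $\sigma^2 c(w)\mathcal{V}_\lambda(w)=O(1/\lambda^2)$; therefore for all large $\lambda$, $R'(w)<0$ throughout $[1/2,1]$ and $w^\star=1$, so $w^\star\to1$. Finally, writing $R=\sigma^2\big(c(w)\mathcal{V}_\lambda(w)+\texttt{SNR}\cdot\mathcal{B}_\lambda(w)\big)$, I would differentiate the stationarity condition in \texttt{SNR} and combine $\mathcal{B}_\lambda'(w^\star)<0$ with positivity of the risk's second derivative at an interior minimum to obtain $dw^\star/d\,\texttt{SNR}\ge0$, the monotonicity persisting through the boundary case $w^\star=1$; this establishes (iii).
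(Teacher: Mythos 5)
Your proposal correctly identifies the skeleton (log-convexity, endpoint analysis in $\lambda$, implicit function theorem for \texttt{SNR}), but it has two genuine gaps, and both are exactly the points where the paper invokes a tool you never introduce: the Stieltjes representation $f(z)=a+\int\frac{z}{z+t}\,\mu(dt)$ of Lemma~\ref{lem:f_mu_representation}. First, the crux of part (ii) is left unproven by your own admission. Your reduction of $w^\star\ge 1/2$ to the scalar inequality $f(2\lambda)-f(2\lambda/3)\le 2\lambda f'(2\lambda)+\tfrac{2\lambda}{9}f'(2\lambda/3)$ is correct, and you are right that concavity of $f$ is too weak, but the ``finer two-point comparison'' you defer to is precisely the missing step, so $w^\star\in[0.5,1]$ is not established. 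The paper instead writes the risk as explicit mixtures, $c(w)\mathcal{V}_\lambda=(w^2+(1-w)^2)\int\frac{t\,\gamma}{(\lambda+wt)(\lambda+(2-w)t)}\,\mu(dt)$ and $\gamma\mathcal{B}_\lambda=a+\int\frac{\lambda^2}{(\lambda+wt)^2}\,\mu(dt)$, after which the bound is immediate: $w^2+(1-w)^2$ is minimized at $1/2$, the kernel $\bigl((\lambda+wt)(\lambda+(2-w)t)\bigr)^{-1}$ is decreasing in $w$ on $[0,1]$, and $\mathcal{B}_\lambda$ is decreasing, so the risk at any $w<1/2$ exceeds the risk at $1/2$. (In fact your reduced inequality is true and follows pointwise in $t$ from the same representation by AM--GM, $\frac{2}{3ab}\le\frac{1}{a^2}+\frac{1}{9b^2}$ with $a=2\lambda+t$, $b=2\lambda/3+t$; but without the representation your proposal contains no proof of it.)

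Second, your uniqueness argument in (i) is flawed as stated: plain log-convexity does not give ``at most one local, therefore at most one global, minimizer''---a convex $\log R$ may be constant on an interval of minimizers, and the strict decrease of $\mathcal{B}_\lambda$ does not exclude this, since on such an interval the variance term would simply be strictly increasing, which convexity permits. The paper closes this by proving \emph{strict} log-convexity of $c(w)\mathcal{V}_\lambda$: from the mixture representation, strictness can only fail if $\mu$ is degenerate at zero, which would force $m(z)=(a-z)^{-1}$, contradicting the fact that $m$ is the companion Stieltjes transform of the free convolution of $H$ with the non-degenerate Marchenko--Pastur law (Remark~\ref{rem:m_is_free_conv}). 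You would need an analogous strictness statement before concluding uniqueness (and before applying the implicit function theorem, which uses $\partial_{ww}\mathcal{R}>0$ at the minimizer). The remaining parts of your plan---the $\lambda\downarrow 0$ limit where the risk reduces to a $w$-free bias plus a constant multiple of $c(w)$, hence $w^\star\to\varphi^{-1}$; the $\lambda\uparrow\infty$ rate comparison ($1/\lambda$ bias decrease versus $1/\lambda^2$ variance), which is if anything more explicit than the paper's $\liminf$ argument; and the implicit-function-theorem monotonicity in \texttt{SNR}---do track the paper and would go through once the two gaps above are filled.
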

We provide some context for the random effects model.
Our main Theorem \ref{thm:anisotropic_restate} makes it clear that in presence of general covariance matrices, the variance  in the limiting risk (first term in RHS of \eqref{eq:anisotropic}) is determined by the spectrum of $\bSigma$, as captured through the limiting spectral distribution $H$. If the signal were deterministic, without additional assumptions, the bias (second term in the RHS of \eqref{eq:anisotropic}) would depend on how $\bbeta$ aligns with the eigenbasis of $\bSigma$. This is captured through the measure $G$. But in a random-effects setting, $\bbeta$ lies in a generic position relative to \(\bSigma\), which simplifies both the limiting bias and variance making \eqref{eq:anisotropic} tractable to analyze as a function of $w$. Proofs of Propositions \ref{thm:anisotropic_structured} and \ref{thm:structured_mixing} are available in Appendix \ref{appdx:proof_structured}.

A recent work~\cite{dohmatob2024strong} studies model collapse in Gaussian random-effects models but they consider a setting where the real and synthetic data are generated from different distributions, subsequently pooling these datasets and studying when model collapse occurs. Crucially, \cite{dohmatob2024strong} do not utilize synthetic data generated from a fitted model, unlike in our setting. This leads to fundamental differences in our framework compared to theirs. Additionally, we consider a broad class of random-effects models, without requiring Gaussianity on the signals, and additionally provide guarantees for the optimal mixing ratio.

\begin{remark}
  \label{rem:need_g_equal_h}
  While Propositions \ref{thm:anisotropic_structured} and \ref{thm:structured_mixing} are stated under the random effects assumption, the conclusions continue to hold for a broader class of parameters $(\bbeta,\bSigma)$. If the probability measures $\hat G_p$ and $\hat H_p$  from \eqref{defn:hn_gn} converge weakly to the same distribution, i.e., $G=H$, we have the same conclusions. In fact, we prove in Lemma \ref{lem:same-weak-limits} that the random-effects assumption can be seen as  a special case of $G=H$ . Other natural examples where $G=H$ include the isotropic covariance  case, $\bSigma= \alpha \boldsymbol I$, or where $\bbeta$ is drawn uniformly at random from a $p$-dimensional sphere, or $\bSigma$ is drawn from an orthogonally invariant ensemble.
\end{remark}

\subsubsection{Spiked covariance model}\label{sec:spike}

For our second application, we consider a popular class of covariance matrices -- the spiked covariance model~\cite{birnbaum2013minimax,johnstone2001distribution,johnstone2020testing}. In the past two decades, this covariance class has seen exciting applications in population genetics~\cite{patterson2006population,price2006principal}, finance~\cite{knight2005linear,ledoit2022power}, and signal processing~\cite{johnstone2009consistency,wang2024nonlinear}, among others.

Formally, we assume $\bSigma = \boldsymbol I + s \boldsymbol v \boldsymbol v^\top$ for some $v \in \mathbb{R}^p$, with $\|\boldsymbol v\|_2=1$, and $s>0$. The results below  extend naturally to spiked models with multiple but finitely many spikes, but for simplicity, we study the problem for the case of a single spike. The limiting risk takes the following form.

\begin{prop}\label{prop:spike}
  In the setting of Section \ref{sec:formulate}, assume that
  $\bSigma = \boldsymbol I + s \boldsymbol v \boldsymbol v^\top$ and the signal takes the form $\bbeta=\theta \boldsymbol v+ \sqrt{1-\theta^2}\boldsymbol v^{\perp}$, while satisfying \eqref{eq:b_star}, with $\boldsymbol v^\top \boldsymbol v^\perp =0$ and $\|\boldsymbol v^\perp\|_2=1$. If $\theta\equiv\theta(n) \rightarrow \theta_\star$, then we have almost surely over the randomness of covariates that  \eqref{eq:anisotropic} holds
  with $\mathcal{V}_{\lambda}$ same as in Theorem~\ref{thm:isotropic} and
  \begin{equation*}
    \mathcal{B}_{\lambda}= \left( \theta_\star^2 \frac{1+s}{(1 + m_1 (1+s))^2} + (1-\theta_\star^2) \frac{1}{(1+m_1)^2}\right)  \left(1 - \gamma \frac{m_1^2}{(1+m_1)^2} \right)^{-1}.
  \end{equation*}
  Further, the limiting risk is uniquely minimized at a $w^\star$ satisfying the conclusions of Theorem~\ref{thm:isotropic_mixing}.
\end{prop}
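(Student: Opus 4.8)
The plan is to obtain the risk formula as a specialization of Theorem~\ref{thm:anisotropic_restate}, and then to reduce the optimal-mixing statements to the isotropic analysis of Theorems~\ref{thm:isotropic} and~\ref{thm:isotropic_mixing}. First I would identify the limiting measures $H,G$ from \eqref{defn:hn_gn}. Since $\bSigma = I + s v v^\top$ has eigenvalue $1+s$ with multiplicity one and eigenvalue $1$ with multiplicity $p-1$, we get $\hat H_p = \tfrac{p-1}{p}\delta_1 + \tfrac1p\delta_{1+s}$, which converges weakly to $\delta_1=:H$ — the lone spike is spectrally negligible — so the fixed-point equation \eqref{defn:mz} collapses to \eqref{eq:mz_degen} with $\alpha=1$, and $m(\cdot)$ (hence $m_1=m(-\lambda/w)$ and $m_2=m(-\lambda/(2-w))$) is exactly the object appearing in Theorem~\ref{thm:isotropic} at $\alpha=1$. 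Writing $\bbeta$ in the eigenbasis of $\bSigma$, its component along the spike direction $v$ carries normalized squared mass $\theta^2$ while the remainder lies in the eigenvalue-$1$ eigenspace, so $\hat G_p = \theta^2\delta_{1+s} + (1-\theta^2)\delta_1$, which converges weakly to $G:=\theta_\star^2\delta_{1+s} + (1-\theta_\star^2)\delta_1$ using the hypothesis $\theta\to\theta_\star$. Substituting these $H,G$ into \eqref{defn:v_lambda} gives $\mathcal{V}_\lambda = \tfrac{w(2-w)}{2(1-w)}\tfrac{\gamma}{\lambda}\big(\tfrac{1}{1+m_1}-\tfrac{1}{1+m_2}\big)$, which is precisely $\mathcal{V}_\lambda$ from Theorem~\ref{thm:isotropic} at $\alpha=1$; and substituting into \eqref{defn:b_lambda} yields the denominator $\big(1-\gamma m_1^2/(1+m_1)^2\big)^{-1}$ and numerator $\theta_\star^2\,(1+s)/(1+(1+s)m_1)^2 + (1-\theta_\star^2)/(1+m_1)^2$, i.e.\ exactly the stated $\mathcal{B}_\lambda$. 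This establishes the risk formula.

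For uniqueness and shape of the minimizer, note that since $\mathcal{V}_\lambda$ is literally the isotropic ($\alpha=1$) variance term, the proof of Theorem~\ref{thm:isotropic} already shows $w\mapsto c(w)\mathcal{V}_\lambda$ is log-convex. Decompose the bias as $\mathcal{B}_\lambda = \theta_\star^2\,\mathcal{B}_\lambda^{\mathrm{sp}} + (1-\theta_\star^2)\,\mathcal{B}_\lambda^{\mathrm{bk}}$, where $\mathcal{B}_\lambda^{\mathrm{bk}} = \frac{1/(1+m_1)^2}{1-\gamma m_1^2/(1+m_1)^2}$ and $\mathcal{B}_\lambda^{\mathrm{sp}} = \frac{(1+s)/(1+(1+s)m_1)^2}{1-\gamma m_1^2/(1+m_1)^2}$. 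The term $\mathcal{B}_\lambda^{\mathrm{bk}}$ is exactly the isotropic-$\alpha=1$ bias, already known from Theorem~\ref{thm:isotropic} to be decreasing and log-convex in $w$. If I show the same for $\mathcal{B}_\lambda^{\mathrm{sp}}$, then since a sum of log-convex functions is log-convex (by H\"older), $\mathcal{B}_\lambda$ is decreasing and log-convex, hence so is the total limiting risk $\sigma^2 c(w)\mathcal{V}_\lambda + b_\star\mathcal{B}_\lambda$, which therefore has a unique minimizer $w^\star(\lambda)$.

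It remains to verify the three properties of $w^\star(\lambda)$ and its monotonicity in \texttt{SNR}. The proof of Theorem~\ref{thm:isotropic_mixing} used only log-convexity of the risk, monotonicity of the bias in $w$, and the $\lambda\downarrow0$ / $\lambda\uparrow\infty$ behavior of $\mathcal{V}_\lambda,\mathcal{B}_\lambda$ — all of which carry over verbatim. In particular, as $\lambda\downarrow0$ one has $\mathcal{V}_\lambda\to\mathcal{V}$ and $\mathcal{B}_\lambda\to\mathcal{B}$ of \eqref{eq:interpolator_answer} for the present $(H,G)$, and since this interpolator bias is independent of $w$, the minimizer tends to $\argmin_w c(w)=\phi^{-1}$; as $\lambda\uparrow\infty$, $\mathcal{V}_\lambda\to0$ while $\mathcal{B}_\lambda$ is strictly decreasing in $w$, forcing $w^\star(\lambda)\to1$; continuity of $\lambda\mapsto w^\star(\lambda)$ follows from joint continuity and uniqueness; and the sign computations of the risk derivative at $w=\tfrac12$ and in the \texttt{SNR} direction are identical to those in Theorem~\ref{thm:isotropic_mixing} (the added spike term only reinforces the sign, being itself decreasing in $w$ and making the marginal benefit of larger $w$ increase with \texttt{SNR}), giving $w^\star(\lambda)\in[0.5,1]$ and $w^\star$ increasing in \texttt{SNR}.

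The main obstacle is the single genuinely new computation: showing $w\mapsto\mathcal{B}_\lambda^{\mathrm{sp}}$ is decreasing and log-convex. Unlike $\mathcal{B}_\lambda^{\mathrm{bk}}$, this is \emph{not} an isotropic bias for any single $\alpha$, since it pairs the $\alpha=1$ companion transform $m_1=m(-\lambda/w)$ in the denominator with a $(1+s)$-scaled numerator. I would attack it using the identity $1-\gamma m_1^2/(1+m_1)^2 = m_1^2/m_1'$ — which follows by differentiating \eqref{eq:mz_degen} at $\alpha=1$ — to rewrite $\mathcal{B}_\lambda^{\mathrm{sp}} = \frac{(1+s)\,m_1'}{m_1^2\,(1+(1+s)m_1)^2}$ with $m_1'=m'(-\lambda/w)$, and then exploit that $m$ is positive, strictly increasing and convex on $(-\infty,0)$ while $w\mapsto -\lambda/w$ is an increasing concave reparametrization. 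The logarithmic derivative of $\mathcal{B}_\lambda^{\mathrm{sp}}$ in $w$ can then be written in terms of $m_1,m_1',m_1''$ and shown to be monotone, mirroring the structure of the corresponding step in the proof of Theorem~\ref{thm:isotropic}.
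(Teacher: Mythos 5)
Your derivation of the risk formula is correct and matches the paper: identify $H=\delta_1$ (the single spike is spectrally negligible), $G=\theta_\star^2\delta_{1+s}+(1-\theta_\star^2)\delta_1$, and specialize Theorem~\ref{thm:anisotropic_restate}, which immediately gives the isotropic ($\alpha=1$) variance $c(w)\mathcal{V}_\lambda$ and the stated $\mathcal{B}_\lambda$. The overall plan for the mixing statements is also the paper's plan. However, the one step you yourself identify as ``the single genuinely new computation'' --- that the spike contribution $\mathcal{B}_\lambda^{\mathrm{sp}}=\frac{(1+s)/(1+(1+s)m_1)^2}{1-\gamma m_1^2/(1+m_1)^2}$ is decreasing and log-convex in $w$ --- is not actually proved. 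You rewrite it as $(1+s)m_1'/\bigl(m_1^2(1+(1+s)m_1)^2\bigr)$ (the identity $1-\gamma m_1^2/(1+m_1)^2=m_1^2/m_1'$ is correct) and then say the logarithmic derivative ``can be shown to be monotone'' using properties of $m$, $m_1'$, $m_1''$ and the concavity of $w\mapsto-\lambda/w$. That is a plan, not an argument: monotonicity of the log-derivative here involves second-derivative information about $m$ composed with a nonlinear reparametrization, and convexity is not preserved under composition of a convex increasing function with a concave one, so nothing in what you wrote forces the sign. Since uniqueness of $w^\star$, the bound $w^\star\ge 1/2$, the limit $w^\star\to 1$ as $\lambda\uparrow\infty$, and the monotonicity in \texttt{SNR} all lean on the bias being decreasing and (log-)convex in $w$, this gap is load-bearing.

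The paper closes it with a simpler device than the one you propose: factor
\begin{equation*}
  \mathcal{B}_\lambda^{\mathrm{sp}}
  =(1+s)\,\phi_2(m_1)\,\phi_1(m_1),\qquad
  \phi_1(m_1)=\frac{1}{(1+m_1)^2-\gamma m_1^2},\qquad
  \phi_2(m_1)=\left(\frac{1+m_1}{1+(1+s)m_1}\right)^{2},
\end{equation*}
so that the already-analyzed isotropic factor $\phi_1$ (known from Theorem~\ref{thm:isotropic} to be decreasing and log-convex in $w$) carries most of the structure, and only the elementary M\"obius-type ratio $\phi_2$ --- which is decreasing in $m_1$, hence in $w$ --- requires a log-convexity check; products of decreasing log-convex functions are decreasing and log-convex, and the remaining conclusions then follow exactly as in your outline via \eqref{eq:v_half}, \eqref{eq:mu_v} and the $\lambda\uparrow\infty$ argument. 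If you prefer your route through $m_1'$ and $m_1''$, you must actually carry out that computation (or supply an integral representation for the spike bias analogous to Lemma~\ref{lem:f_mu_representation}); also note that uniqueness of the minimizer needs strict (log-)convexity somewhere, which in the paper comes from the variance term, so you should say so explicitly rather than relying on plain log-convexity of each summand.
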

Note that, if $\theta_\star  \neq 0$, then $G \neq H$. Hence, this spiked matrix case differs fundamentally from the settings discussed in Remark \ref{rem:need_g_equal_h} and  the proof of Proposition~\ref{prop:spike} does not follow directly from the proof of Proposition~\ref{thm:structured_mixing}. Proof of Proposition \ref{prop:spike} is available in Appendix \ref{appdx:proof_structured}.

\section{Simulations}\label{sec:sims}

We conduct numerical experiments to complement our theoretical findings. For all plots, we generate \(\bX=\bZ\bSigma^{1/2}\), where \(Z_{ij}\sim\mathcal N(0,1)\) and vary $\bSigma$ across different plots. We display both the theoretical risk predicted by our formulae (solid lines) and corresponding empirical estimates ($\times$ markers). For the first two plots, the empirical risks are calculated by averaging over $100$ runs.

\textbf{Risk of min-$\ell_2$-norm Interpolator}
In Figure \ref{fig:interpolator}, we plot the asymptotic generalization error of the min-$\ell_2$-norm interpolator as a function of mixing weight $w$ for two different classes of covariance matrices: i. \(\bSigma=\boldsymbol I\) (Panel $(a)$) and ii. \(\bSigma_{ij}=2^{-|i-j|}\) (Panel $(b)$), corresponding to the correlation matrix of an AR$(1)$ process. We vary $\gamma= 1.5, 2, 3$. The choice of the remaining parameters are as follows: sample size $n=200$, number of features $p= \gamma n$, and number of iterations $t=5$. To generate $\bbeta$, we first simulate $\tilde \bbeta$ with \(\tilde \beta_i \stackrel{\text{i.i.d.}}{\sim}\mathrm{Bern}(0.1)\). Set $\bbeta= \tilde \bbeta/ \|\tilde \bbeta\|_2$ which yields $b_\star=1$.

We observe that the empirical risk matches with its theoretical counterpart even for moderate sample size. Further, the generalization error is always minimized at \(w^{\star}=\varphi^{-1}\approx 0.618\) (dashed vertical line), consistent with Theorem~\ref{thm:interpolator}. Panel $(c)$ shows the risk of the min-$\ell_2$-norm interpolator as a function of iteration $t$. We have computed the risk at optimal mixing weight $w = \varphi^{-1}$ and $\bSigma= \boldsymbol I$. We observe that both theoretical and empirical risks stabilize after only a few iterations.

\textbf{Optimal weights in ridge regression}
In Figure~\ref{fig:optimal_mixing}, we plot the optimal weight \(w^{\star}\) as a function of \(\lambda\).
We set $n=200, p = \gamma n$, and $t = 5$, and vary $\gamma =1.2,2,4$.  Figure \ref{fig:optimal_mixing} $(a)$ considers isotropic covariance $\bSigma = \boldsymbol I$ with high noise variance $\sigma^2 = 64$. The plot demonstrates that for low \texttt{SNR}, $w^{\star}$ can be arbitrarily close to $0.5$. Further, $w^\star(\lambda)$ is neither monotone, nor convex as a function of $\lambda$ with $w^\star(\lambda) \rightarrow \varphi^{-1}$ as $\lambda \rightarrow 0^+$.

Figure \ref{fig:optimal_mixing} $(b)$  corresponds to the spiked covariance model $\bSigma = \boldsymbol I + 5 \boldsymbol e_1 \boldsymbol e_1^\top$, where $\boldsymbol e_1, \ldots, \boldsymbol e_p$ denote canonical basis vectors.
We set $\beta_1 = 0.5$ and $\bbeta_{2:p} =\sqrt{1-0.5^2} \times  \tilde \bbeta / \|\tilde \bbeta\|$ where $\tilde \beta_i \stackrel{\text{i.i.d.}}{\sim} \mathrm{Bern}(0.25)$.
This implies that $\|\bbeta\|^2 = 1$ and $\theta_\star=0.5$, where $\theta_\star$ defined as in Proposition~\ref{prop:spike}. The plot shows that for large regularization parameter $\lambda$, we have $w^\star=1$, consistent with Proposition~\ref{prop:spike}.
Figure \ref{fig:optimal_mixing} $(c)$  plots the risk of $\hat\bbeta_{t,\lambda}$ with $\sigma^2=1$, $w=\varphi^{-1}$. We set $\bSigma$ to be equicorrelated, i.e.,
\[\bSigma=\left(1-\frac{\rho}{\sqrt{p}}\right) \boldsymbol I + \frac{\rho}{\sqrt{p}}\,\mathbf{1}\mathbf{1}^\top, \qquad \text{with } \rho = 1/2,\]
and \(\beta_i \stackrel{\text{i.i.d.}}{\sim}\mathcal N(0,p^{-1})\). The empirical risk is computed with $\sigma^2 = 1, t=10$, \(n=400\). Note that \(\bSigma\) does not satisfy bounded eigenvalue condition Assumption \ref{assn:combined}, as its largest eigenvalue $\ge C\sqrt{p}$ for some $C>0$. Still, the generalization error is accurately predicted by Proposition~\ref{thm:anisotropic_structured} as long as $\bbeta$ lies in a generic position relative to $\bSigma$. This demonstrates the robustness of our theoretical findings. Figure \ref{fig:other_settings} (a) plots theoretical values of $w^\star$ as a function of $\gamma$ for different values of $\lambda$ (we fixed $\texttt{SNR}=1/4$). As expected, when $\lambda$ is close to $0$, $w^\star \approx \varphi^{-1}$. However, as $\lambda$ increases, the dependence on $\gamma$ becomes more visible. This dependence of the optimal weight on $\gamma$ is important to note, since it illustrates the importance of studying our high-dimensional framework.

\begin{figure}[htb]
  \centering
  \includegraphics[width=\textwidth]{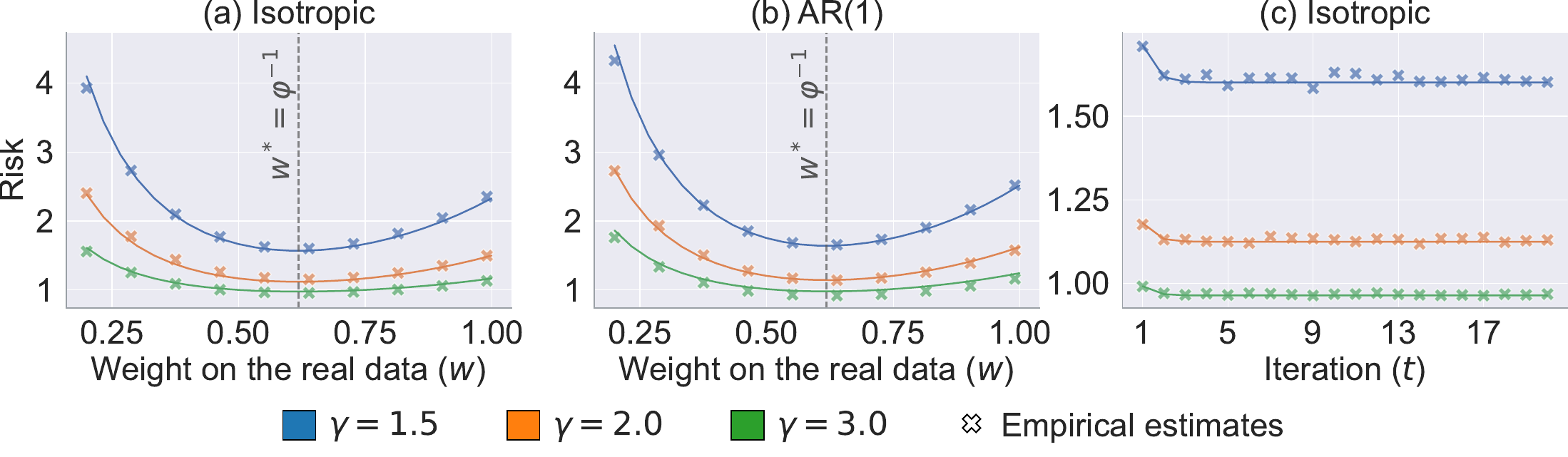}
  \caption{Generalization error of min-$\ell_2$-norm interpolator as a function of weight $w$ (Panel $(a)$ and $(b)$) and iterations $t$ (Panel $(c)$) for different values of $\gamma$. Panel $(a)$ considers isotropic covariance $\bSigma= \boldsymbol I$ and panel $(b)$ considers anisotropic $\bSigma$ with $\bSigma_{ij}= 2^{-|i-j|}$, which corresponds to covariance matrix of AR$(1)$ model. In panes $(a)$ and $(b)$, the risk is minimized at $w^\star= 1/\varphi$, as proved by Theorem~\ref{thm:interpolator}. Panel $(c)$ shows that both empirical and theoretical risks stabilize in a few iterations.
    \label{fig:interpolator}
  }
\end{figure}

\begin{figure}[htb]
  \centering
  \includegraphics[width=\textwidth]{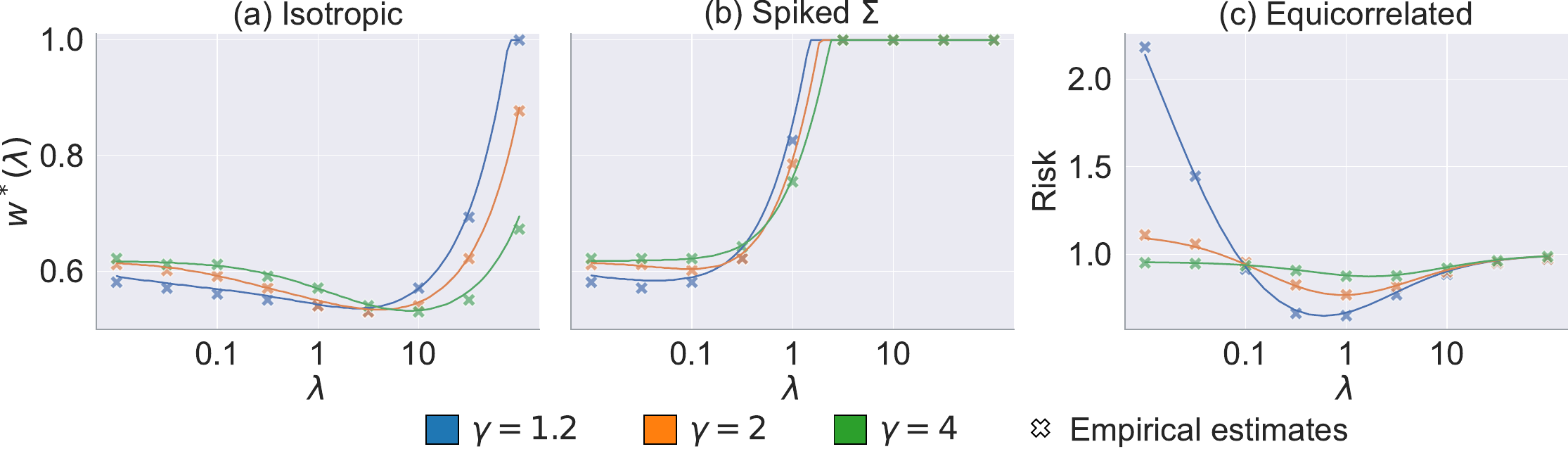}
  \caption{In $(a)$ and $(b)$, we plot the optimal mixing weight $w^{\star}$ as a function of $\lambda$ for different values of $\gamma$ and two classes of covariance matrices: Panel (a) considers $\bSigma= \boldsymbol I$ with high noise $\sigma^2= 64$, demonstrating $w^\star$ can be close to $0.5$ for low \texttt{SNR}.
    Panel $(b)$ plots $w^{\star}$ for the spiked covariance matrix showing $w^\star =1$ for large $\lambda$. Panel $(c)$ plots the generalization error as a function of $\lambda$ for $\bSigma$ equicorrelated matrix. Here, empirical risks align with theoretical predictions given by Proposition~\ref{thm:anisotropic_structured}, though $\bSigma$ violates Assumption~\ref{assn:combined}, illustrating the robustness of our results.
  }
  \label{fig:optimal_mixing}
\end{figure}
\begin{figure}[t]
  \centering
  \includegraphics[width=\textwidth]{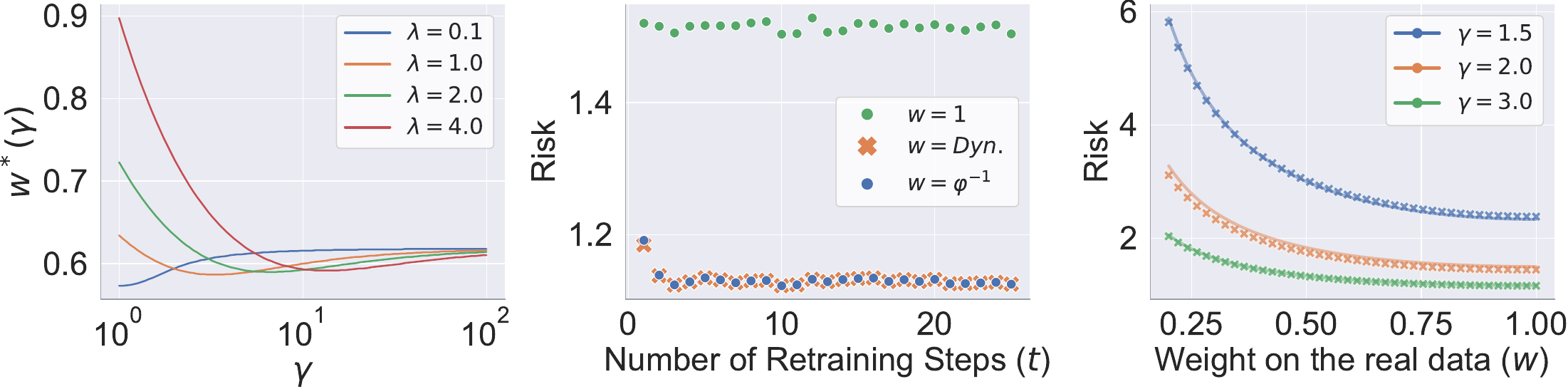}
  \caption{In $(a)$, we plot the optimal $w^\star$ for ridge regression as a function of $\gamma$. In  $(b)$, we plot the empirical risk for the min-norm interpolator to demonstrate the dynamic mixing scenario described in Section \ref{sec:interpol}. Note that, the risk under dynamic mixing and $w=\varphi^{-1}$ heavily overlap for $t = 2$ onward. In $(c)$, we plot the empirical (the points) \& theoretical (the solid lines) risks for the no fresh data augmentation case described by \eqref{eq:no_new_define} and Theorem \ref{thm:no_new_label}. Note the risk is optimized at $w^\star=1$. Throughout $\bSigma=\boldsymbol{I}$.}
  \label{fig:other_settings}
\end{figure}
\section{Beyond fresh data augmentation}\label{sec:extra}
In this section, we provide rigorous evidence on when and how model collapse can be prevented beyond the fresh data augmentation framework~\eqref{eq:define_estimator}. In the interest of space, we focus on the min-$\ell_2$-norm interpolator, but our results can be extended for ridge regression.

\textbf{No fresh real data} First, we consider the setup where fresh real data is unavailable beyond the original data $(\by, \bX)$. Therefore, one augments the initial labels $\by$ with synthetic labels as follows
\begin{equation}\label{eq:no_new_define}
  \hat\bbeta^{(r)}_{t} := (\bX^\top \bX)^{\dagger} \bX^\top \big( w \by + (1-w)\tilde \by_{t} \big) \,\, \text{where} \,\, \tilde{\by}_t= \bX \hat\bbeta^{(r)}_{t-1}+ \tilde {\boldsymbol{\varepsilon}}_t.
\end{equation}
We assume  $\hat{\bbeta}^{(r)}_0$ is the  min-$\ell_2$-norm interpolator calculated using the original $(\bX,\by)$.
\begin{thm}\label{thm:no_new_label}
  The risk of
  $\hat\bbeta^{(r)}_t$, defined by ~\eqref{eq:no_new_define}, satisfies the following. For any $w \in (0,1)$, we have almost surely over the randomness in the covariates,
  \begin{align}
    \lim_{n\to \infty} \lim_{t\to\infty} R(\hat \bbeta^{(r)}_{t}; \bbeta) = \sigma^2 \frac{1}{w(2-w)} \mv + b_{\star} \mb
  \end{align}
  with $\mv, \mb$ defined as in~\eqref{eq:interpolator_answer}. The limiting risk is minimized at $w^\star = 1$.
\end{thm}
In Theorem \ref{thm:no_new_label}, $\mv,\mb$ do not depend on the weight, thus the asymptotic risk is monotonically decreasing in $w$, yielding $w^\star=1$. Thus, the asymptotic risk is minimized when the weight on the real data is $1$, suggesting adding synthetic data following this mechanism fails to improve performance.
This can be seen empirically in Figure \ref{fig:other_settings} (c), where we plot the empirical risk as well as the theoretical risk. Here $n=500$, $\texttt{SNR}=1$ and we consider $50$ iterations.

\textbf{Time-varying covariates} Next, we consider the setup where different covariates are used to train different iterations of the model. Formally, for $t \ge 1$, we have real labels: \( \by_{t} \gets \bX_t \bbeta + {\boldsymbol{\varepsilon}}_t\) as well as synthetic labels: \(\tilde \by_{t} \gets \bX_t \hat\bbeta_{t-1} + \tilde{\boldsymbol{\varepsilon}}_t\). The estimator for $t$-th iteration is defined as
\begin{equation}\label{eq:time_vary_define}
  \hat\bbeta^{(v)}_{t} := (\bX^\top_t\bX_t)^{\dagger} \bX^\top_t \big(w \by_t + (1-w)  \tilde\by_{t} \big).
\end{equation}
We provide the precise generalization error of $\hat\bbeta^{(v)}_{t}$ in Theorem \ref{thm:time_vary} (for simplicity we assume $\bSigma=\boldsymbol I$, see also Remark \ref{rmk:limit_order} regarding limit orders) and characterize the unique optimal weight at which the risk is minimized. Unlike Theorem~\ref{thm:interpolator}, both the bias and variance depend on $w$.

\textbf{Time-varying covariates with fresh labels in a fraction} Finally, we consider a setting where at each iteration, fresh labels are available only for a fraction of covariates. Since all covariates do not both see real and synthetic labels, instead of our mixing framework from \eqref{eq:define_estimator}, we consider pooling the datasets with real and synthetic labels.
Formally, for $t \geq 1$, we observe real responses: \( \by_{t} \gets \bX_t \bbeta + {\boldsymbol{\varepsilon}}_t\) and synthetic responses: \(\tilde \by_{t} \gets \tilde\bX_t \hat\bbeta_{t-1} + \tilde{\boldsymbol{\varepsilon}}_t\). Define
\[
  \hat\bbeta^{(a)}_{t} \gets (\bX^\top_t\bX_t+ \tilde\bX^\top_t \tilde\bX_t)^{\dagger} \big(\bX^\top_t \by_t + \tilde\bX^\top_t \tilde\by_{t} \big).
\]
In Appendix \ref{subsec:online_concat} (specifically Theorem \ref{thm:pooled}), we show that there exist ranges of values of $\gamma$ where preventing model collapse is not possible in this setting.

\section{Discussion}\label{sec:discussion}
We provide a rigorous analysis of model collapse under overparametrization for linear models. As an overarching theme, we demonstrate how mixing real-data with synthetic outputs mitigates model collapse, and identify optimal mixing ratios that minimize prediction error in this context. As a promising next direction,
understanding how model collapse affects interpolators in other $\ell_p$ geometries would be crucial. Such interpolators arise as implicit regularized limits of popular algorithms \cite{gunasekar2018implicit} and typically require techniques beyond random matrix theory --- a critical tool employed in our analysis.
Additionally, approaches that extend linear arguments to non-linear high-dimensional problems (c.f., \cite{sur2019modern,hu2022universality}) should enable our qualitative conclusions to generalize to structured non-linear models. This includes generalized linear models, single-index models, and even non-parametric models through parametric-to-non-parametric equivalence techniques introduced in \cite{lahiry2023universality}.
Finally, extending our results to more complex architectures remains an important challenge. One promising approach involves studying multi-index models and their sequential variants, which capture classes of neural networks and transformer architectures \cite{troiani2024fundamental,troiani2025fundamental,cui2025high}. Investigating model collapse and mitigation strategies in these contexts presents an exciting avenue for future research.

\bibliography{ref}
\bibliographystyle{iclr2026_conference}

\newpage

\appendix

{\textbf{Supplementary Material}}

\section{Proof of Theorem~\ref{thm:interpolator}, Theorem~\ref{thm:isotropic}, and Theorem \ref{thm:anisotropic_restate}}\label{appdx:main_proofs}

\newcommand{\xtx}{\bX^\top \bX}
\newcommand{\xtxdag}{(\xtx)^{\dagger}}

\subsection{Notations}
\label{subsec:proof_notations}
Our objective is to derive the asymptotic generalization error of $\hat\bbeta_{t,\lambda}$ given by Algorithm~\ref{algo:method_ridge} and $\hat\bbeta_t$ defined by~\eqref{eq:define_interpolator}. We define the scaled sample covariance matrix and its resolvent as
\begin{align*}
  \bM:=\xtx, \quad \quad     \bA_\lambda := (\bM+ n\lambda \boldsymbol I)^{-1}.
\end{align*}
Note that $\bA_{\lambda}$ and $\bM$ are simultaneously diagonalizable and thus commute with each other. If $\lambda=0$, we use the notation $\bA_0= (\bX^\top \bX)^\dagger$.
The mixing proportion of the real-data is denoted by $w$ and we define $\tilde w:=1-w$. For two sequence of random variables $u_n, v_n$, we use the notation $u_n \sim v_n$ if $u_n/v_n \xrightarrow{P} 1$ as $n \rightarrow \infty$.

\subsection{Variance}
\label{subsec:var_proof}
We use the notations of Subsection \ref{subsec:proof_notations} throughout the proof. First, we consider the case $\lambda>0$.
\begin{lemma}[Ridge $\hat\bbeta_{t,\lambda}$ Covariance]
  \label{lemma:ridge_beta_cov}For $\lambda > 0$, the covariance matrix $\Cov[\hat\bbeta_{t,\lambda}|\bX]$ is given by
  \begin{align}\label{eq:covar_ridge}
    \sigma^2 \left[(w^2+\tilde w^2)\sum_{k=1}^{t} \tilde w^{2(k-1)} \bM^{2k-1} \bA_{\lambda}^{2k} + \tilde w^{2t} \bM^{2t+1} \bA_{\lambda}^{2t+2}\right]
  \end{align}
\end{lemma}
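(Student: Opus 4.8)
\textbf{Proof plan for Lemma~\ref{lemma:ridge_beta_cov}.}
The plan is to unroll the recursion defining $\hat\bbeta_{t,\lambda}$ and express it as a linear functional of the independent noise vectors $\boldsymbol{\varepsilon}_1,\tilde{\boldsymbol{\varepsilon}}_1,\ldots,\boldsymbol{\varepsilon}_t,\tilde{\boldsymbol{\varepsilon}}_t$ (together with the deterministic signal part), so that the covariance can be read off term by term. First I would write the update from Algorithm~\ref{algo:method_ridge} in resolvent notation: using $M=\bX^\top\bX$ and $A_\lambda=(M+n\lambda I)^{-1}$, the definitions $\by_t=\bX\bbeta+\boldsymbol{\varepsilon}_t$ and $\tilde\by_{t,\lambda}=\bX\hat\bbeta_{t-1,\lambda}+\tilde{\boldsymbol{\varepsilon}}_t$ give
\[
  \hat\bbeta_{t,\lambda} = A_\lambda \bX^\top\big(w(\bX\bbeta+\boldsymbol{\varepsilon}_t)+\tilde w(\bX\hat\bbeta_{t-1,\lambda}+\tilde{\boldsymbol{\varepsilon}}_t)\big)
  = \tilde w\, A_\lambda M\,\hat\bbeta_{t-1,\lambda} + A_\lambda\bX^\top\big(w\boldsymbol{\varepsilon}_t+\tilde w\tilde{\boldsymbol{\varepsilon}}_t\big) + w\,A_\lambda M\bbeta.
\]
Iterating this affine recursion with contraction operator $\tilde w\,A_\lambda M$ yields
\[
  \hat\bbeta_{t,\lambda} = (\tilde w A_\lambda M)^t\,\hat\bbeta_{0,\lambda} + \sum_{k=1}^{t} (\tilde w A_\lambda M)^{t-k} A_\lambda\bX^\top\big(w\boldsymbol{\varepsilon}_k+\tilde w\tilde{\boldsymbol{\varepsilon}}_k\big) + \Big(\sum_{j=0}^{t-1}(\tilde w A_\lambda M)^j\Big) w A_\lambda M\bbeta,
\]
and substituting $\hat\bbeta_{0,\lambda}=A_\lambda\bX^\top\by = A_\lambda\bX^\top(\bX\bbeta+\boldsymbol{\varepsilon}_0)$ exposes the last noise source $\boldsymbol{\varepsilon}_0$ in the $(\tilde w A_\lambda M)^t$ term.

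Next I would take the conditional covariance given $\bX$. Since $\boldsymbol{\varepsilon}_0,\boldsymbol{\varepsilon}_1,\tilde{\boldsymbol{\varepsilon}}_1,\ldots,\boldsymbol{\varepsilon}_t,\tilde{\boldsymbol{\varepsilon}}_t$ are mutually independent with covariance $\sigma^2 I$ (Assumption~\ref{assn:combined}(ii)), the deterministic signal term drops out and the covariance is the sum of the covariances of the noise contributions. For each $k\in\{1,\ldots,t\}$ the term $(\tilde w A_\lambda M)^{t-k}A_\lambda\bX^\top(w\boldsymbol{\varepsilon}_k+\tilde w\tilde{\boldsymbol{\varepsilon}}_k)$ contributes, using $\bX^\top\bX=M$ and commutativity of $A_\lambda,M$,
\[
  \sigma^2(w^2+\tilde w^2)\,(\tilde w A_\lambda M)^{t-k}A_\lambda M A_\lambda(\tilde w M A_\lambda)^{t-k} = \sigma^2(w^2+\tilde w^2)\,\tilde w^{2(t-k)} M^{2(t-k)+1}A_\lambda^{2(t-k)+2},
\]
and the $\hat\bbeta_{0,\lambda}$ term contributes $\sigma^2(\tilde w A_\lambda M)^t A_\lambda M A_\lambda(\tilde w M A_\lambda)^t=\sigma^2\tilde w^{2t}M^{2t+1}A_\lambda^{2t+2}$. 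Re-indexing $k\mapsto t-k+1$ in the sum turns $\sum_{k=1}^{t}\tilde w^{2(t-k)}M^{2(t-k)+1}A_\lambda^{2(t-k)+2}$ into $\sum_{k=1}^{t}\tilde w^{2(k-1)}M^{2k-1}A_\lambda^{2k}$, which is exactly~\eqref{eq:covar_ridge}.

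The calculation is essentially bookkeeping once the recursion is unrolled; there is no real analytic obstacle at the level of this lemma. The one point requiring care is the treatment of the initialization term: one must verify that $\hat\bbeta_{0,\lambda}$ depends only on $\boldsymbol{\varepsilon}_0$, which is independent of all subsequent $\boldsymbol{\varepsilon}_k,\tilde{\boldsymbol{\varepsilon}}_k$, so that its variance adds cleanly and produces the $\tilde w^{2t}M^{2t+1}A_\lambda^{2t+2}$ remainder term rather than a cross term. (The genuinely hard work — taking $t\to\infty$, where the geometric series $\sum_{k\ge1}\tilde w^{2(k-1)}M^{2k-1}A_\lambda^{2k}$ converges because $\|\tilde w M A_\lambda\|<1$ for $\lambda>0$, and then $n\to\infty$ via the anisotropic Marchenko--Pastur/deterministic-equivalent machinery around $m(z)$ — is deferred to the downstream risk computation, not needed here.)
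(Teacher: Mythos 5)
Your proposal is correct, and the identity you arrive at (including the re-indexing and the treatment of the initialization term) matches \eqref{eq:covar_ridge} exactly; the only caveat you flag—independence of the initial noise from all later $\boldsymbol{\varepsilon}_k,\tilde{\boldsymbol{\varepsilon}}_k$—is indeed the right thing to check and holds under the fresh-data setup of Algorithm~\ref{algo:method_ridge}. Your route differs mildly from the paper's: you unroll the affine recursion $\hat\bbeta_{t,\lambda}=\tilde w A_\lambda M\,\hat\bbeta_{t-1,\lambda}+A_\lambda\bX^\top(w\boldsymbol{\varepsilon}_t+\tilde w\tilde{\boldsymbol{\varepsilon}}_t)+wA_\lambda M\bbeta$ to get an explicit linear-in-noise representation and then read the covariance off from independence, whereas the paper never writes this expansion and instead verifies \eqref{eq:covar_ridge} by induction on $t$, using the one-step covariance recursion $\Cov[\hat\bbeta_{t+1,\lambda}|\bX]=\tilde w^2 A_\lambda M\,\Cov[\hat\bbeta_{t,\lambda}|\bX]\,M A_\lambda+\sigma^2(w^2+\tilde w^2)MA_\lambda^2$. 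The induction is slightly shorter and stays entirely at the level of covariance matrices; your unrolling is a bit more work but makes the independence structure and the origin of the remainder term $\tilde w^{2t}M^{2t+1}A_\lambda^{2t+2}$ transparent, and as a byproduct it also yields the conditional mean, which the paper has to derive separately in Lemma~\ref{lemma:ridge_beta_expec} for the bias computation. Either presentation is acceptable for this lemma.
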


\begin{proof}[Proof of Lemma \ref{lemma:ridge_beta_cov}]
  We will prove \eqref{eq:covar_ridge} by induction. For the base case $(t=0)$, we have $\hat{\bbeta}_{0,\lambda}= \bA_{\lambda} \bX^\top \by$. Hence,
  \begin{align*}
    \Cov[\hat\bbeta_{0,\lambda}|\bX] &= \bA_{\lambda} \bX^\top \Cov[\by|\bX] \bX \bA_{\lambda}\\
    &= \bA_{\lambda} \bX^\top (\sigma^2 \boldsymbol I) \bX \bA_{\lambda} = \sigma^2 \bA_{\lambda}^2 \bM
  \end{align*}
  This proves the base case. Now let's assume \eqref{eq:covar_ridge} holds for some $t$. Then, for iteration $(t+1)$, we have
  \begin{align*}
    \Cov[\hat\bbeta_{t+1,\lambda}|\bX] &= \bA_{\lambda}\bX^\top \Cov[w \by_t + \tilde w \tilde\by_t|\bX] \bX \bA_{\lambda}\\
    &=  \bA_{\lambda}\bX^\top (w^2 \sigma^2I + \tilde w^2 \bX \Cov[\hat\bbeta_{t,\lambda}|\bX]\bX^\top +\sigma^2\tilde w^2I) \bX \bA_{\lambda}\\
    &= \tilde w^2 \bA_{\lambda} \bM \Cov[\hat\bbeta_{t,\lambda}|\bX] \bM \bA_{\lambda} + \sigma^2 (w^2+\tilde w^2) \bM \bA_{\lambda}^2\\
    &= \tilde w^2 \bA_{\lambda} \bM \left(\sigma^2 \left[(w^2+\tilde w^2)\sum_{k=1}^{t} \tilde w^{2(k-1)} \bM^{2k-1} \bA_{\lambda}^{2k} + \tilde w^{2t} \bM^{2t+1} \bA_{\lambda}^{2t+2}\right]\right) \bM \bA_{\lambda} \\
    &\quad \quad + \sigma^2 (w^2+\tilde w^2) \bM \bA_{\lambda}^2\\
    &= \sigma^2 \left[(w^2+\tilde w^2)\sum_{k=1}^{t} \tilde w^{2k} \bM^{2k+1} \bA_{\lambda}^{2k+2} + \tilde w^{2t+2} \bM^{2t+3} \bA_{\lambda}^{2t+4}\right]\\
    &\quad \quad + \sigma^2 (w^2+\tilde w^2) \bM \bA_{\lambda}^2\\
    &= \sigma^2 \left[(w^2+\tilde w^2)\sum_{k=1}^{t+1} \tilde w^{2(k-1)} \bM^{2k-1} \bA_{\lambda}^{2k} + \tilde w^{2t+2} \bM^{2t+3} \bA_{\lambda}^{2t+4}\right]
  \end{align*}
  This completes the proof of \eqref{eq:covar_ridge}.
\end{proof}
\begin{corollary}\label{cor:var_aniso}
  Using \eqref{eq:risk_analytical},
  $V(\hat\bbeta_{t,\lambda};\bbeta)$ is given by
  \begin{align}
    \label{eq:Vbt_ridge_defn}
    V(\bbeta_{t,\lambda};\bbeta) &= \sigma^2 \left[(w^2+\tilde w^2)\sum_{k=1}^{t} \tilde w^{2(k-1)} \Tr[\bB_\lambda^k]  + \tilde w^{2t} \Tr[\bB_\lambda^{t+1}]
    \right]
  \end{align}
  where $\bB_\lambda^k= \bM^{2k-1} \bA_{\lambda}^{2k} \bSigma$.
\end{corollary}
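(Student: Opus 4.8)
The plan is to derive Corollary~\ref{cor:var_aniso} as an immediate consequence of Lemma~\ref{lemma:ridge_beta_cov} and the bias--variance decomposition~\eqref{eq:risk_analytical}; no new probabilistic input is needed. First I would recall that, by the definition of the variance term in~\eqref{eq:risk_analytical},
\[
V(\hat\bbeta_{t,\lambda};\bbeta)=\Tr\big[\Cov(\hat\bbeta_{t,\lambda}\,|\,\bX)\,\bSigma\big].
\]
Substituting the closed form~\eqref{eq:covar_ridge} for $\Cov(\hat\bbeta_{t,\lambda}\,|\,\bX)$ and using linearity of the trace to pull the scalar $\sigma^2$ and the finite sum outside, one obtains
\[
V(\hat\bbeta_{t,\lambda};\bbeta)=\sigma^2\Big[(w^2+\tilde w^2)\sum_{k=1}^{t}\tilde w^{2(k-1)}\Tr\big[M^{2k-1}A_\lambda^{2k}\bSigma\big]+\tilde w^{2t}\Tr\big[M^{2t+1}A_\lambda^{2t+2}\bSigma\big]\Big].
\]

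The remaining step is pure bookkeeping: each matrix appearing inside a trace is recognized as one of the objects $B_\lambda^k:=M^{2k-1}A_\lambda^{2k}\bSigma$. The generic summand is exactly $\Tr[B_\lambda^k]$, and in the trailing term the exponent on $M$ is $2(t+1)-1$ and on $A_\lambda$ is $2(t+1)$, so it equals $\Tr[B_\lambda^{t+1}]$. This reproduces the stated identity verbatim.

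There is essentially no obstacle here: all of the work sits in Lemma~\ref{lemma:ridge_beta_cov}, which was already established by induction. The only points worth flagging for the reader are that $M$ and $A_\lambda$ commute (they are simultaneously diagonalizable, as noted in Subsection~\ref{subsec:proof_notations}), so the power $M^{2k-1}A_\lambda^{2k}$ is unambiguous; that $\bSigma$ in general does \emph{not} commute with $M$, so $B_\lambda^k$ must be read as the $k$-indexed matrix $M^{2k-1}A_\lambda^{2k}\bSigma$ rather than as the $k$-th matrix power of $B_\lambda^1$; and that only the linearity (and, if one prefers to simplify further, the cyclic invariance) of the trace is used, so no ordering issue affects $\Tr[B_\lambda^k]$.
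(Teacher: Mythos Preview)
Your proposal is correct and matches the paper's approach exactly: the corollary is stated in the paper without an explicit proof precisely because it follows immediately from Lemma~\ref{lemma:ridge_beta_cov} by plugging~\eqref{eq:covar_ridge} into the variance term of the decomposition~\eqref{eq:risk_analytical} and using linearity of the trace. Your cautionary remarks about the notation $B_\lambda^k$ (indexed family, not a matrix power) and about commutativity are helpful clarifications but not additional proof content.
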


\noindent Next, note that $\sigma_{\max}(\bM\bA_{\lambda}) < 1$ and therefore $\Cov[\hat\bbeta_{t,\lambda}|\bX]$ converges in matrix norm to the following limit:
\begin{align*}
  \lim_{t\to\infty} \Cov[\hat\bbeta_{t,\lambda}|\bX] &= \sigma^2 (w^2+(1-w)^2)\sum_{k=0}^{\infty} (1-w)^{2k} (\xtx)^{2k+1} \bA_{\lambda}^{2k+2}\\
  &= \sigma^2 (w^2+(1-w)^2) \xtx  \bA_{\lambda}^2 (I-(1-w)^2(\xtx)^2A_{\lambda}^2)^{-1}\\
  &= \sigma^2(w^2+\tilde w^2) \xtx (n\lambda+w\xtx)^{-1}(n\lambda+(2-w)\xtx)^{-1}\\
  &= \frac{\sigma^2 (w^2+(1-w)^2)}{2(1-w)} \left[(w \bX^\top \bX + n\lambda \boldsymbol I)^{-1} - ((2-w) \bX^\top \bX + n\lambda \boldsymbol I)^{-1}\right] \\
  \implies \lim_{t\to \infty} V(\hat\bbeta_{t,\lambda};\bbeta) &= \frac{\sigma^2 (w^2+(1-w)^2)}{2(1-w)} \left(\frac{1}{w} \Tr[\bA_{\lambda/w}\bSigma] - \frac{1}{2-w} \Tr[\bA_{\lambda/(2-w)}\bSigma]\right)
\end{align*}
where the second to last equality follows using the following equality
\begin{align*}
  \frac{x}{(n\lambda + wx)(n\lambda + (2-w)x)} =\frac{1}{2(1-w)} \left(\frac{1}{n\lambda+wx} - \frac{1}{n\lambda+(2-w)x}\right),
\end{align*}
combined with diagonalization of $\xtx$.
Thus, to compute the limiting variance, it is enough to calculate
\begin{align*}
  \lim_{n\to\infty} \Tr[\bA_{-z} \bSigma] = \lim_{n\to \infty} \frac1n \Tr \left[ (n^{-1}\bX^\top \bX-z \boldsymbol I_p)^{-1} \bSigma \right],
\end{align*}
for $z \in \mathbb{C}/\rr^+$.
Using averaged local law~\cite{knowles2017anisotropic}, we obtain
\begin{align*}
  \frac1n \Tr \left[ (n^{-1}\bX^\top \bX-z \boldsymbol I_p)^{-1} \bSigma \right] &\convas
  \frac{-1}{z}\,\gamma \int \frac{x}{mx+1} dH(x),
\end{align*}
where $m(z)$ is as defined in \eqref{defn:mz}. Combining the above display with Corollary~\ref{cor:var_aniso}, we obtain that
\begin{align}\label{eq:var_ridge_m_w_repr}
  \lim_{n\to\infty}\lim_{t\to\infty}V(\hat\bbeta_{t}; \bbeta)=\sigma^2 \frac{\gamma}{\lambda} \frac{(w^2+(1-w)^2)}{2(1-w)} \left[ \int \frac{x}{1+m_1x}dH - \int \frac{x}{1+m_2x}dH \right],
\end{align}
where $m_1$ and $m_2$ are as defined in \eqref{defn:v_lambda}. This completes the proof of the expression of variance for $\lambda>0$.

Next, we turn our attention to the case $\lambda=0$. Here, we have the following expression of covariance.
\begin{lemma}
  For $t \ge 1$, the covariance matrix $\Cov[\hat\bbeta_{t}|\bX]$ is given by
  \begin{align}\label{eq:var_induction}
    \Cov[\hat\bbeta_t|\bX]= \sigma^2 (\bX^\top \bX)^{\dagger} \left[(w^2+(1-w)^2)\sum_{k=1}^{t} (1-w)^{2(k-1)} + (1-w)^{2t}\right],
  \end{align}
  where $\hat\bbeta_t$ is given by~\eqref{eq:define_interpolator}.
\end{lemma}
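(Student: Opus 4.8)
The plan is to prove \eqref{eq:var_induction} by induction on $t$, in the same spirit as Lemma~\ref{lemma:ridge_beta_cov}, but exploiting the pseudoinverse identities that collapse the telescoping structure. Writing $M = \xtx$ and $A_0 = \xtxdag$, I would first record two elementary facts: $A_0 M A_0 = A_0$ (a Moore--Penrose axiom) and, consequently, $A_0 M A_0 M A_0 = A_0$. These hold because $M$ is symmetric positive semidefinite, so $A_0 M = M A_0$ is the orthogonal projector onto $\mathrm{range}(M)$, and $A_0$ is already supported on that subspace.

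For the base case I would take $t = 0$, where $\hat\bbeta_0 = A_0 \bX^\top \by$ is the initialization of Algorithm~\ref{algo:method_ridge} at $\lambda = 0$. Since $\Cov[\by|\bX] = \sigma^2 I$, we get $\Cov[\hat\bbeta_0|\bX] = \sigma^2 A_0 M A_0 = \sigma^2 A_0$, which matches the right-hand side of \eqref{eq:var_induction} at $t = 0$ (empty sum plus $(1-w)^0$); the stated range $t \ge 1$ is then covered by the inductive step. For the step, I would expand $\hat\bbeta_{t+1}$ via \eqref{eq:define_interpolator}: conditional on $\bX$, the pieces $\bX\bbeta$ and $\bX\,\mathbb{E}[\hat\bbeta_t|\bX]$ are deterministic and drop out of the covariance, while the fresh noises ${\boldsymbol{\varepsilon}}_{t+1}, \tilde{\boldsymbol{\varepsilon}}_{t+1}$ are independent of $\hat\bbeta_t$ (which is a function of $\by$ and the noises through iteration $t$ only). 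This yields the recursion
\[
\Cov[\hat\bbeta_{t+1}|\bX] = \big(w^2 + (1-w)^2\big)\sigma^2\, A_0 M A_0 + (1-w)^2\, A_0 M\, \Cov[\hat\bbeta_t|\bX]\, M A_0 .
\]
Plugging in the inductive hypothesis $\Cov[\hat\bbeta_t|\bX] = \sigma^2 c_t A_0$, with $c_t$ the bracketed scalar in \eqref{eq:var_induction}, and applying $A_0 M A_0 M A_0 = A_0$, the sandwich term collapses to $(1-w)^2 \sigma^2 c_t A_0$, leaving $\Cov[\hat\bbeta_{t+1}|\bX] = \sigma^2\big((w^2+(1-w)^2) + (1-w)^2 c_t\big) A_0$. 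A one-line reindexing of the geometric sum identifies this scalar with $c_{t+1}$, closing the induction.

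An alternative I would mention is to read off \eqref{eq:var_induction} from Lemma~\ref{lemma:ridge_beta_cov} by sending $\lambda \downarrow 0$: on each eigenvalue $s$ of $M$, $s^{2k-1}/(s+n\lambda)^{2k} \to s^{2k-1}/s^{2k}$ (interpreted as $0$ when $s = 0$), so $M^{2k-1}A_\lambda^{2k} \to A_0$ and likewise $M^{2t+1}A_\lambda^{2t+2} \to A_0$, and the finite sum converges termwise. Either way there is no real analytic obstacle; the only point requiring care is bookkeeping the pseudoinverse on $\ker(\bX^\top\bX)$, which is precisely why the $\lambda = 0$ case is stated and proved separately rather than as a formal limit of the $\lambda > 0$ formula.
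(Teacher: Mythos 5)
Your proof is correct and is essentially the paper's own argument: the paper proves this by running the same induction as in Lemma~\ref{lemma:ridge_beta_cov} and collapsing powers via $(\bX^\top\bX)^{2k-1}\big((\bX^\top\bX)^{\dagger}\big)^{2k}=(\bX^\top\bX)^{\dagger}$, which is exactly your Moore--Penrose identity $(\bX^\top\bX)^{\dagger}(\bX^\top\bX)(\bX^\top\bX)^{\dagger}=(\bX^\top\bX)^{\dagger}$ used term by term. Collapsing inside each inductive step rather than at the end is only a cosmetic difference, and your $\lambda\downarrow 0$ remark is a reasonable sanity check rather than a distinct route.
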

The proof follows the same steps as proof of Lemma~\ref{lemma:ridge_beta_cov} once we note that $(\bX^\top\bX)^{2k-1} ((\bX^\top\bX)^\dagger)^{2k}= (\bX^\top\bX)^\dagger$ for any $k \ge 1$.

As a consequence, we have
\begin{align}\label{eq:v_ans}
  \lim_{t\to \infty} V(\hat\bbeta_{t};\bbeta) = \sigma^2 \frac{w^2+(1-w)^2}{w(2-w)} \Tr[(\xtx)^\dagger\bSigma]
\end{align}
The limit of the above quantity as $n \to \infty$ is given by \cite[Theorem 2]{hastie2022surprises} and is equal to the variance given by \eqref{eq:interpolator_answer}.

\subsection{Bias}
We use the notation of Subsection \ref{subsec:proof_notations} throughout this subsection.
\label{subsec:bias_proof}
\begin{lemma}[Ridge $\hat\bbeta_{t,\lambda}$ Expectation]
  \label{lemma:ridge_beta_expec}
  For $\lambda > 0$, we have
  \begin{align}\label{eq:ridge_beta_expec}
    \ee[\hat\bbeta_{t,\lambda}|\bX] = (\tilde w \bA_\lambda \bM)^t \bA_\lambda \bM \bbeta + w \sum_{i=0}^{t-1} (\tilde w \bA_\lambda \bM )^i \bA_\lambda \bM \bbeta
  \end{align}
\end{lemma}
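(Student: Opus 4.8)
The plan is to derive a one-step linear recursion for the conditional mean $\bmu_t := \ee[\hat\bbeta_{t,\lambda}\mid \bX]$ and then solve it in closed form, exploiting that every matrix in sight is a function of $M=\bX^\top\bX$ and hence that all the relevant matrices commute.

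First I would take conditional expectations in the defining recursion $\hat\bbeta_{t,\lambda}=A_\lambda\bX^\top\big(w\,\by_t+\tilde w\,\tilde\by_{t,\lambda}\big)$ with $\by_t=\bX\bbeta+\boldsymbol{\varepsilon}_t$ and $\tilde\by_{t,\lambda}=\bX\hat\bbeta_{t-1,\lambda}+\tilde{\boldsymbol{\varepsilon}}_t$. Since $\boldsymbol{\varepsilon}_t$ and $\tilde{\boldsymbol{\varepsilon}}_t$ have mean zero and are independent of $\bX$ and of $\hat\bbeta_{t-1,\lambda}$ (the latter being measurable with respect to $\bX$ and $\{\boldsymbol{\varepsilon},\boldsymbol{\varepsilon}_1,\dots,\boldsymbol{\varepsilon}_{t-1},\tilde{\boldsymbol{\varepsilon}}_1,\dots,\tilde{\boldsymbol{\varepsilon}}_{t-1}\}$ only), the fresh-noise contributions vanish and $\bX^\top\bX=M$, giving $\bmu_t=w\,A_\lambda M\bbeta+\tilde w\,A_\lambda M\,\bmu_{t-1}$ for $t\ge1$. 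For the base case, $\hat\bbeta_{0,\lambda}=A_\lambda\bX^\top\by$ with $\by=\bX\bbeta+\boldsymbol{\varepsilon}$, so $\bmu_0=A_\lambda M\bbeta$.

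Then I would solve the recursion. Writing $N:=\tilde w\,A_\lambda M$, it reads $\bmu_t=N\bmu_{t-1}+w\,A_\lambda M\bbeta$, and since $N$, $A_\lambda$, $M$ all commute, unrolling gives $\bmu_t=N^t\bmu_0+w\,A_\lambda M\bbeta\sum_{i=0}^{t-1}N^i=(\tilde w A_\lambda M)^t A_\lambda M\bbeta+w\sum_{i=0}^{t-1}(\tilde w A_\lambda M)^i A_\lambda M\bbeta$, which is exactly the claimed identity \eqref{eq:ridge_beta_expec}. Equivalently, one can argue by a two-line induction on $t$: assuming the formula at $t$, apply $\tilde w A_\lambda M$ on the left and add $w\,A_\lambda M\bbeta$; the added term is precisely the $i=0$ term of the new sum and the shifted old sum supplies the terms $i=1,\dots,t$, while $\tilde w A_\lambda M$ times the leading term raises its power to $t+1$.

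There is no genuine obstacle here: the argument is entirely a matter of bookkeeping. The two points that deserve a sentence of care are (i) the independence/mean-zero step that eliminates $\boldsymbol{\varepsilon}_t$ and $\tilde{\boldsymbol{\varepsilon}}_t$ at each iteration, and (ii) the reindexing of the geometric-type sum in the inductive step. The essential structural fact that makes the scalar linear-recursion solution transfer verbatim is that $A_\lambda$ and $M$ (hence all powers and products appearing) are simultaneously diagonalizable and commute.
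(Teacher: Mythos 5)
Your proposal is correct and follows essentially the same route as the paper: establish the one-step recursion $\ee[\hat\bbeta_{t,\lambda}\mid\bX]=w\,A_\lambda M\bbeta+\tilde w\,A_\lambda M\,\ee[\hat\bbeta_{t-1,\lambda}\mid\bX]$ from the mean-zero, independent fresh noises and then solve it (the paper does this by the same induction you sketch; your unrolled geometric-sum form is equivalent, and in fact the commutativity of $A_\lambda$ and $M$ is not even needed for that step, only the recursion itself).
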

\begin{proof}
  We prove \eqref{eq:ridge_beta_expec} by induction. Let's start with the base case ($t=0$).
  \begin{align*}
    \ee[\hat\bbeta_{0,\lambda}|\bX] &= (\xtx+n\lambda)^{-1} \bX^\top \ee[\by|\bX]\\
    &= \bA_\lambda \bM \bbeta
  \end{align*}
  This proves the base case. Now assume \eqref{eq:ridge_beta_expec} holds for $t=k$. Let's prove it for $t=k+1$.
  \begin{align*}
    \ee[\hat \bbeta_{k+1,\lambda}|\bX]&= \bA_\lambda \bM (w \bbeta + (1-w) \ee[\hat \bbeta_{k,\lambda}|\bX])\\
  &=w \bA_\lambda \bM \bbeta + \tilde w \bA_\lambda \bM \ee[\hat \bbeta_{k,\lambda}|\bX])\\
  &= w \bA_\lambda \bM \bbeta + \tilde w \bA_\lambda \bM \left[(\tilde w \bA_\lambda \bM)^k \bA_\lambda \bM \bbeta + w \sum_{i=0}^{k-1} (\tilde w \bA_\lambda \bM )^i \bA_\lambda \bM \bbeta \right]\\
  &= (\tilde w \bA_\lambda \bM)^{k+1} \bA_\lambda \bM \bbeta + w \sum_{i=0}^k (\tilde w \bA_\lambda \bM )^i \bA_\lambda \bM \bbeta.
\end{align*}
This completes the proof of Lemma \ref{lemma:ridge_beta_expec}.
\end{proof}
\begin{corollary}
Bias for the Ridge Estimator is given by
\begin{align}
  \label{eq:ridge_beta_bias}
  B(\hat\bbeta_{t,\lambda}; \bbeta) &= \left(\frac{n\lambda}{w}\right)^2  \|\bA_{\lambda/w} (\boldsymbol I- (\tilde w \bA_\lambda \bM)^{t+1}) \bbeta\|_\bSigma^2.\\
  \label{eq:ridge_beta_bias_lim_t}
  \lim_{t\to\infty} B(\hat\bbeta_{t,\lambda}; \bbeta) &= \left(\frac{n\lambda}{w}\right)^2  \|\bA_{\lambda/w} \bbeta\|_\bSigma^2 = \left(\frac{n\lambda}{w}\right)^2  \Tr[ \bbeta\bbeta^\top \bA_{\lambda/w} \bSigma \bA_{\lambda/w}].
\end{align}
\end{corollary}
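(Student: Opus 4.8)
The plan is to start from the closed form for $\ee[\hat\bbeta_{t,\lambda}|\bX]$ in Lemma~\ref{lemma:ridge_beta_expec}, collapse the finite geometric sum, and show that the resulting expression for $\ee[\hat\bbeta_{t,\lambda}|\bX]-\bbeta$ factors through the single resolvent $A_{\lambda/w}$. The structural point that makes everything routine is that $M=\bX^\top\bX$, $A_\lambda=(M+n\lambda I)^{-1}$, $A_{\lambda/w}=(M+(n\lambda/w)I)^{-1}$, and $\tilde w A_\lambda M$ are all rational functions of the one symmetric matrix $M$, hence mutually commuting; so the entire computation can be carried out as if these were scalars indexed by the eigenvalues of $M$ and only re-interpreted as matrix identities at the end. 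Taking squared $\bSigma$-norms then immediately converts the displacement identity into the bias formula.

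Concretely, I would first write $N:=A_\lambda M=(M+n\lambda I)^{-1}M$, whose eigenvalues $s/(s+n\lambda)$ lie in $[0,1)$ for $s\ge 0$, so that $\sigma_{\max}(\tilde w N)\le\tilde w\,\sigma_{\max}(N)<1$ and $I-\tilde w N$ is invertible. The geometric sum in~\eqref{eq:ridge_beta_expec} then telescopes, $w\sum_{i=0}^{t-1}(\tilde w N)^i=w(I-\tilde w N)^{-1}\bigl(I-(\tilde w N)^t\bigr)$, and substituting into~\eqref{eq:ridge_beta_expec} gives $\ee[\hat\bbeta_{t,\lambda}|\bX]=\bigl[w(I-\tilde w N)^{-1}+(\tilde w N)^t\bigl(I-w(I-\tilde w N)^{-1}\bigr)\bigr]N\bbeta$. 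Next I would invoke the three elementary resolvent identities $I-N=n\lambda(M+n\lambda I)^{-1}$, $I-\tilde w N=(M+n\lambda I)^{-1}(wM+n\lambda I)$, and $(n\lambda/w)A_{\lambda/w}=n\lambda(wM+n\lambda I)^{-1}$. Using the first two, the $t$-independent part simplifies to $w(I-\tilde w N)^{-1}N=wM(wM+n\lambda I)^{-1}=I-(n\lambda/w)A_{\lambda/w}$, while $I-w(I-\tilde w N)^{-1}=(\tilde w-\tilde w N)(I-\tilde w N)^{-1}=\tilde w\,n\lambda(wM+n\lambda I)^{-1}$, so the $t$-dependent part collapses to $(\tilde w N)^t N\cdot\tilde w\,n\lambda(wM+n\lambda I)^{-1}=(n\lambda/w)A_{\lambda/w}(\tilde w A_\lambda M)^{t+1}$. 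Combining, $\ee[\hat\bbeta_{t,\lambda}|\bX]-\bbeta=-(n\lambda/w)A_{\lambda/w}\bigl(I-(\tilde w A_\lambda M)^{t+1}\bigr)\bbeta$, and taking $\|\cdot\|_\bSigma^2$ yields~\eqref{eq:ridge_beta_bias}. For~\eqref{eq:ridge_beta_bias_lim_t}, since $\sigma_{\max}(\tilde w A_\lambda M)<1$ we have $(\tilde w A_\lambda M)^{t+1}\to0$ in operator norm, so $B(\hat\bbeta_{t,\lambda};\bbeta)\to(n\lambda/w)^2\|A_{\lambda/w}\bbeta\|_\bSigma^2$; rewriting $\|A_{\lambda/w}\bbeta\|_\bSigma^2=\bbeta^\top A_{\lambda/w}\bSigma A_{\lambda/w}\bbeta=\Tr[\bbeta\bbeta^\top A_{\lambda/w}\bSigma A_{\lambda/w}]$ by symmetry of $A_{\lambda/w}$ and cyclicity of the trace finishes the proof.

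I do not expect a real obstacle: because all matrices in play commute, the only content is bookkeeping, and the sole place where $\lambda>0$ (together with $w<1$) is genuinely used is the strict contraction $\sigma_{\max}(\tilde w A_\lambda M)<1$, which simultaneously legitimizes the geometric-series manipulation and drives the $t\to\infty$ limit. The one spot warranting care is verifying that the cross terms in the bracketed operator cancel exactly to the stated single-resolvent form rather than leaving a residual, which is precisely where I would double-check the three resolvent identities above; everything else is mechanical. (The $\lambda=0$ case behaves differently—$A_0M$ is then the projection onto $\mathrm{range}(\bX^\top)$ with unit eigenvalues—so it is handled separately, as in the $\lambda=0$ computation of Section~\ref{subsec:bias_proof}.)
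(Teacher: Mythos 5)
Your proposal is correct and follows essentially the same route as the paper: starting from Lemma~\ref{lemma:ridge_beta_expec}, collapsing the geometric sum, using commutativity of rational functions of $M$ to reduce $(I-A_\lambda M)(I-\tilde w A_\lambda M)^{-1}$ to $\frac{n\lambda}{w}A_{\lambda/w}$, and invoking the contraction $\sigma_{\max}(\tilde w A_\lambda M)<1$ for the $t\to\infty$ limit. The only difference is cosmetic: you carry out the algebra via explicit resolvent identities, whereas the paper diagonalizes $A_\lambda M$ and $M$ and verifies the corresponding scalar identities.
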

\begin{proof}
From \eqref{eq:ridge_beta_expec}, we have
\begin{align}
  \nonumber\bbeta - \ee[\hat\bbeta_{t,\lambda}|\bX] =\,\, &\bbeta - (\tilde w \bA_\lambda \bM)^t \bA_\lambda \bM \bbeta \\ &- w (I-\tilde w \bA_\lambda \bM)^{-1} (I-(\tilde w \bA_\lambda \bM)^t) \bA_\lambda \bM \bbeta.
\end{align}
Diagonalizing $\bA_\lambda \bM$ and simplifying using the following identity
\begin{align*}
  1-\tilde{w}^t x^{t+1}-w\frac{1-\tilde{w}^t x^t}{1-\tilde{w}x} x= \frac{(1-x)(1-\tilde{w}^{1+t}x^{1+t})}{1-\tilde{w}x},
\end{align*}
tells us that
\begin{align*}
  \bbeta - \ee[\hat\bbeta_t|\bX] = (\boldsymbol I-\bA_\lambda \bM)(\boldsymbol I-\tilde w \bA_\lambda \bM)^{-1}(\boldsymbol I-(\tilde{w}A_\lambda \bM)^{1+t}) \bbeta
\end{align*}
Next, we diagonalize $M$ use the following identity on product of first two matrices.
\begin{align*}
  \left(1-\frac{x}{x+n\lambda}\right)\left(1-\frac{\tilde wx}{x+n\lambda}\right)^{-1} = \frac{n\lambda }{wx + n\lambda} = \frac{n\lambda }{w} \left(x+n\frac{\lambda }{w}\right)^{-1}
\end{align*}
to conclude $(\boldsymbol I-\bA_\lambda \bM)(\boldsymbol I-\tilde w \bA_\lambda \bM)^{-1}= \frac{n\lambda }{w} \bA_{\lambda/w}$. This completes the proof of \eqref{eq:ridge_beta_bias}. \eqref{eq:ridge_beta_bias_lim_t} follows since $\sigma_{\max}(\tilde w \bA_\lambda \bM) < \tilde w < 1$.
\end{proof}

\begin{lemma}
\label{lemma:bach_lem}
Suppose Assumption~\ref{assn:combined} holds. For any deterministic sequence of symmetric matrices $C \in \rr^{p\times p}$ with bounded operator norm and $z \in \mathbb{C} \setminus \mathbb{R}^+$, we have
\begin{align}
  \label{eq:bach_det_equiv}
  z^2 \Tr[\boldsymbol C \boldsymbol Q_n(z) \bSigma \boldsymbol Q_n(z)] \sim \Tr[\boldsymbol C(\boldsymbol I+m(z)\bSigma)^{-2} \bSigma] \frac{1}{1-\frac{1}{n}df_2(1/m(z))}
\end{align}
where $df_2(\kappa)=\Tr[\bSigma^2(\kappa \boldsymbol I+ \bSigma)^{-2}]$ and $\boldsymbol Q_n(z)= (n^{-1} \bM -z\boldsymbol I)^{-1}$.
\end{lemma}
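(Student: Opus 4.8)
The plan is to obtain \eqref{eq:bach_det_equiv}, a \emph{bilinear} deterministic equivalent, by differentiating a \emph{linear} one. Write $Q:=Q_n(z)$ and recall from the anisotropic local law of \cite{knowles2017anisotropic} — the matrix-valued form of the averaged law $\tfrac1n\Tr[Q\bSigma]\convas -z^{-1}\gamma\!\int\!\tfrac{x}{1+m(z)x}\,dH$ used earlier in Appendix~\ref{appdx:main_proofs} — that $Q$ has the deterministic equivalent $\bar Q:=-z^{-1}(I+m(z)\bSigma)^{-1}$, where $m(z)$ is characterized by the finite-$n$ self-consistent equation $m^{-1}+z=\tfrac1n\Tr[\bSigma(I+m\bSigma)^{-1}]$ (whose solution converges to the $m(z)$ of \eqref{defn:mz}). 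Since the right-hand side of \eqref{eq:bach_det_equiv} is $\Tr[C D]$ with $D:=(1-\tfrac1n df_2(1/m))^{-1}(I+m\bSigma)^{-2}\bSigma$ a bounded deterministic matrix, it suffices to show that $z^2 Q\bSigma Q$ has deterministic equivalent $D$.

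The device is that $Q\bSigma Q$ is a derivative of a single resolvent. For real $t$ in a small neighborhood of $0$, set $Q_t:=(n^{-1}M+t\bSigma-zI)^{-1}$, so that $\partial_t Q_t\big|_{t=0}=-Q\bSigma Q$. Because $n^{-1}M+t\bSigma=\bSigma^{1/2}(n^{-1}\bZ^\top\bZ+tI)\bSigma^{1/2}$ is, after a bounded deterministic conjugation, the sample-covariance matrix $\bSigma_t^{1/2}(n^{-1}\bZ^\top\bZ)\bSigma_t^{1/2}$ with the $t$-dependent population covariance $\bSigma_t:=(\bSigma^{-1}-(t/z)I)^{-1}$ (well-defined with bounded spectrum for $|t|$ small, since $z\notin\mathbb R^+$ and \eqref{eqn:eigenvalue_bounded} holds), the anisotropic local law of \cite{knowles2017anisotropic} applies to $Q_t$ uniformly for $t$ in this neighborhood and uniformly over bounded deterministic test matrices. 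Repackaging the resulting deterministic equivalent in terms of $\bSigma$ gives $\bar Q_t=\big(-zI+(t+\rho_t)\bSigma\big)^{-1}$, where $\rho_t$ is the branch near $\rho_0=-zm(z)$ of the scalar fixed-point equation $\rho_t=\big(1+\tfrac1n\Tr[\bSigma\bar Q_t]\big)^{-1}$; at $t=0$ this is consistent with $\bar Q_0=\bar Q$.

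The rest is calculus. Both $t\mapsto\Tr[C Q_t]$ and $t\mapsto\Tr[C\bar Q_t]$ are analytic near $0$ and differ negligibly, uniformly on a small interval, by the local law; a standard Taylor/Cauchy transfer of the derivative (using the resolvent bound $\|Q_t\|\le C$ to control higher derivatives) then gives $\Tr[C\,\partial_t Q_t|_{t=0}]=-\Tr[C Q\bSigma Q]\sim\Tr[C\,\partial_t\bar Q_t|_{t=0}]$. To evaluate $\partial_t\bar Q_t|_{t=0}$, write $\delta_t:=\tfrac1n\Tr[\bSigma\bar Q_t]$ and $K_t:=\tfrac1n\Tr[(\bSigma\bar Q_t)^2]$; differentiating $\bar Q_t$ and the relation $\rho_t=(1+\delta_t)^{-1}$ yields $1+\rho_t'=\big(1-K_t/(1+\delta_t)^2\big)^{-1}$ and hence $\partial_t\bar Q_t|_{t=0}=-\big(1-K_0/(1+\delta_0)^2\big)^{-1}\bar Q\bSigma\bar Q$. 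Finally $\bar Q=-z^{-1}(I+m\bSigma)^{-1}$ together with the self-consistent equation give $1+\delta_0=-(zm)^{-1}$, whence $z^2\bar Q\bSigma\bar Q=(I+m\bSigma)^{-2}\bSigma$ and $K_0/(1+\delta_0)^2=m^2\tfrac1n\Tr[\bSigma^2(I+m\bSigma)^{-2}]=\tfrac1n df_2(1/m)$; multiplying the transferred identity by $-z^2$ is exactly \eqref{eq:bach_det_equiv}. (One may also verify, by differentiating \eqref{defn:mz}, that $1-\tfrac1n df_2(1/m(z))=m(z)^2/m'(z)$, so the correction factor equals $m'(z)/m(z)^2$, which is how $\mathcal V,\mathcal B$ in \eqref{eq:interpolator_answer} arise in the $z\to0$ limit.)

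The main obstacle is the input to the second paragraph: establishing the anisotropic local law for the deformed family $\{Q_t\}$ \emph{uniformly} over a neighborhood of $t=0$ (together with the unique solvability and stability of the scalar fixed-point equation for $\rho_t$), which reduces to checking that $\bSigma_t$ has uniformly bounded spectrum there — this is where $z\notin\mathbb R^+$ and the bounded-eigenvalue assumption \eqref{eqn:eigenvalue_bounded} are essential. The $t=0$ case is exactly \cite{knowles2017anisotropic}; the perturbed, uniform version is routine but must be done with care. If one prefers to avoid a new local law entirely, the same conclusion follows by running the cavity/leave-one-out expansion directly on $Q\bSigma Q$, using the concentration of $x_i^\top A x_i$ about $\Tr[\bSigma A]$ and of off-diagonal forms $x_i^\top A x_j$ about $0$ (valid under \eqref{eqn:Z_moment_bounded}) together with a fluctuation-averaging step; this produces the self-consistent identity $\mathcal T\sim\mathcal T_0+\tfrac1n df_2(1/m)\,\mathcal T$ with $\mathcal T=z^2\Tr[C Q\bSigma Q]$ and $\mathcal T_0=\Tr[C(I+m\bSigma)^{-2}\bSigma]$, whose solution is again the right-hand side of \eqref{eq:bach_det_equiv}, at the cost of heavier error-term bookkeeping.
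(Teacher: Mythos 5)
Your proposal is correct in substance but takes a genuinely different route from the paper. The paper's proof is essentially a citation: it plugs $A=C$, $B=\bSigma$ into the bilinear deterministic-equivalent formula of \cite[Eq.\ (3.9)]{bach2024high}, commutes $\bSigma$ with $(I+m(z)\bSigma)^{-1}$, and resums the correction term into the factor $\bigl(1-\tfrac1n df_2(1/m(z))\bigr)^{-1}$; all probabilistic work is outsourced. You instead re-derive that formula from the linear anisotropic law by the differentiation device $Q_t=(n^{-1}M+t\bSigma-zI)^{-1}$, $\partial_t Q_t|_{t=0}=-Q\bSigma Q$, and your algebra checks out: $\rho_0=-zm(z)$ is consistent with \eqref{defn:mz}, the implicit differentiation gives $1+\rho_0'=\bigl(1-K_0/(1+\delta_0)^2\bigr)^{-1}$, and indeed $K_0/(1+\delta_0)^2=m^2\,\tfrac1n\Tr[\bSigma^2(I+m\bSigma)^{-2}]=\tfrac1n df_2(1/m)$ while $z^2\bar Q\bSigma\bar Q=(I+m\bSigma)^{-2}\bSigma$, reproducing \eqref{eq:bach_det_equiv} exactly (your parenthetical identity $1-\tfrac1n df_2(1/m)\to m^2/m'$ also matches \eqref{eq:mz_dash}). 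What your route buys is a self-contained derivation that makes transparent where the correction factor comes from; what it costs is precisely the technical debt you flag: the local law must hold uniformly over the deformed family $\{Q_t\}$ with a quantitative rate (needed for the Taylor/second-derivative transfer of closeness to closeness of derivatives), and the conjugation trick producing a sample-covariance matrix with population $\bSigma_t=(\bSigma^{-1}-(t/z)I)^{-1}$ only yields a bona fide (positive-definite) covariance for real $t$ and real $z<0$ --- for genuinely complex $z\in\mathbb C\setminus\rr^+$, as the lemma is stated, you would need either an analytic-continuation argument or the leave-one-out derivation you sketch as an alternative. Since the paper only ever uses $z=-\lambda/w<0$, this caveat is harmless for the application, but it should be stated if you want the lemma at its claimed generality.
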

\begin{proof}
By plugging in $A=C$ and $B=\bSigma$ in \cite[Eq (3.9)]{bach2024high} and noticing that $(\boldsymbol I+m(z) \bSigma)$ and $\bSigma$ are simultaneously diagonalizable (and therefore commute), we get
\begin{align*}
  z^2 \Tr[C Q_n(z) \bSigma Q_n(z)] &\sim \, \Tr[C(\boldsymbol I+m(z)\bSigma)^{-2}\bSigma] \\ & \quad + \Tr[C(\boldsymbol I+m(z)\bSigma)^{-2}\bSigma] \cdot \frac{\Tr[\bSigma^2\left(m(z)^{-1} \bSigma +\boldsymbol I\right)^{-2}]}{n-df_2(m(z)^{-1})} \\
  &\sim \Tr[C(\boldsymbol I+m(z)\bSigma)^{-2}\bSigma] \cdot \left(1 + \frac{df_2(m(z)^{-1})}{n-df_2(m(z)^{-1})}\right)\\
  &\sim \Tr[C(\boldsymbol I+m(z)\bSigma)^{-2}\bSigma] \cdot  \frac{1}{1- n^{-1} df_2(m(z)^{-1})}
\end{align*}
This completes the proof of \eqref{eq:bach_det_equiv}.
\end{proof}

We now show that \Eqref{defn:b_lambda} holds. Start by noticing that $n \bA_z = Q_n(-z)$. Then,  \eqref{eq:ridge_beta_bias} combined with Lemma \ref{lemma:bach_lem} tells us that
\begin{align*}
\lim_{t\to\infty} B(\hat\bbeta_{t,\lambda}; \bbeta) & = \left(\frac{n\lambda}{w}\right)^2  \Tr[\bbeta\bbeta^\top \bA_{\lambda/w} \bSigma \bA_{\lambda/w}]\\
&\sim \Tr[\bbeta\bbeta^\top (I+m_1 \bSigma)^{-2}\bSigma] \cdot \frac{1}{1-n^{-1} df_2(m_1^{-1})}
\end{align*}
where $m_1 = m(-\lambda/w)$.
All that is left to
get \eqref{defn:b_lambda} is realizing that
\begin{align}
\lim_{n\to \infty} \Tr[\bbeta\bbeta^\top (\boldsymbol I+m_1 \bSigma)^{-2}\bSigma] &= b_{\star} \int \frac{x}{(1+m_1 x)^2} dG, \nonumber \\
\text{ and } \lim_{n\to \infty} n^{-1} df_2(m_1^{-1}) &= \gamma \int \frac{m_1^2 x^2}{(1+m_1 x)^2} dH.  \label{eq:bias_ans}
\end{align}
This completes the proof of limiting value of bias.

\subsubsection*{Proof of Theorem~\ref{thm:anisotropic_restate}}
The asymptotic variance of $\hat{\bbeta}_{t,\lambda}$ is given by~\eqref{eq:var_ridge_m_w_repr} for general $\bSigma$. The asymptotic bias is given by~\eqref{eq:bias_ans}. This completes the proof of Theorem~\ref{thm:anisotropic_restate}.

\subsubsection*{Proof of Theorem~\ref{thm:isotropic}}
Plugging $H= G= \delta_{\alpha}$ in~\eqref{defn:v_lambda} and~\eqref{defn:b_lambda} yields the asymptotic variance and bias for $\hat{\bbeta}_{t,\lambda}$ when $\bSigma=\boldsymbol I$. The proof of log-convexity of risk follows from the proof of Theorem~\ref{thm:isotropic_mixing}

\subsubsection*{Proof of Theorem~\ref{thm:interpolator}}
The asymptotic variance of $\hat{\bbeta}_t$ is obtained by~\eqref{eq:v_ans}. To obtain the bias, recall that
$\hat\bbeta_{t}= (\bX^\top\bX)^{\dagger} \bX^\top(w \by_t+ (1-w) \tilde \by_t)$. Since $\hat\bbeta_0= (\bX^\top\bX)^{\dagger} \bX^\top \by$, we have $\bE (\hat\bbeta_0|\bX)= (\bX^\top\bX)^{\dagger} \bX^\top \bX \bbeta$. Now, we want to prove by induction that
\begin{align}\label{eq:b_0}
\bE (\hat\bbeta_t|\bX)= (\bX^\top\bX)^{\dagger} \bX^\top \bX \bbeta
\end{align} for all $t$. To this end note that,
\begin{align}\label{eq:b_1}
\bE(\hat\bbeta_{t}|\bX)= (\bX^\top\bX)^{\dagger} \bX^\top \Big(w \bE(\by_t|\bX)+ (1-w) \bE(\tilde \by_t|\bX) \Big).
\end{align}
Since $\by_t= \bX \bbeta+ {\boldsymbol{\varepsilon}}_t$, we have $\bE(\by_t|\bX)= \bX \bbeta$. For the synthetic data, $\tilde \by_t= \bX \hat\bbeta_{t-1}+ \hat {\boldsymbol{\varepsilon}}_t$. Therefore,
$$\bE(\tilde \by_t|\bX)= \bX \bE(\hat \bbeta_{t-1}|\bX)=  \bX  (\bX^\top\bX)^{\dagger} \bX^\top \bX \bbeta$$
by induction hypothesis. Using~\eqref{eq:b_1} we have
\begin{align*}
\bE(\hat\bbeta_{t}|\bX)= (\bX^\top\bX)^{\dagger} \bX^\top \Big(w  \bX \bbeta + (1-w)  \bX  (\bX^\top\bX)^{\dagger} \bX^\top \bX \bbeta \Big)= (\bX^\top\bX)^{\dagger} \bX^\top \bX \bbeta.
\end{align*}
This proves~\eqref{eq:b_0} by induction. This in turn implies using~\eqref{eq:risk_analytical} that for any $n, t$,
\begin{align*}
B(\hat\bbeta_t; \bbeta)= \bbeta^\top P_{\bX} \bSigma P_{\bX} \bbeta,
\end{align*}
where $P_{\bX}= \boldsymbol I-(\bX^\top\bX)^{\dagger}(\bX^\top\bX)$. Using~\cite[Theorem 2]{hastie2022surprises}, we obtain the analytic value of the bias.

Finally, note that the asymptotic risk depends on $w$ only via $c(w)$. Since $c(w)$ is minimized at $1/\varphi$, this completes the proof of Theorem~\ref{thm:interpolator}.

\section{Proofs of remaining results}
\label{appdx:proof_structured}
In this Section, we will show that the expression of generalization error $R(\hat\bbeta_{t,\lambda})$ can be simplified further if the probability measures $\hat G_p$ and $\hat H_p$ defined by~\eqref{defn:hn_gn} weakly converges to the same probability distribution, i.e., $G=H$. In this special case, the generalization error has unique minima w.r.to $w$.

Recall the definitions of $\mv_\lambda$ and $\mb_\lambda$ given by~\eqref{defn:v_lambda} and~\eqref{defn:b_lambda} respectively. We will first rewriting $\mv_\lambda$ and $\mb_\lambda$ in a different form. To this end, differentiate both sides of \eqref{defn:mz} w.r.to $z$ to obtain
\begin{align}
\label{eq:mz_dash}
- \frac{m'(z)}{m^2(z)} + 1 &= \gamma \int \frac{-x^2 m'(z)}{(1+x m(z))^2} dH.
\end{align}
The above equality will be helpful in writing the integrals concisely. Also define $f(z)=m(-z)^{-1}-z$.
We know that $m(z)$ is a Stieltjes transform of a non-negative random variable by Remark \ref{rem:m_is_free_conv}. We will need the following technical Lemma whose proof we defer.
\begin{lemma}\label{lem:f_mu_representation}
There exists some measure $\mu$ on $\rr^+$ with $\abs{f(1)} < \infty$ such that
\begin{align} \label{eq:f_expansion}
  f(z)=a + \int \frac{z}{z+t} \mu(dt).
\end{align}
\end{lemma}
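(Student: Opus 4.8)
The plan is to recognize $f$ as a complete Bernstein function with vanishing linear term and then read off~\eqref{eq:f_expansion} from the standard classification of such functions. The first step is to record, via Remark~\ref{rem:m_is_free_conv}, that $m$ is the (companion) Stieltjes transform of a probability measure $\nu$ supported on $[0,\infty)$, so that $\phi(z):=m(-z)=\int \nu(dx)/(x+z)$ is a nonnegative Stieltjes function, holomorphic on $\mathbb{C}\setminus(-\infty,0]$, with $\phi\not\equiv 0$ since $\phi(z)>0$ for $z>0$. By the reciprocity between Stieltjes and complete Bernstein functions (see, e.g., Thm.~7.3 in Schilling, Song and Vondra\v{c}ek, \emph{Bernstein Functions}), $g:=1/\phi$ is then a complete Bernstein function, and hence admits a representation
\[
  g(z)=a+bz+\int_{(0,\infty)}\frac{z}{z+t}\,\mu(dt),\qquad a,b\ge0,\quad \int_{(0,\infty)}\frac{\mu(dt)}{1+t}<\infty.
\]
Since $f(z)=g(z)-z$, it then remains only to show that the linear coefficient satisfies $b=1$.

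To identify $b$, I would substitute $z\mapsto-z$ in~\eqref{defn:mz} to obtain $f(z)=m(-z)^{-1}-z=\gamma\int x\,(1+m(-z)x)^{-1}\,dH(x)$ for $z>0$. Since $\nu$ is a probability measure, $0<m(-z)\le 1/z\to 0$ as $z\to\infty$, and since $H$ is compactly supported by~\eqref{eqn:eigenvalue_bounded}, dominated convergence gives $f(z)\to\gamma\int x\,dH(x)<\infty$. Therefore $g(z)/z=1+f(z)/z\to 1$, and since $g(z)/z\to b$ for any complete Bernstein function $g$ with a representation as above, we conclude $b=1$. Substituting back yields $f(z)=g(z)-z=a+\int_{(0,\infty)}\frac{z}{z+t}\,\mu(dt)$, which is~\eqref{eq:f_expansion}; moreover $a=f(0^+)\ge 0$ and $\int_{(0,\infty)}\mu(dt)/(1+t)=f(1)-a<\infty$, so in particular $|f(1)|<\infty$.

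The main obstacle is the opening step: justifying that $\phi=m(-\cdot)$ is a genuine Stieltjes function. This rests on the identification of $m$ as the companion Stieltjes transform of the free convolution of $H$ and $MP_\gamma$ recalled in Remark~\ref{rem:m_is_free_conv}; one should track the normalization carefully, noting that the associated measure $\nu$ is a probability measure on $[0,\infty)$ --- possibly with an atom at $0$, which only contributes a harmless $\nu(\{0\})/z$ term to $\phi$ and does not affect the argument. A more self-contained alternative avoids the Stieltjes-function machinery: one checks directly from $f(z)=\gamma\int x\,(1+m(-z)x)^{-1}\,dH(x)$ that $f$ maps $\mathbb{C}^+$ into $\overline{\mathbb{C}^+}$ (using $m(-z)\in\mathbb{C}^-$ for $z\in\mathbb{C}^+$, whence $1+m(-z)x\in\mathbb{C}^-$ and each integrand lies in $\mathbb{C}^+$), that $f>0$ on $(0,\infty)$, and that $f(z)/z\to 0$ as $z\to\infty$; these three facts identify $f$ as a complete Bernstein function with no linear term, which is again~\eqref{eq:f_expansion}. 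In either approach the remaining work is confined to the short limit computations indicated above.
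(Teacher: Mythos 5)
Your proposal is correct and takes essentially the same route as the paper: both use Remark~\ref{rem:m_is_free_conv} to view $m(-\cdot)$ as the Stieltjes transform of a nonnegative (probability) measure, invoke the Stieltjes/complete Bernstein function theory of Schilling--Song--Vondra\v{c}ek to obtain the representation $f(z)+z=a+bz+\int \frac{z}{z+t}\,\mu(dt)$, and then identify $b=1$ by a dominated-convergence limit as $z\to\infty$. The only cosmetic difference is that the paper establishes that $1/(z\,m(-z))$ is a Stieltjes function (via their Theorem 6.2(ii) and Corollary 7.9) and multiplies by $z$, whereas you apply the Stieltjes/complete-Bernstein reciprocity directly to $1/m(-\cdot)$ and evaluate the limit of $f$ through the fixed-point relation \eqref{defn:mz}.
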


Invoking Lemma~\ref{lem:f_mu_representation}, \eqref{defn:v_lambda} tells us that
\begin{align}
\frac{2(1-w)}{w(2-w)}\mathcal V_\lambda &=  \frac{\gamma}{\lambda} \left(\int \frac{x}{1+m_1 x} dH - \int \frac{x}{1+m_2 x} dH \right) \nonumber \\
&= \frac{1}{\lambda} \left( \frac{1}{m_1} - \frac{\lambda}{w} - \frac{1}{m_2} + \frac{\lambda}{2-w}\right) & (\text{By \eqref{defn:mz}})\\
&= \frac{1}{\lambda} \left(
  f\left(\frac{\lambda}{w}\right) - f\left(\frac{\lambda}{2-w}\right)
\right) \label{eq:reff_var}\\
&= \frac{1}{\lambda}\left( \int \frac{\lambda}{\lambda+wt} \mu(dt) - \int \frac{\lambda}{\lambda+(2-w)t} \mu(dt)\right) \nonumber \\
&= \int \frac{2(1-w)t}{(\lambda+wt)(\lambda+(2-w)t)} \mu(dt). \nonumber
\end{align}
Thus, we obtain that for some measure $\mu$ on $\mathbb{R}^+$, we have
\begin{align}
\label{eq:vaR_pu_int}
\implies \mathcal V_\lambda &= \int t \frac{w(2-w)}{(\lambda+wt)(\lambda+(2-w)t)} \mu(dt)
\end{align}
Similarly, we can also simplify the bias. If $H=G$, we have from~\eqref{defn:b_lambda},
\begin{align*}
\mathcal{B}_\lambda & = \left( \int \frac{x}{(1 + m_1 x)^2} dH\right)  \left(1 - \gamma \int \frac{m_1^2 x^2}{(1+m_1 x)^2} dH\right)^{-1}
\end{align*}
We first multiply both sides of \eqref{eq:mz_dash} by $-m_1^2/m'(-\lambda/w)$ to obtain
\begin{align*}
\gamma \int \frac{m_1^2 x^2}{(1+m_1 x)^2} dH &= 1 - \frac{m_1^2}{m'(-\lambda/w)}.
\end{align*}
Next, we decompose the first term of the bias as
\begin{align*}
\frac{x}{(1+m_1x)^2} &= \frac{x}{1+m_1 x} - \frac{m_1 x^2}{(1+m_1 x)^2}\\
\implies \gamma \int  \frac{x}{(1+m_1x)^2} dH &= \gamma \int \frac{x}{1+m_1 x} dH - \gamma \int \frac{m_1 x^2}{(1+m_1 x)^2} dH\\
&= \frac{1}{m_1} - \frac{\lambda}{w} - \left( \frac{1}{m_1}-\frac{m_1}{m'(\lambda/w)}\right)\\
&= \frac{m_1}{m'(\lambda/w)} - \frac{\lambda}{w}
\end{align*}
Combing the two equalities above, we get
\begin{align}
\mb_\lambda &= \gamma^{-1} \left(\frac{m_1}{m'(-\lambda/w)} - \frac{\lambda}{w}\right) \frac{m'(-\lambda/w)}{m_1^2} \nonumber \\
&= \gamma^{-1} \left(\frac{1}{m(\frac{-\lambda}{w})} - \frac{\lambda}{w} \frac{m'(\frac{-\lambda}{w})}{m(\frac{-\lambda}{w})^2}\right) \nonumber \\
&=\gamma^{-1} \left(f(\lambda/w) - \frac{\lambda}{w} f'(\lambda/w)\right)\label{eq:reff_bias}
\end{align}
Let $z=\lambda/w$, we use \eqref{eq:f_expansion} to get
\begin{align}
\gamma \mb_\lambda &= a + \int \frac{z}{z+t} - \frac{zt}{(z+t)^2} \mu(dt) \nonumber\\
&= a + \int \frac{z^2}{(z+t)^2} \mu(dt) \nonumber\\
\label{eq:bias_mu_int} &= a + \int \frac{\lambda^2}{(\lambda+wt)^2} \mu(dt)
\end{align}

Since the sum of log-convex functions are log-convex, this implies that $\mathcal{B}_{\lambda}$ is decreasing and log-convex. Next we show that $c(w) \mv_\lambda$ is log-convex.
By \eqref{eq:vaR_pu_int}, we have
\begin{align*}
c(w) \mv_\lambda &= \int t\gamma \frac{w^2 + (1-w)^2}{(\lambda+wt)(\lambda+(2-w)t)} \mu(dt)
\end{align*}
The log-convexity of the above expression simply follows from the fact that $w^2+(1-w)^2$, $(\lambda+wt)^{-1}$ and $(\lambda + (2-w)t)^{-1}$ are all log-convex in $w\in[0,1]$ for all $t,\lambda \ge 0$ and the fact that sums and products of log-convex functions are log convex.

As long as $\mu \not = \delta_0$, we further have that $c(w)\mv_\lambda$ is strictly log-convex, a fact crucial for uniqueness of the minimizer of the risk.
It can be readily verified from the definition of $f(z)$ that $\mu = \delta_0 \implies m(z) = (a-z)^{-1}$, that is $m$ must be a Stieltjes transform of a degenerate random variable. However, recall that by Remark \ref{rem:m_is_free_conv}, $m$ is the Steiltjes transform of a free convolution between $MP_\gamma$ and $H$. Since $MP_\gamma$ is non-degenerate, we must have that $m(z)\ne (a-z)^{-1}$ and hence the variance is strictly convex.

As long as $\mu(\rr^+) > 0$, we further have that $c(w)\mv_\lambda$ is strictly log-convex, a fact crucial for uniqueness of the minimizer of the risk.
It can be readily verified from the definition of $f(z)$ that $\mu \equiv 0 \implies m(z) = (a-z)^{-1}$, that is $m$ must be a Stieltjes transform of a degenerate random variable. However, recall that by Remark \ref{rem:m_is_free_conv}, $m$ is the Steiltjes transform of a free convolution between $MP_\gamma$ and $H$. Since $MP_\gamma$ is non-degenerate, we must have that $m(z)\ne (a-z)^{-1}$ and hence the variance is strictly convex.

Next, we show that $w^{\star}$ is a continuous function of $\lambda$. Define the quantity
\[
\mathcal{R}(w,\lambda) = \lim\limits_{n\rightarrow \infty}\lim\limits_{t\rightarrow \infty} R(\hat\bbeta_{t,\lambda},\hat\bbeta).
\]
Note that, $\mathcal{R}(w,\lambda)$ is continuous in both $w$ and $\lambda$. Define any convergent sequence $\lambda_n \to \tilde \lambda$. Let $w^\star_k$ to be the unique minimizer of  $\mathcal R(w,\lambda_k)$ (unique minimizer because $\mathcal R$ is strictly log-convex is all three results of this section).
Since $w^\star_n$ is a sequence in the compact set $[0,1]$, there exists a convergent subsequence $w^\star_{n_k}$ converging to some limit $\tilde w^\star$. For any $\forall w \in [0,1]$,
\begin{align*}
\mathcal R(w_{n_k}, \lambda_{n_k}) &\le \mathcal R(w, \lambda_{n_k})
\implies \mathcal R(\tilde w, \tilde \lambda) \le \mathcal R(w, \tilde \lambda).
\end{align*}
That is, $\tilde w^\star$ is the minimizer of $\mathcal R(\cdot, \tilde \lambda)$. Since the convergent subsequence $n_k$ we picked was arbitrary, we have shown that every convergent subsequence of $w_k$ converges to $\tilde w^\star$ and thus we must have that $w^\star_n \to \tilde w^\star$. This proves that $w^{\star}$ is a continuous function of $\lambda$.

Next, we propose to show that under $G=H$, the optimal mixing proportion $w^{\star}$ is in $[1/2, 1]$, with $w^{\star}(\lambda) \to \phi^{-1}$ as $\lambda \downarrow 0$ and $w^{\star}(\lambda) \to 1$ as $\lambda \rightarrow \infty$. Recall that both $c(w) \mv_\lambda$ and $\mb_\lambda$ are continuous and log-convex, and $\mb_\lambda$ is also decreasing.
This immediately tells us that the limiting generalization error is log-convex and hence it has a unique minimizer.
\begin{align}\label{eq:mu_v}
\lim_{\lambda \to 0} c(w) \mv_\lambda = \frac{w^2 + (1-w)^2}{w(2-w)} \gamma \int \frac{1}{t}\mu(dt) \quad \text{ and } \quad \lim_{\lambda \to 0} \mb_\lambda = \frac{a}{\gamma}
\end{align}
The minimizer of the of the risk at $\lambda$ is clearly only dependent on $c(w)$, which is minimized at $\phi^{-1}$. Next, as $\lambda \to \infty$, we have $c(w)\mv_\lambda \to 0$. However, since $\mb_\lambda$ is a decreasing function of $w$ and $
\liminf\limits_{\lambda \rightarrow \infty} \mb_\lambda>0$, we must have that $w^{\star} \to 1$. Finally, we need to show that $w^{\star} \ge 1/2$. To this end, write the variance as
\begin{align*}
c(w) \mv_\lambda = (w^2+(1-w)^2) \int \frac{t\gamma}{(\lambda + w t)(\lambda +(2-w)t)} \mu(dt)
\end{align*}
The expression outside of the integral is minimized at $w=1/2$, while the factor inside the integral is a decreasing function of $w$ for $w \in [0,1]$.
This can be seen by calculating the derivative of the integrand
\begin{align}\label{eq:v_half}
\frac{d}{dw} \frac{1}{(\lambda + w t)(\lambda +(2-w)t)} &= \frac{-2t^2(1-w)}{(\lambda + w t)^2(\lambda +(2-w)t)^2}
\end{align}
Since $\mb_\lambda$ is also a decreasing function of $w$, we must have that risk at $w < 1/2$ must be larger than the risk at $1/2$.
This completes the proof that $w^{\star} \ge 1/2$.

\subsubsection*{Proof of Proposition~\ref{thm:anisotropic_structured}}
We obtain the asymptotic bias and variance for the random effects model from~\eqref{eq:reff_bias} and~\eqref{eq:reff_var} respectively. The log-convexity of variance follows from~\eqref{eq:vaR_pu_int}. $\mathcal{B}_{\lambda}$ is log-convex and decreasing using~\eqref{eq:bias_mu_int}. This completes the proof of theorem \ref{thm:anisotropic_structured}.

\subsubsection*{Proof of Proposition~\ref{thm:structured_mixing}}

By Corollary~\ref{cor:rf_equiv}, we obtain that $G=H$. Further, under the assumption $\beta_i  \stackrel{\text{i.i.d.}}{\sim} (0, b_{\star}/p)$, we obtain $\|\bbeta\|^2 \to b_{\star}$ almost surely. Then, by the argument above, we have $w^{\star} \in [1/2, 1]$, with $w^{\star}(\lambda) \to \phi^{-1}$ as $\lambda \downarrow 0$ and $w^{\star}(\lambda) \to 1$ as $\lambda \rightarrow \infty$. Finally, to see that $w^{\star}$ increases with \texttt{SNR} $=b_{\star}/\sigma^2$, recall that the risk is $\mathcal R(w,b_{\star}) = \sigma^2 c(w)\mv_\lambda + b_{\star} \mb_\lambda$.
Note that dividing the risk by $\sigma^2$ does not change does not change $w^{\star}$.
Thus, it is enough to show that $w^{\star}$ increases with $b_{\star}$.
By the implicit function theorem,
\begin{align*}
\partial_{b_{\star}} w^{\star} = - \frac{\partial_{w b_{\star}} \mathcal R}{\partial_{ww} \mathcal R} = - \frac{\partial_{w} \mb_\lambda }{\partial_{ww} \mathcal R}
\end{align*}
Since Bias is a decreasing function of $w$ and risk is a strictly convex function of $w$, must have that $\partial_{b_{\star}} w^{\star} \ge 0$, thus proving $w^{\star}$ is a non decreasing function of $b_{\star}$.  \ref{thm:structured_mixing}.

\subsubsection*{Proof of Theorem~\ref{thm:isotropic_mixing}}
If $\bSigma= \alpha\boldsymbol I$, we have $G=H$, and we obtain the desired conclusion by the argument above.

\subsubsection*{Proof of Proposition~\ref{prop:spike}}
For spike covariance matrix $\bSigma$, we obtain $\hat H_p=  p^{-1} \delta_{1+s}+ (1-p^{-1}) \delta_{1}$, where $\delta_{x}$ is the Dirac measure at the point $x$. Therefore, $\hat H_p \Rightarrow H = \delta_1$. To compute $\hat G_p$, we write the signal $\bbeta$ as $\bbeta=\theta v+ \sqrt{1-\theta^2}v^{\perp}$, where $v^\top v^\perp =0$ and $\|v^\perp\|_2=1$.
This implies $\|\bbeta\|_2 = 1$ and hence $\hat G_p= \theta^2 \, \delta_{1+s} + (1-\theta^2) \delta_{1}$. If $\theta = \theta(n) \rightarrow 0$ as $n \rightarrow \infty$, we obtain $\hat G_p \Rightarrow H= \delta_1$ and the conclusion of Theorem~\ref{thm:structured_mixing} holds. Hence we will restrict ourselves to the case $\theta(n) \rightarrow \theta_\star \neq 0$. Here we have $G= \theta_\star^2 \, \delta_{1+s}+ \left( 1- \theta_\star^2 \right) \delta_1$. Since $H =\delta_1$, the limiting expression of variance is still $\sigma^2 c(w) \mathcal{V}_\lambda$, where $c(w)$ and $\mathcal{V}_\lambda$ are the same as Theorem~\ref{thm:isotropic}.

Turning to the characterization of asymptotic bias, we again use the function $m(z)$ satisfying $m(z)^{-1}+z=\gamma \frac{1}{1+m(z)}$. Defining $m_1=m(-\lambda/w)$, we have
\[
\mathcal{B}_{\lambda}= \left( \theta_\star^2 \frac{1+s}{(1 + m_1 (1+s))^2} + (1-\theta_\star^2) \frac{1}{(1+m_1)^2}\right)  \left(1 - \gamma \frac{m_1^2}{(1+m_1)^2} \right)^{-1}
\]
We know from Theorem~\ref{thm:isotropic} that $\phi_1(m_1)= \frac{1}{(1+m_1)^2- \gamma m^2_1}$ is a decreasing and log-convex function in $w$. From the above display, we obtain that
\begin{equation*}
\mathcal{B}_{\lambda}= \theta_\star^2 (1+s) \underbrace{\frac{(1+m_1)^2}{(1+m_1(1+s))^2}}_{\phi_2(m_1)} \phi_1(m_1)+ (1-\theta_\star^2) \phi_1(m_1).
\end{equation*}
Since $\phi_2(m_1)$ is also decreasing and log-convex as function of $w$, this implies that $\lim_{n\to\infty}\lim_{t\to\infty} B(\hat\bbeta_{t,\lambda};\bbeta)$ is decreasing and log-convex in $w$. Therefore, asymptotic generalization error has a unique minima $w^\star$. To see the properties of $w^\star$, note that by~\eqref{eq:v_half}, we have $w^\star \ge 1/2$. Using~\eqref{eq:mu_v}, we obtain that $w^\star \rightarrow 1/\varphi$ as $\lambda \rightarrow 0+$. Finally similar to the proof of Theorem~\ref{thm:isotropic}, we have $\mv_\lambda \rightarrow 0$ as $\lambda \rightarrow \infty$ and $\liminf\limits_{\lambda \rightarrow \infty} \mathcal{B}_\lambda >0$. Since $\mathcal{B}_\lambda$ is decreasing, we again have $\lim_{\lambda \rightarrow \infty} w^\star(\lambda)=1$. This completes the proof of Proposition~\ref{prop:spike}.

We conclude the section with with the proof of Lemma~\ref{lem:f_mu_representation}.

\subsubsection*{Proof of Lemma~\ref{lem:f_mu_representation}}
We need the following definition.
\begin{defn}[Stieltjes Function ($\stielFn$)]
\label{defn:stielf}
A function $f:\rr^+ \to \rr^+$ is called a Stieltjes function if it can be written as
\begin{align*}
  f(x) &= \frac{a}{x} + b + \int_{\rr^+} \frac{1}{x+t} \,\mu(dt),
\end{align*}
where $a,b \ge 0$ and $\mu$ is a positive measure on $\rr^+$ with $\int_{\rr^+} \frac{1}{1+t}\,\mu(dt) < \infty$.
\end{defn}

Recall, the definition $m(z)= \mathbb{E}(A-z)^{-1}$ for some non-negative random variable $A$. Using the definition above, the map $z \to m(-z)$ is a Stieltjes function. Further $\psi(z)=1/z$ is a Stieltjes function by Definition~\ref{defn:stielf}. Using~\cite[Theorem 6.2(ii) and Corollary 7.9]{schilling2009bernstein}, we obtain that $1/(zm(-z))$ is a Stieltjes function. Note that $(zm(-z))^{-1} = z^{-1} f(z) + 1$. Hence, by Definition~\ref{defn:stielf}, we have
\begin{align*}
f(z)+z=a+bz + \int \frac{z}{z+t} \mu(dt)
\end{align*}
with $\abs{f(1)} <\infty$. It remains to show that $b=1$. This follows from
\begin{align*}
b=  \lim_{z\to \infty} \frac{f(z)}{z} = \lim_{z\to \infty} \ee[z (A+z)^{-1}]^{-1} = 1
\end{align*}
where the last equality follows from dominated convergence theorem.

\section{Dynamic mixing}\label{sec:dynamic_mixing}
In this Section we prove the following claim for min-$\ell_2$-norm interpolator:  Suppose we select the mixing proportion $w_t$ adaptively at each generation to minimize $R(\hat\bbeta_t; \bbeta)$ for any finite sample size $n$. Then $w^\star_t$ satisfies
\begin{align}\label{eq:wt_recursion}
w^\star_t= \frac{1+ w^\star_{t-1}}{2+ w^\star_{t-1}}, \quad w^\star_0=1.
\end{align}
Further, under this setup
\begin{align}\label{eq:dynamic_answer}
\lim_{n\to \infty} \lim_{t\to\infty} R(\hat \bbeta_{t}; \bbeta) = \sigma^2 c(w^\star) \mv + b_{\star} \mb,
\end{align}
where $\mv,\mb$ as in~\eqref{eq:interpolator_answer} and $w^\star= 1/\varphi$. To see the claim, note that the bias of $\hat\bbeta_t$ is independent of mixing proportion, hence it converges to $\mathcal{B}$. Regarding the variance, we will prove the following by induction
\begin{equation}\label{eq:dynamic_induction}
w^\star_t= \argmin_w V(\bbeta_t; \bbeta), \quad \Cov[\bbeta_t|\bX]= \sigma^2 w^\star_t (\bX^\top\bX)^\dagger.
\end{equation}
Suppose $t=1$. For any $0 < w_1 <1$,  we have
\begin{align*}
\Cov[\hat\bbeta_t|\bX]&= (\bX^\top\bX)^{\dagger} \bX^\top( w^2_1 \sigma^2 \boldsymbol I + 2(1-w_1)^2 \sigma^2 \bX(\bX^\top \bX)^\dagger \bX^\top)\bX (\bX^\top\bX)^{\dagger} \\
&= \sigma^2 (w^2_1+ 2(1-w_1)^2) (\bX^\top\bX)^{\dagger}.
\end{align*}
Hence the variance is minimized at $w^\star_1= 2/3= (1+w^\star_0)/(2+w^\star_0)$. Further, ${w^\star_1}^2+ 2(1- w^\star_1)^2= w^\star_1$ proving~\eqref{eq:dynamic_induction} for $t=1$. Now, for any $t >1$ and any mixing proportion $w_t$, we have by induction hypothesis
\begin{align*}
&\Cov[\hat\bbeta_t|\bX] = (\bX^\top\bX)^{\dagger} \bX^\top (w^2_t \sigma^2 \boldsymbol I + (1-w_t)^2 \Cov(\tilde y_{t-1}|\bX)) \bX (\bX^\top\bX)^{\dagger} \nonumber \\
&= w^2_t \sigma^2 (\bX^\top\bX)^{\dagger}+ (1-w_t)^2 (\bX^\top\bX)^{\dagger} \bX^\top \left[\bX \Cov(\hat \bbeta_{t-1}|\bX) \bX^\top+ \sigma^2 \boldsymbol I \right]\bX (\bX^\top\bX)^{\dagger} \\
&= (w^2_t + (1-w_t)^2(1+w^\star_{t-1})) \sigma^2 (\bX^\top\bX)^{\dagger},
\end{align*}
which is minimized at $w^\star_t= \frac{1+ w^\star_{t-1}}{2+ w^\star_{t-1}}$. Further we have
${w^\star_t}^2 + (1-w^\star_t)^2(1+w^\star_{t-1})= w^\star_t$, proving~\eqref{eq:dynamic_induction} by induction principle. Since by~\eqref{eq:wt_recursion}, we have $w^\star_t \rightarrow w^\star$, we immediately have~\eqref{eq:dynamic_answer}.

\section{Random Effects}

\begin{lemma}[Equal weak limits]
\label{lem:same-weak-limits}
Let $\bSigma=\sum_{k=1}^p s_k \boldsymbol v_k \boldsymbol v_k^\top$ with eigenvalues $s_1,\dots,s_p$ and
orthonormal eigenvectors $\boldsymbol v_1,\dots,\boldsymbol v_p$
Define
\[
  \hat H_p(x)=\frac{1}{p}\sum_{k=1}^p \mathbf 1_{s_k\le x},\qquad
  \hat G_p(x)=\frac{1}{\|\bbeta\|_2^2}\sum_{k=1}^p \langle \boldsymbol  v_k,\bbeta\rangle^2\,\mathbf 1_{s_k\le x}.
\]
Assume $\bbeta=(\beta_1,\dots,\beta_p)$ has i.i.d.\ entries with $\ee\beta_i=0$,
$\ee\beta_i^2=\tau^2 \in(0,\infty)$ and $\ee\beta_i^4<\infty$, independent of $\bSigma$.

Then, conditional on $\bSigma$, for every bounded Lipschitz test function $\psi$,
\[
  \int \psi\, d\hat G_p - \int \psi\, d\hat H_p \;\xrightarrow{\mathbb{P}} 0
\]
as $p \rightarrow \infty$. Consequently, if $\hat H_p \Rightarrow H$ weakly, then also $\hat G_p \Rightarrow H$ weakly.
\end{lemma}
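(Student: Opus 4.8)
The plan is to condition throughout on $\bSigma$ (equivalently on the eigenpairs $\{(s_k,v_k)\}_{k=1}^p$), so that the only randomness is in $\bbeta$, and to reduce the claim to a second-moment estimate. Writing $a_k:=\langle v_k,\bbeta\rangle$, the measure $\hat G_p$ places mass $a_k^2/\|\bbeta\|_2^2$ on the atom $s_k$, so $\int\psi\,d\hat G_p=\|\bbeta\|_2^{-2}\sum_{k=1}^p a_k^2\psi(s_k)$, whereas $\int\psi\,d\hat H_p=p^{-1}\sum_{k=1}^p\psi(s_k)$. Because the $v_k$ are orthonormal we have $\sum_i (v_k)_i^2=1$, hence $\ee[a_k^2\mid\bSigma]=\tau^2$ for every $k$; moreover $\|\bbeta\|_2^2=\sum_i\beta_i^2=p\tau^2(1+o(1))$ almost surely by the strong law. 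It therefore suffices to prove that $S_p:=p^{-1}\sum_{k=1}^p(a_k^2-\tau^2)\psi(s_k)\to 0$ in probability (conditionally on $\bSigma$): then $\int\psi\,d\hat G_p=\tfrac{p\tau^2}{\|\bbeta\|_2^2}\bigl(\int\psi\,d\hat H_p+\tau^{-2}S_p\bigr)$, and since $|\int\psi\,d\hat H_p|\le\|\psi\|_\infty$ the right side equals $\int\psi\,d\hat H_p+o_{\mathbb P}(1)$.

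The main computation, and the step I expect to need the most care, is the conditional covariance of the $a_k^2$. Expanding $a_k^2a_l^2=\sum_{i,i',j,j'}(v_k)_i(v_k)_{i'}(v_l)_j(v_l)_{j'}\,\beta_i\beta_{i'}\beta_j\beta_{j'}$ and taking expectation over $\bbeta$, only index patterns in which each coordinate appears an even number of times survive; collecting these and using the orthonormality relations $\sum_i(v_k)_i(v_l)_i=\delta_{kl}$ and $\sum_i(v_k)_i^2=1$ yields the identity
\[
  \Cov\bigl(a_k^2,a_l^2\mid\bSigma\bigr)=2\tau^4\,\delta_{kl}+(\mu_4-3\tau^4)\sum_{i=1}^p(v_k)_i^2(v_l)_i^2,\qquad \mu_4:=\ee\beta_1^4.
\]
The delicate part is the term with coefficient $\mu_4-3\tau^4$, which does not vanish outside the Gaussian case and must genuinely be controlled; this is exactly where the finite fourth moment hypothesis enters.

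With this identity, Chebyshev's inequality closes the first assertion. One has $\var(S_p\mid\bSigma)=p^{-2}\sum_{k,l}\psi(s_k)\psi(s_l)\Cov(a_k^2,a_l^2\mid\bSigma)$, which splits into $2\tau^4p^{-2}\sum_k\psi(s_k)^2\le 2\tau^4\|\psi\|_\infty^2/p$ and $(\mu_4-3\tau^4)p^{-2}\sum_{i=1}^p\bigl(\sum_k\psi(s_k)(v_k)_i^2\bigr)^2$. Since the eigenvector matrix is orthogonal, its rows also have unit norm, i.e.\ $\sum_k(v_k)_i^2=1$, so $\bigl|\sum_k\psi(s_k)(v_k)_i^2\bigr|\le\|\psi\|_\infty$ and the second piece is at most $|\mu_4-3\tau^4|\|\psi\|_\infty^2/p$. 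Both bounds are deterministic and $O(1/p)$, hence $\var(S_p\mid\bSigma)\to 0$ uniformly over $\bSigma$ and Chebyshev gives $S_p\xrightarrow{\mathbb P}0$; combined with the reduction above this proves $\int\psi\,d\hat G_p-\int\psi\,d\hat H_p\xrightarrow{\mathbb P}0$ for each fixed bounded Lipschitz $\psi$.

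Finally, for the second (weak limit) assertion, I would pass to a countable convergence-determining family $\{\psi_m\}$ of bounded Lipschitz functions. For each $m$ the previous step gives $\int\psi_m\,d\hat G_p-\int\psi_m\,d\hat H_p\to 0$ in probability, while $\hat H_p\Rightarrow H$ gives $\int\psi_m\,d\hat H_p\to\int\psi_m\,dH$; hence $\int\psi_m\,d\hat G_p\to\int\psi_m\,dH$ in probability for every $m$. A routine diagonal/subsequence argument, equivalently reducing the bounded-Lipschitz metric to this countable family, then upgrades this to $\hat G_p\Rightarrow H$, as claimed. The only genuine obstacle in the whole argument is the covariance bookkeeping in the second paragraph; everything else is an application of the strong law and Chebyshev.
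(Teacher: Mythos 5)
Your proof is correct, and it follows the same broad strategy as the paper---condition on $\bSigma$, test against a fixed bounded Lipschitz $\psi$, and close with a conditional second-moment/Chebyshev bound---but the details differ in a way worth noting. The paper writes $\int\psi\,d\hat G_p$ as the exact ratio $\bigl(\sum_k \psi(s_k)\xi_k^2\bigr)/\bigl(\sum_j\xi_j^2\bigr)$ with $\xi_k=\langle v_k,\bbeta\rangle$, centers numerator and denominator (its $B_p$ and $C_p$), and evaluates $\Var(B_p)$ as $p^{-2}\sum_k\psi(s_k)^2\Var(\xi_k^2)$; that equality tacitly treats the $\xi_k^2$ as uncorrelated, which holds for Gaussian $\bbeta$ but not in general, since $\Cov(\xi_k^2,\xi_l^2\mid\bSigma)=(\mu_4-3\tau^4)\sum_i(v_k)_i^2(v_l)_i^2$ for $k\neq l$, with $\mu_4:=\ee\beta_1^4$. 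Your explicit computation of exactly this cross-covariance, and the bound on its total contribution via the unit-norm rows of the orthogonal eigenvector matrix ($\sum_k(v_k)_i^2=1$, so the off-diagonal piece is at most $|\mu_4-3\tau^4|\,\|\psi\|_\infty^2/p$), is precisely what is needed to make that step airtight for general fourth-moment-bounded $\beta_i$; in this respect your bookkeeping is more careful than the paper's, although the paper's final $O(1/p)$ rate is unaffected. Your other deviation---controlling the normalization through the strong law applied to $\|\bbeta\|_2^2=\sum_i\beta_i^2$ rather than the paper's Chebyshev bound on $C_p$---is also fine; in a triangular-array reading of the problem one would invoke the weak law or Chebyshev instead, but only convergence in probability is needed and your assumptions cover it. The concluding upgrade to weak convergence via a countable convergence-determining family of bounded Lipschitz functions matches the paper's final step.
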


\begin{proof}
Fix any bounded Lipschitz continuous function $\psi:\R\to\R$ such that $\|\psi\|_\infty \le M$. Set $a_k:=\psi(s_k)$ (which is deterministic given $\bSigma$).
Let $\xi_k:=\langle \boldsymbol  v_k,\bbeta\rangle$. Since $\beta_i$'s are i.i.d.\ and independent of $\bSigma$,
orthonormality gives that, conditional on $\bSigma$, the variables $\xi_k$ identically distributed and mutually uncorrelated with
$\ee\xi_k=0$, $\ee\xi_k^2=\tau^2$, $\ee\xi_k^4<\infty$. Then
\[
  \int \psi\, d\hat G_p=\frac{\sum_{k=1}^p a_k\,\xi_k^2}{\sum_{j=1}^p \xi_j^2},
  \qquad
  \int \psi\, d\hat H_p=\frac{1}{p}\sum_{k=1}^p a_k=:A_p.
\]
Define centered averages
\[
  B_p:=\frac{1}{p}\sum_{k=1}^p a_k(\xi_k^2-\tau^2),\qquad
  C_p:=\frac{1}{p}\sum_{j=1}^p (\xi_j^2-\tau^2).
\]
Then
\[
  \frac{\sum_k a_k \xi_k^2}{\sum_j \xi_j^2}
  =\frac{p(\tau^2 A_p + B_p)}{p(\tau^2+C_p)}
  = A_p \;+\; \frac{B_p - A_p C_p}{\tau^2 + C_p}.
\]
Since $\psi$ is bounded, $|a_k|\le M$. Using $\ee(\xi_1^2)=\tau^2$ and $\ee(\xi_1^4)<\infty$,
\[
  \Var(B_p)=\frac{1}{p^2}\sum_{k=1}^p a_k^2\Var(\xi_k^2)\le \frac{M^2}{p}\Var(\xi_1^2)=O\!\left(\frac{1}{p}\right),
  \qquad
  \Var(C_p)=\frac{1}{p}\Var(\xi_1^2)=O\!\left(\frac{1}{p}\right).
\]
Hence $B_p=O_{\mathbb{P}}(p^{-1/2})$ and $C_p=O_{\mathbb{P}}(p^{-1/2})$ conditional on $\bSigma$.
Also $|A_p|\le M$ and $\tau^2+C_p\overset{\mathbb{P}}{\to}\tau^2>0$. Therefore
\[
  \left|\int \psi\, d\hat G_p - \int \psi\, d\hat H_p\right|
  =\left|\frac{B_p - A_p C_p}{\tau^2 + C_p}\right|
  =O_p\!\left(\frac{1}{\sqrt{p}}\right)\xrightarrow{\mathbb{P}}0,
\]
again conditional on $\bSigma$. By characterizations of weak convergence, we have that $\hat G_p-\hat H_p\Rightarrow 0$ in probability, and thus any weak limit
of $\hat H_p$ is also a weak limit of $\hat G_p$.
\end{proof}

\begin{corollary}\label{cor:rf_equiv}
Let $\hat H_p, H, \hat G_p, G$ and $\bSigma$ be as defined in Lemma \ref{lem:same-weak-limits}.
Assume $\bbeta= \eta_p \pmb\omega$ where $\eta_p \ne 0$ is some arbitrary sequence of real numbers and $\pmb\omega = (\omega_1,\dots,\omega_p)$ has i.i.d.\ entries with $\ee\omega_i=0$,
$\ee\omega_i^2=\tau^2 \in(0,\infty)$ and $\ee\omega_i^4<\infty$, independent of $\bSigma$.

Then, conditional on $\bSigma$, for every bounded Lipschitz test function $\psi$,
\[
  \int \psi\, d\hat G_p - \int \psi\, d\hat H_p \;\xrightarrow{\mathbb{P}} 0
\]
as $p \rightarrow \infty$. Consequently, if $\hat H_p \Rightarrow H$ weakly, then also $\hat G_p \Rightarrow H$ weakly.
\end{corollary}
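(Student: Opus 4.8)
The plan is to deduce Corollary~\ref{cor:rf_equiv} directly from Lemma~\ref{lem:same-weak-limits} by exploiting the fact that $\hat G_p$ is invariant under rescaling of the signal. Concretely, from the definition
\[
  \hat G_p(x)=\frac{1}{\|\bbeta\|_2^2}\sum_{k=1}^p \langle v_k,\bbeta\rangle^2 \,\mathbf 1_{s_k\le x},
\]
replacing $\bbeta$ by $c\,\bbeta$ for any scalar $c\ne 0$ leaves $\hat G_p$ unchanged, since numerator and denominator both scale by $c^2$. Hence, writing $\bbeta=\eta_p\pmb\omega$ with $\eta_p\ne 0$, the measure $\hat G_p$ constructed from $\bbeta$ is identical to the one constructed from $\pmb\omega$.

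The first step is to record that this cancellation is legitimate: $\|\pmb\omega\|_2^2=\sum_{i=1}^p\omega_i^2$ is almost surely positive for large $p$, because $\ee\omega_1^2=\tau^2>0$ forces $\mathbb P(\omega_1=0)<1$, so $\mathbb P(\|\pmb\omega\|_2=0)=\mathbb P(\omega_1=0)^p\to 0$; indeed $p^{-1}\|\pmb\omega\|_2^2\to\tau^2$ by the law of large numbers. The second step is to apply Lemma~\ref{lem:same-weak-limits} with $\pmb\omega$ in place of $\bbeta$: the vector $\pmb\omega$ has i.i.d.\ entries with $\ee\omega_i=0$, $\ee\omega_i^2=\tau^2\in(0,\infty)$, $\ee\omega_i^4<\infty$, and is independent of $\bSigma$, which are exactly the hypotheses of the lemma. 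Therefore, conditional on $\bSigma$, $\int\psi\,d\hat G_p-\int\psi\,d\hat H_p\xrightarrow{\mathbb P}0$ for every bounded Lipschitz $\psi$, and the assertion that $\hat H_p\Rightarrow H$ implies $\hat G_p\Rightarrow H$ follows from the same characterization-of-weak-convergence argument used in the lemma.

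I do not expect a real obstacle here: the corollary is Lemma~\ref{lem:same-weak-limits} after quotienting out the global scale $\eta_p$. The only point worth a sentence is that $\eta_p$ is allowed to depend on $p$ and to be negative — but since it cancels identically in the ratio defining $\hat G_p$, no uniformity in $\eta_p$ and no sign assumption are needed. This is precisely what makes the corollary applicable to the random-effects model, where $\ee\beta_i^2=b_\star/p$ vanishes with $p$ and so does not meet the fixed-variance hypothesis of the lemma verbatim.
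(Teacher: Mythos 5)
Your proof is correct and coincides with the paper's own argument: both cancel the scalar $\eta_p$ in the ratio defining $\hat G_p$, identify $\hat G_p$ with the empirical measure built from $\pmb\omega$, and then apply Lemma~\ref{lem:same-weak-limits} to $\pmb\omega$. Your added remarks (non-vanishing of $\|\pmb\omega\|_2$ for large $p$ and the relevance to the $b_\star/p$-variance random-effects model) are harmless elaborations of the same route.
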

\begin{proof}
Follows from the fact that
\begin{align*}
  \hat G_p(x)=\frac{1}{\|\bbeta\|_2^2}\sum_{k=1}^p \langle \boldsymbol v_k,\bbeta\rangle^2\,\mathbf 1_{s_k\le x} =\frac{1}{\|\pmb\omega\|_2^2}\sum_{k=1}^p \langle \boldsymbol v_k,\pmb\omega\rangle^2\,\mathbf 1_{s_k\le x} =: \tilde G_p(x),
\end{align*}
and applying Lemma \ref{lem:same-weak-limits} on $\tilde G_p$.
\end{proof}

\section{Details of Section~\ref{sec:extra}}

\subsection{Absence of real labels}
In this subsection, we analyze the following algorithm, where we generate real data once, and then compute the following estimator based on synthetic data generated in each iteration.

\begin{algorithm}[!htbp]
\caption{Iterative interpolation with no new real data}
\label{algo:no_more_real}
\begin{algorithmic}[1]
  \STATE \textbf{Input:} Dataset \((\by, \bX)\); mixing proportion \(w \in (0,1)\).
  \STATE \textbf{Initialize:}
  \(\hat\bbeta^{(r)}_{0} \gets (\bX^\top \bX)^{\dagger}\bX^\top \by\).
  \FOR{$t \geq 1$}
  \STATE Generate synthetic responses: \(\tilde \by_{t} \gets \bX \hat\bbeta^{(r)}_{t-1} + \tilde{\boldsymbol{\varepsilon}}_t\).
  \STATE Update estimator:
  \[
    \hat\bbeta^{(r)}_{t} \gets (\bX^\top \bX)^{\dagger} \bX^\top \big( w \by_0 + (1-w)\tilde \by_{t} \big).
  \]
  \ENDFOR
\end{algorithmic}
\end{algorithm}
Next, we prove Theorem~\ref{thm:no_new_label} which analyzes generalization error for $\hat\bbeta^{(r)}_{t}$.
\begin{proof}[Proof of Theorem~\ref{thm:no_new_label}]
First, we show that, similar to the main manuscript, bias $B(\hat\bbeta^{(r)}_t,\bbeta)$ does not depend on $w$. To this end, note for $t\ge 1$
\[\mathbb{E}(\hat\bbeta^{(r)}_1|\bX)= (\bX^\top \bX)^{\dagger} \bX^\top \big(w \bX\bbeta + (1-w) \bX \mathbb{E}(\hat\bbeta^{(r)}_0|\bX)\big)= (\bX^\top \bX)^{\dagger} \bX^\top \bX\bbeta,\]
using the definition of $\hat\bbeta^{(r)}_0$ in Algorithm~\ref{algo:no_more_real}. By induction, $\mathbb{E}(\hat\bbeta^{(r)}_t|\bX)= (\bX^\top \bX)^{\dagger} \bX^\top \big(w \bX\bbeta + (1-w) \bX \mathbb{E}(\hat\bbeta^{(r)}_{t-1}|\bX)\big)= (\bX^\top \bX)^{\dagger} \bX^\top \bX\bbeta$, yielding our conclusion that the bias converges similar to the proof of Theorem~\ref{thm:interpolator}. To compute the variance, we will inductively show that for $t \ge 1$,
\begin{equation}\label{eq:closed_form}
  \hat\bbeta^{(r)}_t= (\bX^\top \bX)^{\dagger} \bX^\top \Bigg(\by_0 + \sum_{j=1}^t (1-w)^{t+1-j} \tilde{\boldsymbol{\varepsilon}}_j\Bigg).
\end{equation}
If $t=1$, we have
\begin{align*}
  \hat\bbeta^{(r)}_1 &=(\bX^\top \bX)^{\dagger} \bX^\top \big( w \by_0 + (1-w)\tilde \by_{1} \big) = (\bX^\top \bX)^{\dagger} \bX^\top \big( w \by_0 + (1-w) \bX \hat\bbeta^{(r)}_0 + (1-w) \tilde{\boldsymbol{\varepsilon}}_1\big)\\
  &=(\bX^\top \bX)^{\dagger} \bX^\top \big( w \by_0 + (1-w) \bX (\bX^\top \bX)^{\dagger} \bX^\top\by_0 + (1-w) \tilde{\boldsymbol{\varepsilon}}_1\big)\\
  &= (\bX^\top \bX)^{\dagger} \bX^\top \big( \by_0 + (1-w) \tilde{\boldsymbol{\varepsilon}}_1\big).
\end{align*}
This proves~\eqref{eq:closed_form} for $t=1$. Suppose the claim is true for general $t$. Then
\begin{align*}
  \hat\bbeta^{(r)}_{t+1} &=(\bX^\top \bX)^{\dagger} \bX^\top \big( w \by_0 + (1-w)\tilde \by_{t+1} \big) \\
  &= (\bX^\top \bX)^{\dagger} \bX^\top \big( w \by_0 + (1-w)\bX \hat\bbeta^{(r)}_{t} + (1-w) \tilde{\boldsymbol{\varepsilon}}_{t+1} \big)\\
  &= (\bX^\top \bX)^{\dagger} \bX^\top \left( w \by_0 + (1-w)\bX (\bX^\top \bX)^{\dagger} \bX^\top \Bigg(\by_0 + \sum_{j=1}^t (1-w)^{t+1-j} \tilde{\boldsymbol{\varepsilon}}_j\Bigg) + (1-w) \tilde{\boldsymbol{\varepsilon}}_{t+1} \right)\\
  &=  (\bX^\top \bX)^{\dagger} \bX^\top \left( \by_0+ \sum_{j=1}^{t+1} (1-w)^{t+2-j} \tilde{\boldsymbol{\varepsilon}}_j \right),
\end{align*}
proving our claim. This further yields that
\begin{align*}
  \lim_{t \rightarrow \infty} \mathrm{Tr}(\Cov(\hat\bbeta^{(r)}_{t}|\bX)\bSigma) &= \sigma^2 \left(\sum_{j=0}^{\infty}(1-w)^2\right)\mathrm{Tr}\left[(\bX^\top \bX)^{\dagger} \bSigma \right]\\
  &=\frac{\sigma^2}{w(2-w)}\mathrm{Tr}\left[(\bX^\top \bX)^{\dagger} \bSigma \right].
\end{align*}
Hence, the risk is minimized at $w^\star=1$ and the quantity $\lim\limits_{n\rightarrow \infty}\lim\limits_{t\rightarrow \infty} V(\hat\bbeta^{(r)}_t,\bbeta)$ can be obtained similar to main manuscript.
\end{proof}

\subsection{Time-varying covariates}

\begin{algorithm}[!htbp]
\caption{Time-varying covariates}
\label{algo:time_vary}
\begin{algorithmic}[1]
  \STATE \textbf{Input:} Dataset \((\by, \bX)\); mixing proportion \(w \in (0,1)\).
  \STATE \textbf{Initialize:}
  \(\hat\bbeta^{(v)}_{0} \gets (\bX^\top \bX)^{\dagger}\bX^\top \by\).
  \FOR{$t \geq 1$}
  \STATE Generate real responses: \( \by_{t} \gets \bX_t \bbeta + {\boldsymbol{\varepsilon}}_t\).
  \STATE Generate synthetic responses: \(\tilde \by_{t} \gets \bX_t \hat\bbeta^{(v)}_{t-1} + \tilde{\boldsymbol{\varepsilon}}_t\).
  \STATE Update estimator:
  \[
    \hat\bbeta^{(v)}_{t} \gets (\bX^\top_t\bX_t)^{\dagger} \bX^\top_t \big(w \by_t + (1-w)  \tilde\by_{t} \big).
  \]
  \ENDFOR
\end{algorithmic}
\end{algorithm}
In this section, we want to consider the setup where $\bX_t$ is generated independently for each $t$, with $\bSigma=\boldsymbol I$. Formally, we consider the estimator $\hat\bbeta^{(v)}_t$ and we want to characterize, for each $t$,
\begin{equation*}
B_t:= \lim\limits_{n \rightarrow \infty} B(\hat\bbeta^{(v)}_t; \bbeta), \qquad V_t:= \lim\limits_{n \rightarrow \infty} V(\hat\bbeta^{(v)}_t; \bbeta).
\end{equation*}
\begin{remark}\label{rmk:limit_order}
Note that in both Theorem~\ref{thm:time_vary} and subsequent Theorem~\ref{thm:pooled}, we analyze the asymptotic risk in the order $\lim\limits_{t \rightarrow \infty}\lim\limits_{n \rightarrow \infty} R(\hat\bbeta^{(v)}_t; \bbeta)$, rather than $\lim\limits_{n \rightarrow \infty}\lim\limits_{t \rightarrow \infty} R(\hat\bbeta^{(v)}_t; \bbeta)$ used in the main manuscript. This ordering is necessary because the covariates are generated independently at each iteration in Theorem~\ref{thm:time_vary} ($\bX_t$) and  Theorem~\ref{thm:pooled} ($\bX_t, \tilde \bX_t$). Consequently, for each fixed iteration $t$, the risk must first be averaged over the randomness of the newly generated covariates as $n\rightarrow \infty$.
\end{remark}

After the proof of our result, we provide two simulations to complement our findings. Figure~\ref{fig:wstar-vs-gamma} plots optimal mixing parameter $w^\star$ as a function of $\gamma$. Our plot shows, unlike Theorem~\ref{thm:interpolator}, $w^\star$ depends on $\gamma$. Further, Figure~\ref{fig:empirical-verification} shows empirical risk matches with our theoretical prediction in moderate sample size.

\begin{thm}\label{thm:time_vary}
Assume $\hat\bbeta^{(v)}_t$ is obtained via Algorithm~\ref{algo:time_vary} and $\bSigma=\boldsymbol I$. Then we have
\begin{equation*}
  \lim_{t \rightarrow \infty} B_t= \frac{b_{\star} \left(1-\frac{1}{\gamma}\right)}{1-\frac{1}{\gamma}(1-w)^2}, \qquad \lim_{t \rightarrow \infty} V_t= \frac{(w^2+ (1-w)^2)\sigma^2 }{(\gamma-1)\Big(1- \frac{1}{\gamma}(1-w)^2\Big)}.
\end{equation*}
Hence, risk is optimized at $w^\star = \frac{1}{2}\left(\sqrt{(c+2\gamma)^2+1+2c} - (c+2\gamma-1)\right)$ where $c = \frac{b_{\star}}{\sigma^2} (\gamma-1)^2\gamma^{-1}$
\end{thm}

\begin{proof}
We begin by characterizing the bias $B(\hat\bbeta^{(v)}_t; \bbeta)$ conditioned on $\bX_{1:t}=(\bX_j)_{1\le j\le t}$:
\begin{align*}
  &B(\hat\bbeta^{(v)}_t; \bbeta)= \|\bbeta- \mathbb{E}[\hat\bbeta^{(v)}_t|\bX_{1:t}]\|^2_2 \nonumber \\
  &= \|\bbeta - (\bX^\top_t\bX_t)^{\dagger} \bX^\top_t \bX_t\big(w \bbeta + (1-w)  \mathbb{E}[\hat\bbeta^{(v)}_{t-1}|\bX_{1:(t-1)}] \big)\|^2_2 \nonumber \\
  &= \|(I- (\bX^\top_t\bX_t)^{\dagger} \bX^\top_t \bX_t)\bbeta+ (1-w) (\bX^\top_t\bX_t)^{\dagger} \bX^\top_t \bX_t(\bbeta- \mathbb{E}[\hat\bbeta^{(v)}_{t-1}|\bX_{1:(t-1)}])\|^2_2 \nonumber \\
  &= \bbeta^\top (I- (\bX^\top_t\bX_t)^{\dagger} \bX^\top_t \bX_t) \bbeta + (1-w)^2 (\bbeta- \mathbb{E}[\hat\bbeta^{(v)}_{t-1}|\bX_{1:(t-1)}])^\top (\bX^\top_t\bX_t)^{\dagger} \bX^\top_t \bX_t (\bbeta- \mathbb{E}[\hat\bbeta^{(v)}_{t-1}|\bX_{1:(t-1)}]) \label{eq:bias_t}
\end{align*}
Therefore, conditioned on $\bX_{1:(t-1)}$, we obtain the high probability limits of the two summands above separately. For the first summand, note that~\cite[(114)]{hastie2022surprises} that
\begin{equation*}
  \bbeta^\top (I- (\bX^\top_t\bX_t)^{\dagger} \bX^\top_t \bX_t) \bbeta  \rightarrow b_{\star} \left(1-\frac{1}{\gamma}\right),
\end{equation*}
with high probability where $b_{\star} = \lim \|\bbeta\|^2$. Similarly, the second summand converges with high probability, to
\begin{equation*}
  \frac{1}{\gamma}(1-w)^2 \|(\bbeta- \mathbb{E}[\hat\bbeta^{(v)}_{t-1}|\bX_{1:(t-1)}])\|^2_2.
\end{equation*}
Hence, we obtain that $B_t= \lim\limits_{n\rightarrow \infty} B(\hat\bbeta^{(v)}_t; \bbeta)$ satisfies the recursion
\begin{equation*}
  B_t= b_{\star} \left(1-\frac{1}{\gamma}\right) +\frac{1}{\gamma}(1-w)^2 B_{t-1}.
\end{equation*}
Since $\frac{1}{\gamma}(1-w)^2 <1 $, the sequence converges to $\frac{b_{\star} \left(1-\frac{1}{\gamma}\right)}{1-\frac{1}{\gamma}(1-w)^2}$ as $t\rightarrow \infty$.

Next, we characterize the variance $V(\hat\bbeta^{(v)}_t; \bbeta)$ conditioned on $\bX_{1:t}=(\bX_j)_{1\le j\le t}$. To this end, note that,
\begin{align*}
  &V(\hat\bbeta^{(v)}_t; \bbeta)= \mathrm{Tr}[\Cov(\hat\bbeta^{(v)}_t|\bX_{1:t})] \\
  &= w^2\sigma^2 \mathrm{Tr}[(\bX^\top_t \bX_t)^{\dagger}]+ (1-w)^2\mathrm{Tr}[(\bX^\top_t \bX_t)^{\dagger} \bX^\top_t \left\{\bX_t \Cov(\hat\bbeta^{(v)}_{t-1}|\bX_{1:(t-1)}) \bX^\top_t +\sigma^2 \boldsymbol I\right\} \bX_t (\bX^\top_t \bX_t)^{\dagger}] \\
  &= (w^2+ (1-w)^2)\sigma^2\mathrm{Tr}[(\bX^\top_t \bX_t)^{\dagger}] + (1-w)^2\mathrm{Tr}[(\bX^\top_t \bX_t)^{\dagger} (\bX^\top_t\bX_t)\Cov(\hat\bbeta^{(v)}_{t-1}|\bX_{1:(t-1)})]
\end{align*}
Recalling that $V_t= \lim\limits_{n\rightarrow \infty}V(\hat\bbeta^{(v)}_t; \bbeta)$, we have, by argument similar to above, that
\begin{equation*}
  V_t= \frac{1}{\gamma-1}(w^2+ (1-w)^2)\sigma^2 + (1-w)^2\frac{1}{\gamma} V_{t-1}.
\end{equation*}
Therefore, we obtain $V_t$ converges to $\frac{(w^2+ (1-w)^2)\sigma^2 }{(\gamma-1)\Big(1- \frac{1}{\gamma}(1-w)^2\Big)}$ as $t \rightarrow \infty$.
\end{proof}

\begin{figure}[!hbp]
\centering
\begin{subfigure}[t]{0.54\textwidth}
  \centering
  \includegraphics[width=\linewidth]{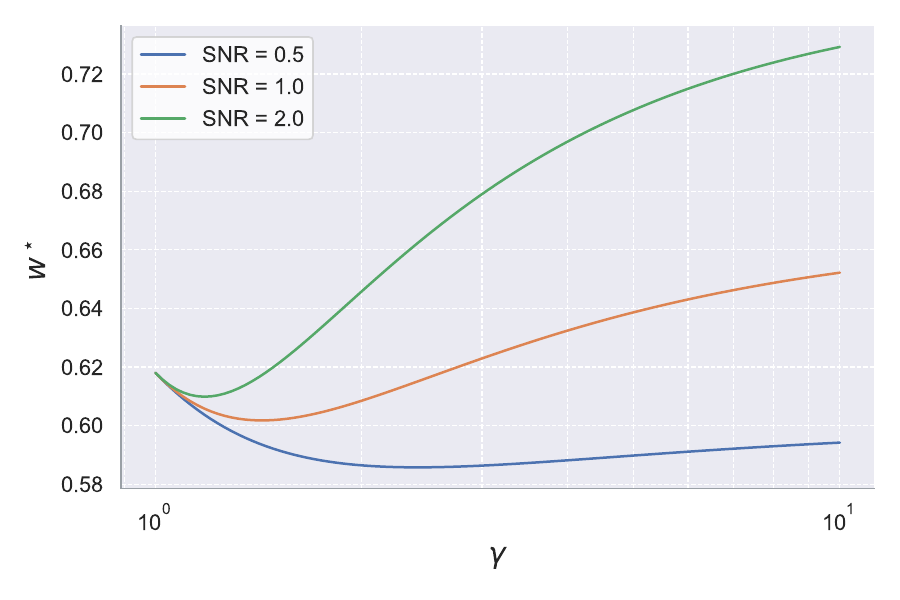}
  \subcaption{}
  \label{fig:wstar-vs-gamma}
\end{subfigure}
\hfill
\begin{subfigure}[t]{0.44\textwidth}
  \centering
  \includegraphics[width=\linewidth]{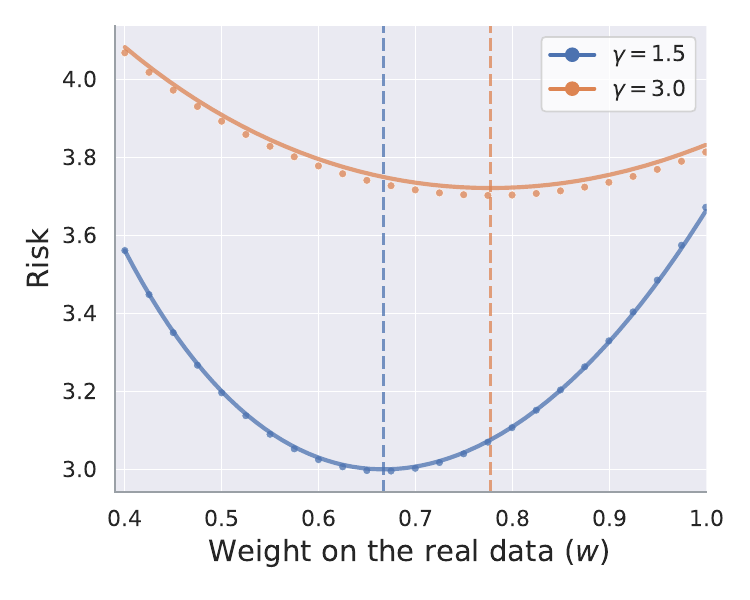}
  \subcaption{}
  \label{fig:empirical-verification}
\end{subfigure}
\caption{Panel $(a)$: We plot theoretical values of optimal mixing parameter $w^\star$ as a function of $\gamma$ as obtained in Theorem \ref{thm:time_vary}. We vary $\texttt{SNR}=\{0.5,1,2\}$. Our plots show that $w^\star$ is non-monotone as function of $\gamma$. Panel $(b)$: We empirically verify the generalization error of Theorem \ref{thm:time_vary}. For $\gamma= \{1.5,3\}$ we plot risk as a function of $w$. The dotted lines correspond to empirical risk and the solid lines represent theoretical risk. The vertical dashed lines correspond to optimal mixing value $w^\star$. The empirical risk is computed with $\texttt{SNR}=5$ and $n=1500$, iteration $t=20$.}
\end{figure}

\subsection{Concatenation of real and synthetic data}
\label{subsec:online_concat}
\begin{algorithm}[!htbp]
\caption{Concatenation in online setup}
\label{algo:pooled}
\begin{algorithmic}[1]
  \STATE \textbf{Input:} Dataset \((\by, \bX)\); mixing proportion \(w \in (0,1)\).
  \STATE \textbf{Initialize:}
  \(\hat\bbeta^{(a)}_{0} \gets (\bX^\top \bX)^{\dagger}\bX^\top \by\).
  \FOR{$t \geq 1$}
  \STATE Generate real responses: \( \by_{t} \gets \bX_t \bbeta + {\boldsymbol{\varepsilon}}_t\).
  \STATE Generate synthetic responses: \(\tilde \by_{t} \gets \tilde\bX_t \hat\bbeta^{(a)}_{t-1} + \tilde{\boldsymbol{\varepsilon}}_t\).
  \STATE Update estimator:
  \[
    \hat\bbeta^{(a)}_{t} \gets (\bX^\top_t\bX_t+ \tilde\bX^\top_t \tilde\bX_t)^{\dagger} \big(\bX^\top_t \by_t + \tilde\bX^\top_t \tilde\by_{t} \big).
  \]
  \ENDFOR
\end{algorithmic}
\end{algorithm}
Here, we consider the setup where $\bX_t$, $\tilde \bX_t$ both are Gaussian data matrix with $\bSigma=\boldsymbol I$. We want to analyze whether model collapse can be prevented by constructing an estimator by pooling the real and synthetic labels as delineated by Algorithm~\ref{algo:pooled}. We have the following result which shows for certain regimes of overparametrization, model collapse might not be preventable. Figure~\ref{fig:concat} indeed shows that the asymptotic generalization error diverges.

\begin{thm}\label{thm:pooled}
Assume $\hat\bbeta^{(a)}_t$ is obtained via Algorithm~\ref{algo:pooled}, $\gamma>2$ and $\bSigma=I$. If $\frac{(\gamma-1)}{\gamma(\gamma-2)}>1$, then
\begin{equation*}
  \lim_{t \rightarrow \infty}\lim_{n \rightarrow \infty} R(\hat\bbeta^{(a)}_t; \bbeta)= \infty
\end{equation*}
\end{thm}

\begin{proof}
We begin by characterizing the bias $B_t= \lim\limits_{n\rightarrow \infty} B(\hat\bbeta^{(a)}_t, \bbeta)$, and $R_t= \lim\limits_{n\rightarrow \infty} R(\hat\bbeta^{(a)}_t, \bbeta)$ where we compute the bias and variance conditioned on $\bX_{1:t}=(\bX_j,\tilde\bX_j)_{1\le j\le t}$. Note that, to define the estimation $\hat\bbeta_t$ in Algorithm~\ref{algo:pooled}, we need $p>2n$, hence we will consider $\gamma>2$. For any finite $n$, $B(\hat\bbeta^{(a)}_t, \bbeta)$ satisfies the decomposition~\cite[(2.13)]{song2024generalization}:
\begin{align}
  B(\hat\bbeta^{(a)}_t;\bbeta) =& \underbrace{\bbeta^\top (\hat\bSigma^{\dagger} \hat\bSigma - \boldsymbol I)^2 \bbeta}_{B_1(\hat\bbeta_t;\bbeta)} +
  \underbrace{\tilde\bbeta^\top_{t-1} (\frac{\tilde\bX^\top_t \tilde\bX_t}{2n})\Big(\hat\bSigma^\dagger_t)^2  (\frac{\tilde\bX^\top_t \tilde\bX_t}{2n})\tilde\bbeta_{t-1}}_{B_2(\hat\bbeta_t;\bbeta)},
\end{align}
where $\tilde\bbeta_{t-1}:= \bbeta - \hat\bbeta_{t-1}$, $\hat\bSigma_t=\frac{1}{2n}(\bX^\top_t \bX_t+ \tilde\bX^\top_t \tilde\bX_t)$. Using~\cite[Theorem 3.1]{song2024generalization}, we obtain with high probability,
\begin{equation}
  B_t \ge \frac{(\gamma-1)}{\gamma(\gamma-2)}B_{t-1},
\end{equation}
Therefore, if $\frac{(\gamma-1)}{\gamma(\gamma-2)}> 1$, we obtain that $R_t \rightarrow \infty$ as $t \rightarrow \infty$. This completes the proof of the Theorem.
\end{proof}

\begin{figure}[htb]
\centering
\includegraphics[width=0.5\textwidth]{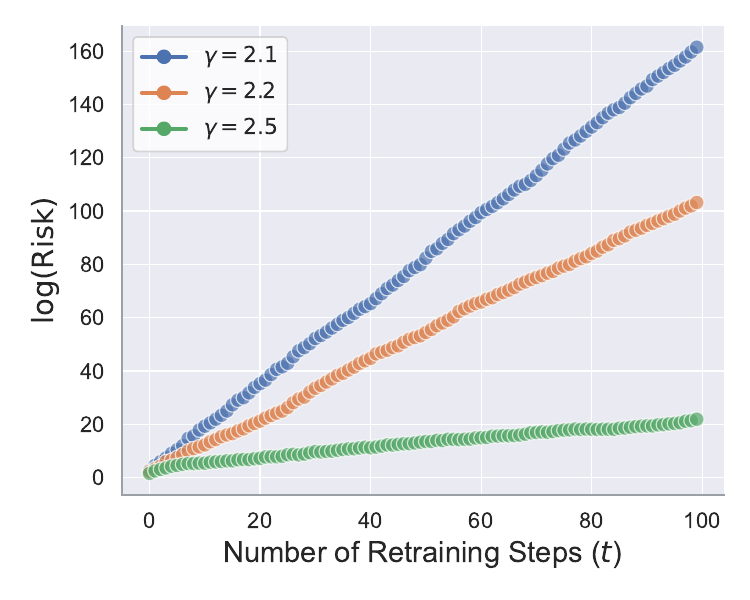}
\caption{We plot generalization error of $\hat\beta^{(a)}_t$ to complement theoretical findings of Theorem \ref{thm:pooled}. The dotted lines correspond to $\log($Risk$)$ with $\texttt{SNR}=1$, and $n=100$ and different values of $\gamma$. The $x$-axis corresponds to number of iterations $t$, which means risk increases exponentially with iterations and resulting in model collapse.
}\label{fig:concat}
\end{figure}

\end{document}